\documentclass{article}

\PassOptionsToPackage{numbers, compress}{natbib}
\usepackage{hyperref}

\usepackage{neurips_2026}
\usepackage{microtype}
\usepackage{graphicx}
\usepackage{subcaption}
\usepackage{booktabs} 
\usepackage{diagbox}
\usepackage{tabularx}
\usepackage{multirow}
\usepackage{booktabs}
\usepackage{wrapfig}
\usepackage{dsfont}
\usepackage{listings}

\usepackage[utf8]{inputenc} 
\usepackage[T1]{fontenc}    
\usepackage{hyperref}       
\usepackage{url}            
\usepackage{booktabs}       
\usepackage{amsfonts}       
\usepackage{nicefrac}       
\usepackage{microtype}      
\usepackage{xcolor}         

\usepackage{amsmath}
\usepackage{amssymb}
\usepackage{mathtools}
\usepackage{amsthm}
\usepackage{enumitem}
\usepackage{algorithm}
\usepackage{algorithmic}
\usepackage{float}
\lstdefinestyle{xmlbox}{
  basicstyle=\ttfamily\footnotesize,
  breaklines=true,
  columns=fullflexible,
  keepspaces=true,
  frame=single,
  showstringspaces=false
}

\usepackage{array}
\newcolumntype{S}[1]{>{\raggedright\arraybackslash\hspace{0pt}}p{#1}} 
\newcommand{\colorednum}[1]{\tcb{#1}}
\newcolumntype{C}{>{\collectcell\colorednum}c<{\endcollectcell}}

\usepackage{placeins}
\usepackage{ulem}

\def\tcb{\textcolor{blue}}

\usepackage{bbm}

\theoremstyle{plain}
\newtheorem{theorem}{Theorem}[section]
\usepackage{thmtools}
\usepackage{thm-restate}

\newtheorem{lemma}[theorem]{Lemma}

\theoremstyle{definition}

\theoremstyle{remark}

\title{Bridging Domain Gaps with Target-Aligned Generation for Offline Reinforcement Learning}

\author{
  Minung Kim \quad\qquad Jeongmo Kim \quad\qquad Gwanwoo Choi \quad\qquad Seungyul Han\thanks{Corresponding Author}\\
  ~\\
  Ulsan National Institute of Science and Technology, UNIST\\
  \texttt{\{minungkim,jmkim22,cgw1999,syhan\}@unist.ac.kr}
}

\begin{document}

\maketitle

\renewcommand{\thefootnote}{\fnsymbol{footnote}}
\footnotetext[1]{Corresponding Author}

\vspace{-0.1in}
\begin{abstract}
\vspace{-0.05in}
Cross-domain offline reinforcement learning aims to adapt a policy from a source domain to a target domain using only pre-collected datasets, where environment dynamics may differ. A key challenge is to leverage source data while reducing distributional mismatch, particularly when the target dataset is extremely limited. To address this, we propose Target-aligned Coverage Expansion (TCE), a framework that decides how source data should be used, either by directly incorporating target-near transitions or by expanding state coverage through target-aligned generation, guided by theoretical analysis. TCE builds on a dual score-based generative model to synthesize target-consistent transitions over an expanded state region. Extensive experiments across diverse cross-domain environments show that TCE consistently outperforms state-of-the-art cross-domain offline RL baselines.
\end{abstract}

\vspace{-0.1in}

\section{Introduction}
\label{sec:intro}
\vspace{-0.1in}

Cross-domain reinforcement learning (cross-domain RL) aims to transfer a policy learned in a source domain to a target domain whose environment dynamics may differ. This setting arises in real-world applications such as heterogeneous robot control, simulation-to-real autonomous driving, and medical decision making \citep{healthcare,autonomousdriving}. To address such scenarios, a variety of RL-based approaches have been proposed \citep{darc,imit1}. However, most existing methods assume online interaction with the source or target domain, enabling additional data collection during training. In practice, this assumption often fails to hold due to cost and safety constraints, and in some cases online interaction is infeasible \citep{offlineReview}. As a result, cross-domain offline RL, which uses only pre-collected datasets from both domains, has emerged as an important direction for cost-efficient learning without online interaction \citep{dara}.

Early work on cross-domain offline RL primarily focused on selecting source-domain data similar to the target domain or filtering transitions via mutual-information criteria \citep{vabound,tightMI}, implicitly assuming that the target dataset is sufficiently large \citep{vgdf,lyu2024cross}. When target data is abundant, single-domain offline RL algorithms such as CQL \citep{cql}, IQL \citep{iql}, and ReBRAC \citep{BRAC} often perform strongly, making additional source data unnecessary and sometimes detrimental. More recent work therefore considers a more challenging and realistic regime where the target dataset is extremely limited, and attempts to reduce the domain gap by leveraging source data that is close to the target domain \citep{wen2024contrastive,lyu2025cross}. Nevertheless, when the gap between the source and target domains is large, directly incorporating source transitions can induce substantial distributional mismatch and degrade performance.

To address this challenge, we introduce Target-aligned Coverage Expansion (TCE), a method that reduces domain mismatch by controlling how source data is used, either through direct incorporation or target-aligned generation. Guided by theoretical analysis, TCE either incorporates target-near source transitions or uses them to expand state coverage via target-like transition generation. TCE builds on a dual score-based generative framework with stochastic differential equations (SDEs), consisting of a \textit{mixture-state score network} trained on a controllable mixture of source and target states and a \textit{target-transition score network} trained solely on target transitions. At inference time, TCE constructs an augmented dataset by sampling mixture states and generating target-consistent next states conditioned on them. By jointly controlling source-data selection and the extent of coverage expansion, TCE limits mismatch introduced by augmentation while still benefiting from additional data. Empirically, TCE achieves substantial improvements over state-of-the-art cross-domain offline RL baselines across diverse cross-domain environments.

\vspace{-0.1in}

\section{{Related Works}}
\label{sec:related}
\vspace{-0.1in}
\textbf{Cross-Domain Reinforcement Learning.} Early cross-domain RL methods allow online data collection and focus on learning domain-invariant representations or performing adversarial domain alignment to facilitate transfer \citep{darc, cds}. Cross-domain imitation learning extends this line by leveraging demonstrations across domains to generalize behavior without explicit rewards \citep{imit1, imit3, choi2023domain}, while domain-adaptive imitation learning further addresses variations in environment dynamics \citep{imit2}. However, many methods assume at least some level of online interaction or sufficiently large target data \citep{gui2023cross, vgdf,lyu2024cross}, limiting their applicability in purely offline settings. Offline cross-domain RL methods address this constraint \citep{wen2024contrastive} but remain challenged when the domain gap is large and target data is limited. To mitigate mismatch in this regime, recent work has explored optimal-transport-based selection \citep{lyu2025cross}, transition editing \citep{xTED}, source-data generation \citep{dmc}, robust test-time adaptation \citep{DROCO}, and model-based target-dynamics learning \citep{mobody}. In contrast, we generate target-aligned transitions from source data and, guided by theory, explicitly decide how such data should be utilized.

\textbf{Offline Reinforcement Learning.} Offline RL algorithms such as Conservative Q-Learning (CQL) \citep{cql} and Implicit Q-Learning (IQL) \citep{iql} have demonstrated strong performance in single-domain settings on static datasets such as the D4RL benchmark \citep{d4rl}. However, handling multi-domain data and domain shifts remains challenging \citep{dara, bosa}. Filtering strategies based on mutual information \citep{vabound, tightMI} and behavior regularization \citep{BRAC} have been used to mitigate distributional shift. Recently, diffusion-based techniques have shown promise in enabling effective data augmentation and model learning in offline RL, improving policy performance on limited datasets \citep{li2024diffstitch, luo2025generative}. In addition, Transformer-based methods have been proposed for offline learning in meta-RL settings \citep{wang2024meta}.

\textbf{Score-Based Models and Diffusion Processes.} Two principal approaches to score-based generative modeling have enabled high-quality sample generation: denoising score matching, which estimates gradients of the data log-density across multiple noise scales \citep{song2019generative, song2020improved}, and diffusion models, which progressively corrupt and then denoise data through a sequence of intermediate steps \citep{ho2020denoising, nichol2021improved, dhariwal2021diffusion}. The stochastic differential equations (SDEs) framework unifies both approaches and enables principled continuous-time sampling \citep{song2020score, karras2022elucidating}. In this paper, we leverage the SDE formalism to learn a mixture state distribution and target dynamics from offline datasets, enabling controlled state-coverage expansion and the generation of target-aligned transitions for cross-domain offline RL.

\vspace{-0.1in}
\section{Background}
\label{sec:background}

\vspace{-0.05in}
\subsection{Cross-Domain Offline RL Setup}
\vspace{-0.05in}

We define a Markov Decision Process (MDP) as $\mathcal{M}=(\mathcal{S},\mathcal{A},P_\mathcal{M},R,\gamma)$, where $\mathcal{S}$ is the state space, $\mathcal{A}$ the action space, $P_\mathcal{M}$ the transition dynamics, $R$ the reward function, and $\gamma$ the discount factor. In the cross-domain setting, we assume access to a source domain $\mathcal{M}_{\mathrm{src}}=(\mathcal{S},\mathcal{A},P_{\mathrm{src}},R,\gamma)$ and a target domain $\mathcal{M}_{\mathrm{tar}}=(\mathcal{S},\mathcal{A},P_{\mathrm{tar}},R,\gamma)$, which share the same state and action spaces as well as the reward function but differ in their transition dynamics, i.e., $P_{\mathrm{src}}\neq P_{\mathrm{tar}}$. In the cross-domain offline setting, the agent cannot interact with either domain and must rely solely on pre-collected transitions $(s_t,a_t,r_t,s_{t+1})$, where $s_t\in\mathcal{S}$ denotes the state, $a_t\in\mathcal{A}$ the action, $r_t=R(s_t,a_t)$ the reward, and $s_{t+1}\sim P(\cdot|s_t,a_t)$ the next state with $P=P_{\mathrm{src}}$ or $P=P_{\mathrm{tar}}$. We denote the datasets collected from the source and target domains as $\mathcal{D}_{\mathrm{src}}$ and $\mathcal{D}_{\mathrm{tar}}$, respectively, under the practical constraint that $|\mathcal{D}_{\mathrm{tar}}|\ll|\mathcal{D}_{\mathrm{src}}|$, making direct policy learning on the target domain challenging.

\vspace{-0.05in}
\subsection{Score-based Generative Models with SDEs}
\vspace{-0.05in}
\label{subsec:sdes}
Generative models aim to learn the data distribution $p_{\text{data}}(x)$ and generate realistic samples. Representative approaches include Generative Adversarial Networks (GANs) \citep{goodfellow2014generative}, Variational Auto-Encoders (VAEs) \citep{kingma2013auto}, and diffusion models \citep{ho2020denoising}. Among these, score-based generative models formulated with stochastic differential equations (SDEs) \citep{song2020score} provide a continuous-time view of diffusion, support flexible noise schedules, and enable efficient sampling and likelihood computation via the probability-flow ODE. Given a clean sample $x^0$, we perturb it with Gaussian noise as $x^{\tau} = x^0 + \sigma(\tau)z$, where $z \sim \mathcal{N}(0,I)$, $\tau \in [0,1]$ denotes the continuous noise level, and $\sigma(\tau)$ is the noise scale. In this paper, we consider a conditional input $c$ and train a score network $q_\theta(x,\tau \mid c)$ to estimate the conditional score $\nabla_x \log p_{\tau}(x \mid c)$ via denoising score matching:
\begin{equation}
\mathcal{L}_{\mathrm{score}}(\theta) = \mathbb{E}_{\tau,(x^0,c)}\!\left[ \zeta(\tau)\,\left\|\,q_\theta(x^{\tau},\tau \mid c) + z/\sigma(\tau)\right\|_2^{2} \right],
\label{eq:baseloss}
\end{equation}
where $\zeta(\tau)$ is a time-dependent weight. At sampling time, starting from $x^1 \sim \mathcal{N}(0,\sigma (1)^2 I)$, samples are generated by solving the discretized reverse SDE using the Predictor–Corrector  sampler with discretized noise levels $\tau^k$ ($1=\tau^K > \cdots > \tau^0=0$):
\begin{equation}
\label{eq:basesampling}
x^{k-1} = x^k + \big[f(x^k,\tau^k) - g(\tau^k)^{2}q_\theta(x^k,\tau^k \mid c)\big]\Delta \tau^k + g(\tau^k)\sqrt{|\Delta \tau^k|}\,\xi^k,\quad \xi^k \sim \mathcal{N}(0,I),
\end{equation}
where $f$ and $g$ denote the drift and diffusion coefficients, and $\Delta \tau^k = \tau^{k-1} - \tau^k$ is the step size. In this work, we additionally apply a Langevin corrector \citep{song2019generative} after each predictor step to further improve sample quality. When the conditional input $c$ is absent, the formulation simplifies to an unconditional score network $q_\theta(x^{\tau}, \tau)$. Implementation details for $\zeta(\tau)$, $f(x,\tau)$, $g(\tau)$, the step size $\Delta \tau$, and the Langevin corrector are provided in Appendix~\ref{secapp:imp}.

\vspace{-0.1in}
\section{Methodology}
\label{sec:method}

\vspace{-0.05in}
\subsection{Beyond Source Data Selection in Cross-Domain Offline RL}
\label{sec:motiv}

\begin{figure}[ht]
    \vspace{-0.05in}
        \centering
    \includegraphics[width=0.85\columnwidth]{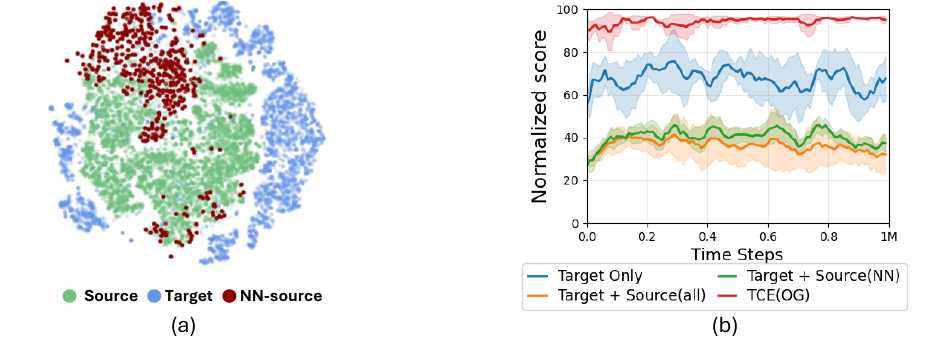}
    \vspace{-0.05in}
    \caption{(a) t-SNE visualization of state transitions $(s_t, s_{t+1})$ for the source, target, and NN-source sets. (b) offline RL performance on target-only, target+NN-source, target+source, and TCE. `TCE (OG)' denotes a variant of TCE that uses only generated transitions without direct source use.}
    \vspace{-0.05in}
    \label{fig:domgap}
\end{figure}
Most existing cross-domain offline RL methods address target-data scarcity by incorporating source-domain transitions that are closest to the target data under various distance metrics \citep{wen2024contrastive,lyu2025cross}, implicitly assuming that nearby source data improve target-domain learning. While such data can be beneficial, direct incorporation can degrade performance when the domain difference is large. To illustrate this, Fig.~\ref{fig:domgap}(a) presents a t-SNE visualization of state transitions $(s_t, s_{t+1})$ from the source and target datasets in the MuJoCo Ant environment with different morphologies, revealing a substantial gap in their transition distributions. Following standard practice, we select nearest-neighbor (NN) source transitions of the target data (NN-source in Fig.~\ref{fig:domgap}(a)) and train IQL on three datasets: target-only, target+NN-source, and target+all source transitions. As shown in Fig.~\ref{fig:domgap}(b), both NN-selected and full-source augmentation perform worse than target-only training, indicating that naive source-data incorporation can hinder learning under large domain gaps.

This failure suggests that the key limitation lies in the direct use of source transitions under large source–target dynamics gaps, not merely in source-data selection. To address this, we introduce \textbf{Target-aligned Coverage Expansion (TCE)}, which goes beyond selecting target-near source data by leveraging source data for state-coverage expansion and generating transitions aligned with target dynamics. Our theoretical analysis decomposes the effective domain gap into two components: one from source–target dynamics differences and the other from generative model error due to limited target data. Based on this perspective, TCE controls how source data is used by determining how close source data should be and how far state coverage should be expanded so that neither component dominates. Accordingly, source data in TCE is used when it lies near the target distribution or contributes to target-aligned coverage expansion. Consistent with this design, Fig.~\ref{fig:domgap}(b) illustrates that different regimes favor different uses of source data, where target-aligned generation is more appropriate under large dynamics gaps, while direct incorporation of selected source data can be effective when the gap is smaller.

\subsection{Theoretical Analysis}
\label{sec:TCE}

In this section, we present TCE, which controls how source data is used through either direct utilization or state-coverage expansion via generation. For an MDP $\mathcal{M}$ and policy $\pi$, let the occupancy measure be $\rho_{\mathcal{M}}^\pi(s,a) := (1-\gamma)\sum_{t=0}^\infty \gamma^t P_{\mathcal{M}}(s \mid s_t,a)\pi(a \mid s_t)$, the value function $V_{\mathcal{M}}^{\pi}(s_t) = \mathbb{E}_{\rho_{\mathcal{M}}^\pi}\!\left[\sum_{l=t}^{\infty}\gamma^l r(s_l,a_l)\right]$, and the average return $\eta_{\mathcal{M}}(\pi) = \mathbb{E}_{\rho_{\mathcal{M}}^\pi}[r(s,a)]$. Our goal is to maximize the target return $\eta_{\mathrm{tar}}(\pi)$ using both $\mathcal{D}_{\mathrm{tar}}$ and $\mathcal{D}_{\mathrm{src}}$.

To capture the two ways TCE leverages source information, let $P_{\mathrm{src}}$ denote the source transition and $\hat{P}_{\mathrm{tar}}$ an approximate target transition induced by generation. We consider their mixture $P_{\mathrm{mix}}=\lambda P_{\mathrm{src}} + (1-\lambda)\hat{P}_{\mathrm{tar}}$, $\lambda \in [0,1]$, and denote the induced MDP by $\mathcal{M}_{\mathrm{mix}}$. Under this construction, the following theoretical gap bound holds, modified from \citet{vgdf}.

\begin{restatable}{theorem}{mixtargap}
\label{thm:mix-tar-gap}
Let $\eta_{\mathrm{mix}}(\pi)$ and $\rho_{\mathrm{mix}}^{\pi}$ denote the expected return and the state-action occupancy measure in the mixture MDP, respectively. Then, the performance gap between the target MDP and the mixture MDP can be bounded in terms of transition dynamics as follows:
{\small
\begin{equation}
\eta_{\mathrm{mix}}(\pi) - \eta_{\mathrm{tar}}(\pi) \leq \frac{2\gamma r_{\max}}{(1-\gamma)^2} \Big( \lambda\,\mathbb{E}_{\rho_{\mathrm{mix}}^{\pi}}\big[ D_{\mathrm{TV}}(P_{\mathrm{src}} \,\|\, P_{\mathrm{tar}}) \big] + (1-\lambda)\,\mathbb{E}_{\rho_{\mathrm{mix}}^{\pi}}\big[ D_{\mathrm{TV}}(\hat P_{\mathrm{tar}} \,\|\, P_{\mathrm{tar}}) \big] \Big),
\end{equation}
}
where $r_{\max}$ denotes the maximum reward and $D_{\mathrm{TV}}(P \,\|\, Q)$ denotes the total variation distance between $P$ and $Q$.
\end{restatable}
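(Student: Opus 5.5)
The plan is to use the standard simulation-lemma (telescoping) argument behind the dynamics-gap bound of \citet{vgdf}, with the target MDP as reference and the mixture MDP as the MDP under whose occupancy the dynamics error is measured. Both MDPs share $\mathcal{S},\mathcal{A},R,\gamma$ and the policy $\pi$, so the two returns differ only through the transition kernels. The first step writes, for each $(s,a)$,
\[
\eta_{\mathrm{mix}}(\pi)-\eta_{\mathrm{tar}}(\pi)=\frac{\gamma}{1-\gamma}\,\mathbb{E}_{(s,a)\sim\rho^{\pi}_{\mathrm{mix}}}\Big[\mathbb{E}_{s'\sim P_{\mathrm{mix}}(\cdot|s,a)}V^{\pi}_{\mathrm{tar}}(s')-\mathbb{E}_{s'\sim P_{\mathrm{tar}}(\cdot|s,a)}V^{\pi}_{\mathrm{tar}}(s')\Big].
\]
To obtain it, define $W_j$ as the return from rolling out the first $j$ steps in $\mathcal{M}_{\mathrm{mix}}$ and the rest in $\mathcal{M}_{\mathrm{tar}}$. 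Then telescope $\sum_j (W_{j+1}-W_j)$ and sum the geometric weights into the normalized occupancy $\rho^\pi_{\mathrm{mix}}$. This is exactly the lemma used by \citet{vgdf} (originally from MBPO/Luo et al.), and I would cite it rather than reprove it, with the factor $1/(1-\gamma)$ matching the paper's normalization of $\eta$.

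The second step bounds the inner bracket. Since $|V^{\pi}_{\mathrm{tar}}|\le r_{\max}/(1-\gamma)$, the inequality $|\mathbb{E}_P f-\mathbb{E}_Q f|\le 2\|f\|_\infty D_{\mathrm{TV}}(P\|Q)$ (with $D_{\mathrm{TV}}=\tfrac12\|\cdot\|_1$) gives
\[
\Big|\mathbb{E}_{P_{\mathrm{mix}}}V^\pi_{\mathrm{tar}}-\mathbb{E}_{P_{\mathrm{tar}}}V^\pi_{\mathrm{tar}}\Big|\le \frac{2r_{\max}}{1-\gamma}\,D_{\mathrm{TV}}\big(P_{\mathrm{mix}}(\cdot|s,a)\,\|\,P_{\mathrm{tar}}(\cdot|s,a)\big).
\]
Combined with the prefactor $\gamma/(1-\gamma)$, this yields the constant $2\gamma r_{\max}/(1-\gamma)^2$.

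The third step splits the mixture. Since $P_{\mathrm{mix}}-P_{\mathrm{tar}}=\lambda(P_{\mathrm{src}}-P_{\mathrm{tar}})+(1-\lambda)(\hat P_{\mathrm{tar}}-P_{\mathrm{tar}})$ pointwise, the triangle inequality for the $L^1$ norm gives
\[
D_{\mathrm{TV}}(P_{\mathrm{mix}}\|P_{\mathrm{tar}})\le\lambda D_{\mathrm{TV}}(P_{\mathrm{src}}\|P_{\mathrm{tar}})+(1-\lambda)D_{\mathrm{TV}}(\hat P_{\mathrm{tar}}\|P_{\mathrm{tar}}).
\]
This is convexity of TV in its first argument. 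Taking expectation under $\rho^\pi_{\mathrm{mix}}$ and using linearity gives the stated bound.

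I expect the main obstacle to be the first step: the constants and the measure in the expectation must come out correctly. The occupancy must be $\rho^\pi_{\mathrm{mix}}$ rather than $\rho^\pi_{\mathrm{tar}}$, so the value function in the bracket has to be $V^\pi_{\mathrm{tar}}$. The extra $1/(1-\gamma)$ factor depends on whether $\eta$ is normalized. To handle this, I would carry out the telescoping explicitly with the paper's definitions of $\rho$ and $\eta$. If the normalization yields a tighter constant, the stated bound still holds because it is an upper bound.
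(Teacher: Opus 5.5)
Your proposal is correct and follows the paper's proof essentially step for step. The paper likewise applies the Telescoping Lemma of \citet{vgdf} with $\rho^{\pi}_{\mathrm{mix}}$ as the occupancy and $V^{\pi}_{\mathrm{tar}}$ inside the bracket, bounds $|V^{\pi}_{\mathrm{tar}}|\le r_{\max}/(1-\gamma)$ to obtain $\frac{2\gamma r_{\max}}{(1-\gamma)^2}\,\mathbb{E}_{\rho^{\pi}_{\mathrm{mix}}}[D_{\mathrm{TV}}(P_{\mathrm{mix}}\|P_{\mathrm{tar}})]$, and splits the mixture by convexity of total variation. Your remark that a different normalization of $\eta$ could only tighten the constant is also right, since the right-hand side is nonnegative.
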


\textbf{Proof)} Proof of Theorem \ref{thm:mix-tar-gap} is provided in Appendix~\ref{secapp:theorem-proof}.

From Theorem~\ref{thm:mix-tar-gap}, the gap can be reduced in two complementary ways: (1) minimizing the dynamics gap $D_{\mathrm{TV}}(P_{\mathrm{src}} \,\|\, P_{\mathrm{tar}})$ by selecting source data close to the target distribution, and (2) reducing the generation error $D_{\mathrm{TV}}(\hat P_{\mathrm{tar}} \,\|\, P_{\mathrm{tar}})$ by improving model fidelity. TCE jointly controls both factors.

\subsection{State Coverage Expansion with Score-Based Generative Models}
\label{subsec:sdes_TCE}

As discussed in the motivation, when the dynamics gap is large, TCE leverages source data to expand target-state coverage by generating target-aligned transitions. To this end, we introduce two score networks: a \textit{mixture-state score network} $q^{\mathrm{mix}}_\theta$ for a mixture of source and target states, and a \textit{target-transition score network} $q^{\mathrm{tran}}_\theta$ for generating target-like transitions conditioned on them. Since $q^{\mathrm{tran}}_\theta$ is trained only on target transitions, its error grows for inputs far from target support. We therefore restrict the mixture to source samples near the target using a nearest-neighbor (NN) criterion.

Specifically, for a source transition $(s,s') \in \mathcal{D}_{\mathrm{src}}$, define the NN distance to the target set as 
$d_{\mathrm{NN}}(s, s') := \min_{(\tilde{s}, \tilde{s}') \in \mathcal{D}_{\mathrm{tar}}} \| s-\tilde{s} \|+\| s'-\tilde{s}' \|.$ 
Let $\lambda_{\mathrm{cov}} \in [0,1]$ be a coverage coefficient, and let $d_{\lambda_{\mathrm{cov}}}$ be the corresponding quantile threshold. The selected subset is
\begin{equation}
\label{equ:NN_src}
\mathcal{D}_{\mathrm{src}}^{\lambda{\mathrm{cov}}}
:= \big\{ (s, s') \in \mathcal{D}_{\mathrm{src}} :
d_{\mathrm{NN}}(s, s') \leq d_{\lambda_{\mathrm{cov}}} \big\}.
\end{equation}
We train $q^{\mathrm{mix}}_\theta$ on $\mathcal{D}_{\mathrm{src}}^{\lambda_{\mathrm{cov}}} \cup \mathcal{D}_{\mathrm{tar}}$, where $\lambda_{\mathrm{cov}}$ controls the trade-off between coverage expansion and generation error. When $\lambda_{\mathrm{cov}}=0$, the model reduces to target-only training; as $\lambda_{\mathrm{cov}}$ increases, coverage expands at the cost of higher generation error.

Building on the score-based framework in Eq.~\eqref{eq:baseloss}, we jointly train the two score networks with
\begin{equation}
\label{eq:joint_loss}
\resizebox{0.94\columnwidth}{!}{$\underbrace{\mathbb{E}_{\tau,\, s_t \sim \mathcal{D}_{\mathrm{src}}^{\lambda_{\mathrm{cov}}} \cup \mathcal{D}_{\mathrm{tar}}}\!\Big[ \zeta(\tau)\big\| q^{\mathrm{mix}}_{\theta}(s_t^{\tau}, \tau) + \tfrac{z}{\sigma(\tau)} \big\|_2^2 \Big]}_{\textbf{\small mixture-state score network loss}} + \underbrace{\mathbb{E}_{\tau,\,(s_t,s_{t+1}) \sim \mathcal{D}_{\mathrm{tar}}}\!\Big[ \zeta(\tau)\big\| q^{\mathrm{tran}}_{\theta}(s_{t+1}^{\tau}, \tau \mid s_t) + \tfrac{z}{\sigma(\tau)} \big\|_2^2 \Big]}_{\textbf{\small target-transition score network loss}}$}
\raisetag{0.8em}
\end{equation}
where $z \sim \mathcal{N}(0, I)$ and $s^{\tau} = \alpha(\tau)s + \sigma(\tau)z$. The mixture-state network models the expanded state distribution, while the transition network generates target-consistent next states. We measure NN distance using $(s,s')$ so that the same transition-level criterion governs both source reuse and the control of dynamics and generation gaps.

\paragraph{Auxiliary Models.} 
To construct full transitions aligned with target dynamics, we train an inverse dynamics model $\mathrm{Inv}_\psi$ and a reward model $\hat{R}_\phi$. The inverse model predicts the action from a state transition $(s,s')$ and is trained on $\mathcal{D}_{\mathrm{tar}}$, ensuring consistency with target dynamics. The reward model predicts the reward and is trained on $\mathcal{D}_{\mathrm{src}} \cup \mathcal{D}_{\mathrm{tar}}$, as the reward function is shared across domains.

After training, we generate transition samples with a two-stage procedure based on Eq.~\eqref{eq:basesampling} as follows.

\textbf{Stage 1 (State Sampling):} Starting from Gaussian noise $s^{K}\!\sim\!\mathcal{N}(0,I)$, we integrate the reverse SDE backward from $k=K$ to $0$ using the mixture-based score network:
\begin{equation}
s^{k-1} = s^{k} + \Big[f(s^{k},\tau^{k}) - g(\tau^{k})^{2} q^{\mathrm{mix}}_\theta(s^{k},\tau^{k}) \Big]\Delta \tau^{k} + g(\tau^{k})\sqrt{|\Delta \tau^{k}|}\xi^{k}, \quad \xi^{k}\sim\mathcal{N}(0,I),
\end{equation}
where $f$ and $g$ denote the drift and diffusion coefficients, and $s^{0}$ corresponds to the generated state $\hat{s}_t$.

\vspace{0.05in}
\textbf{Stage 2 (Transition Sampling):} Conditioned on $\hat{s}_t$, we obtain its next state $\hat{s}_{t+1}$ by solving the same reverse SDE using the target-transition score network:
\begin{equation}
s^{k-1} = s^{k} + \Big[f(s^{k},\tau^{k}) - g(\tau^{k})^{2} q^{\mathrm{tran}}_\theta(s^{k},\tau^{k} \mid \hat{s}_t) \Big]\Delta \tau^{k} + g(\tau^{k})\sqrt{|\Delta \tau^{k}}|\xi^{k}, \quad \xi^{k}\sim\mathcal{N}(0,I),
\end{equation}
again integrating from $k=K$ to $0$ to yield $\hat{s}_{t+1}=s^{0}$. 
The resulting pair $(\hat{s}_t,\hat{s}_{t+1})$ forms a transition that expands the support of the target dataset while remaining consistent with target dynamics. We then form the full transition $(\hat{s}_t,\hat{a}_t,\hat{r}_t,\hat{s}_{t+1})$ by recovering the action and reward using the auxiliary models, $\hat{a}_t \sim \mathrm{Inv}_\psi(\hat{s}_t, \hat{s}_{t+1})$ and $\hat{r}_t \sim \hat{R}_\phi(\hat{s}_t, \hat{a}_t, \hat{s}_{t+1})$, and denote the resulting dataset as $\mathcal{D}_{\mathrm{gen}}^{\lambda_{\mathrm{cov}}}$.

We adopt this two-stage design instead of directly learning a state--action mixture model for next-state prediction, as the target dataset is often limited and provides too few next-state samples per state--action pair for reliable generalization. Consequently, directly learning $\hat{P}_{\mathrm{tar}}(s' \mid s,a)$ can overfit and introduce large errors even under the same state-coverage expansion, destabilizing training. In contrast, by decoupling state coverage expansion and transition modeling, our approach enables more stable generalization from limited target data, which we validate empirically.

\subsection{Training Dataset Construction and Offline Policy Learning}

We next construct a training dataset that reflects the mixture MDP analysis using the generated transitions $\mathcal{D}_{\mathrm{gen}}^{\lambda_{\mathrm{cov}}}$. Recall that in Theorem~\ref{thm:mix-tar-gap}, the mixture MDP is induced by combining source transitions $P_{\mathrm{src}}$ and generated target-like transitions $\hat{P}_{\mathrm{tar}}$. Since the occupancy measure $\rho_{\mathrm{mix}}^\pi$ is intractable, we approximate this mixture by combining real source transitions with $\mathcal{D}_{\mathrm{gen}}^{\lambda_{\mathrm{cov}}}$.

Because directly using source transitions introduces a dynamics gap measured by $D_{\mathrm{TV}}(P_{\mathrm{src}} \,\|\, P_{\mathrm{tar}})$, we control their contribution via a \textit{mixture coefficient} $\lambda_{\mathrm{mix}} \in [0,1]$, selecting target-near source transitions using the same NN criterion as in Eq.~\eqref{equ:NN_src}. We denote the resulting subset by $\mathcal{D}_{\mathrm{src}}^{\lambda_{\mathrm{mix}}}$ and form the final training dataset as
\[
\mathcal{D}_{\mathrm{train}} = \mathcal{D}_{\mathrm{src}}^{\lambda_{\mathrm{mix}}} \cup \mathcal{D}_{\mathrm{gen}}^{\lambda_{\mathrm{cov}}} \cup \mathcal{D}_{\mathrm{tar}}.
\]
Here, $\lambda_{\mathrm{mix}}$ determines whether and how source transitions are used directly, while $\lambda_{\mathrm{cov}}$ controls the extent of generation-based coverage expansion.

\begin{wrapfigure}{l}{0.5\columnwidth}
    \vspace{-1.1em}
    \centering
    \includegraphics[width=0.46\columnwidth]{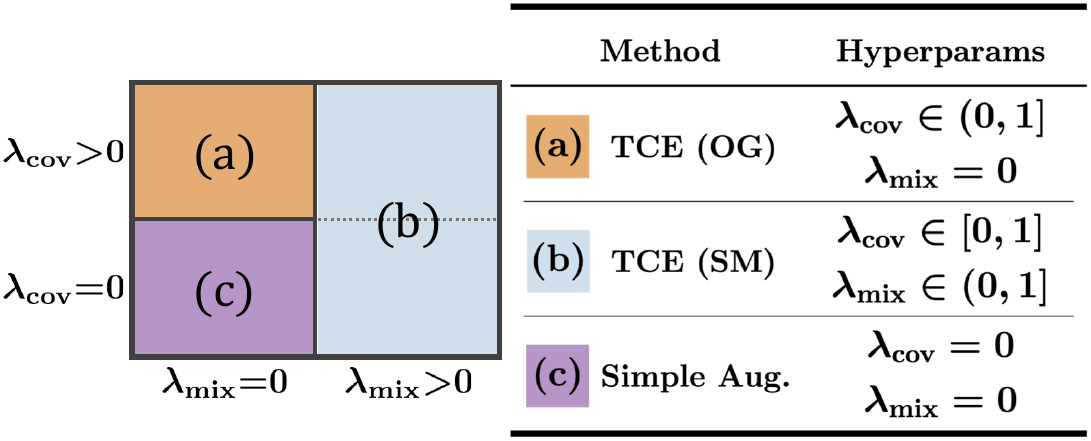}
    \vspace{-0.4em}
    \caption{Classification of TCE variants.}
    \label{fig:variants}
    \vspace{-1em}
\end{wrapfigure}

In summary, $\lambda_{\mathrm{cov}}$ regulates the generation error term $D_{\mathrm{TV}}(\hat{P}_{\mathrm{tar}} \,\|\, P_{\mathrm{tar}})$ by controlling state-coverage expansion, while $\lambda_{\mathrm{mix}}$ controls the dynamics gap $D_{\mathrm{TV}}(P_{\mathrm{src}} \,\|\, P_{\mathrm{tar}})$ by determining the amount of source data directly incorporated. Fig.~\ref{fig:variants} summarizes the resulting variants of TCE. When $\lambda_{\mathrm{cov}}=\lambda_{\mathrm{mix}}=0$, TCE reduces to target-only generation (\textbf{Simple Augmentation}). When $\lambda_{\mathrm{mix}}=0$ and $\lambda_{\mathrm{cov}}>0$, source data is used only for generation-based coverage expansion (\textbf{TCE(OG)}). When $\lambda_{\mathrm{mix}}>0$, target-near source transitions are additionally incorporated (\textbf{TCE(SM)}). Selecting a $\lambda_{\mathrm{mix}}$ fraction of the source dataset yields $|\mathcal{D}_{\mathrm{src}}^{\lambda_{\mathrm{mix}}}|=\lambda_{\mathrm{mix}}|\mathcal{D}_{\mathrm{src}}|$ by construction, with the remaining transitions generated.

Using $\mathcal{D}_{\mathrm{train}}$, we perform offline policy learning. We adopt Implicit Q-Learning (IQL) for fair comparison with prior work \citep{iql}, and follow \citet{lyu2025cross} to incorporate a KL regularization term that penalizes deviation from the target-domain behavior policy. The resulting objective is
\begin{equation}
\mathcal{L}_\pi = \mathcal{L}_\pi^{\mathrm{IQL}} + \beta~ \mathbb{E}_{s \sim \mathcal{D}_{\mathrm{tar}}} \big[D_{\mathrm{KL}}(\hat{\pi}_{\mathrm{b}}(\cdot|s)\,\|\,\pi(\cdot|s))\big],
\end{equation}
where $\hat{\pi}_{\mathrm{b}}$ is the empirical behavior policy estimated from the target dataset. The overall framework of TCE is illustrated in Fig.~\ref{fig:structure}, and the full procedure is summarized in Algorithm~\ref{alg}. Further implementation details are provided in Appendix~\ref{secapp:imp}.

\begin{figure*}[t]
    \centering
    \begin{minipage}[t]{0.47\textwidth}
        \vspace{0pt}
        \centering
        \includegraphics[width=\linewidth]{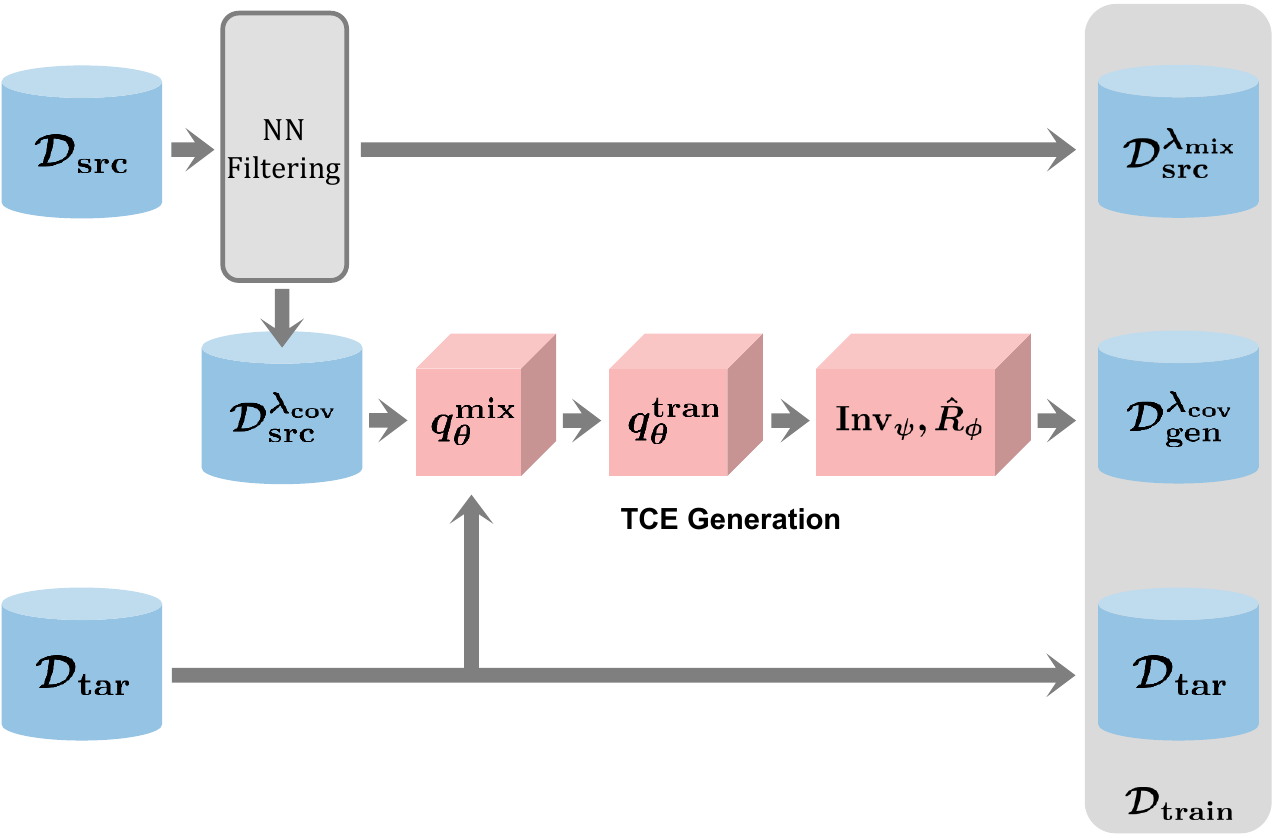}
        \vspace{-0.18in}
        \caption{Data construction in TCE framework.}
        \label{fig:structure}
    \end{minipage}
    \hfill
    \begin{minipage}[t]{0.49\textwidth}
        \vspace{-1.2em}
        \begin{algorithm}[H]
            \caption{TCE Framework}
            \label{alg}
            \small
            \begin{algorithmic}[1]
            \STATE \textbf{Input:} $\mathcal{D}_{\mathrm{tar}}, \mathcal{D}_{\mathrm{src}}, \lambda_{\mathrm{cov}}, \lambda_{\mathrm{mix}}$
            \STATE \textbf{Train models:} Train $q^{\mathrm{mix}}_\theta$ on $\mathcal{D}_{\mathrm{src}}^{\lambda_{\mathrm{cov}}}\cup\mathcal{D}_{\mathrm{tar}}$, 
            \STATE and $q^{\mathrm{tran}}_\theta$ on $\mathcal{D}_{\mathrm{tar}}$ via Eq.~\eqref{eq:joint_loss}
            \STATE Train $\mathrm{Inv}_\psi$ on $\mathcal{D}_{\mathrm{tar}}$, and $\hat{R}_\phi$ on $\mathcal{D}_{\mathrm{src}}\cup\mathcal{D}_{\mathrm{tar}}$
            \STATE \textbf{Generate samples:}
            \STATE Generate $\hat{s}_t\sim q^{\mathrm{mix}}_\theta$, $\hat{s}_{t+1}\sim q^{\mathrm{tran}}_\theta(\cdot\mid\hat{s}_t)$,
            \STATE $\hat{a}_t\sim \mathrm{Inv}_\psi$, and $\hat{r}_t\sim \hat{R}_\phi$ to form $\mathcal{D}_{\mathrm{gen}}^{\lambda_{\mathrm{cov}}}$
            \STATE \textbf{Construct training data:}
            \STATE $\mathcal{D}_{\mathrm{train}}=\mathcal{D}_{\mathrm{gen}}^{\lambda_{\mathrm{cov}}}\cup\mathcal{D}_{\mathrm{tar}}$ (\textbf{TCE(OG)})
            \STATE $\mathcal{D}_{\mathrm{train}}=\mathcal{D}_{\mathrm{src}}^{\lambda_{\mathrm{mix}}}\cup\mathcal{D}_{\mathrm{gen}}^{\lambda_{\mathrm{cov}}}\cup\mathcal{D}_{\mathrm{tar}}$ (\textbf{TCE(SM)})
            \STATE \textbf{Offline RL:} Train $\pi$ on the training dataset $\mathcal{D}_{\mathrm{train}}$
            \end{algorithmic}
        \end{algorithm}
    \end{minipage}
\end{figure*}
\section{Experiments}
\label{sec:exp}

In this section, we evaluate TCE across diverse cross-domain setups and compare it with recent cross-domain offline RL baselines. We further perform ablations to quantify the contribution of each component and study sensitivity to key hyperparameters.

\subsection{Experimental Setup}

We evaluate TCE on cross-domain MuJoCo continuous-control tasks \citep{todorov2012mujoco} and Adroit manipulation tasks from the ODRL benchmark \citep{lyu2024odrl}. In MuJoCo, the source and target domains share the same agent type (HalfCheetah, Hopper, Walker2d, Ant) but differ in morphology, kinematics, or gravity. Both domains use pre-collected D4RL datasets \citep{d4rl} with varying quality: for the target domain, we consider \texttt{medium}, \texttt{medium-expert}, and \texttt{expert}, while for the source domain we use \texttt{medium}, \texttt{medium-replay}, and \texttt{medium-expert}. This yields $36$ cross-domain tasks for each of the morphology-, kinematics-, and gravity-shift settings, with approximately $0.2$M--$2$M source transitions and $5$k target transitions. In Adroit, we consider morphology (\texttt{shrink-finger}) and kinematic (\texttt{broken-joint}) shifts, using expert-only source and target datasets with $0.5$M source transitions and $5$k target transitions; M/H denote medium and hard shift levels corresponding to stronger physical constraints in the target domain. We report normalized returns, where $0$ corresponds to a random policy and $100$ to an expert policy. Further environmental details are provided in Appendix~\ref{secapp:expdetail}.

For comparison on MuJoCo tasks, we evaluate TCE variants against a comprehensive set of cross-domain offline RL baselines: \textbf{IQL*} \citep{iql}, which trains IQL on the union of source and target data; 
\textbf{DARA} \citep{dara}, which employs domain-adversarial classifiers to mitigate dynamics mismatch;
\textbf{BOSA} \citep{bosa}, which constrains the policy to the support of the dataset;
\textbf{SRPO} \citep{srpo}, which regularizes policy learning by matching stationary distributions;
\textbf{IGDF} \citep{wen2024contrastive}, which filters source transitions using representation learning;
and \textbf{OTDF} \citep{lyu2025cross}, which performs source filtering based on optimal transport distances. For Adroit, we use the official ODRL benchmark~\citep{lyu2024odrl} and additionally compare against \textbf{MOBODY}~\citep{mobody}, a model-based adaptation baseline, in place of SRPO, whose official implementation does not support Adroit environments.

For the baseline methods, we report results directly from \citep{lyu2025cross}, where each algorithm is implemented with hyperparameter tuning. For TCE variants, we consider \textbf{TCE(OG)} and \textbf{TCE(SM)}: for both variants we select the best $\lambda_{\mathrm{cov}} \in (0,1]$, and for TCE(SM) we additionally tune $\lambda_{\mathrm{mix}} \in (0,1]$ via hyperparameter search. All TCE variants are trained for the same number of steps as the baselines. More details, including hyperparameter settings, are provided in Appendix~\ref{secapp:expdetail}.

\begin{table*}[t]
\centering
\caption{Performance comparison on morphology-shift tasks. Abbrev.: half=HalfCheetah, hopp=Hopper, walk=Walker2d, ant=Ant; m=\texttt{medium}, m-r=\texttt{medium-replay}, e=\texttt{expert}, m-e=\texttt{medium-expert}. Src./Tgt. denote source/target domains. Results are reported as mean $\pm$ standard deviation over 5 seeds; best results are in \textbf{bold} and second-best results are \underline{underlined}.}
\vspace{-0.05in}
\resizebox{1\textwidth}{!}{
\scriptsize
\renewcommand{\arraystretch}{1.08}
\begin{tabular}{@{}S{0.98cm}l|cccccc|cc@{}}
\toprule
Src. & Tgt. & IQL* & DARA & BOSA & SRPO & IGDF & OTDF & TCE(OG) & TCE(SM) \\
\midrule
half-m & m   & 30.0$\pm$1.6 & 26.6$\pm$3.3 & 19.3$\pm$3.5 & 41.3$\pm$0.4 & 41.6$\pm$0.5 & 39.1$\pm$2.3 & \textbf{44.1$\pm$0.2} & \textbf{44.1$\pm$0.2} \\
half-m & m-e & 31.8$\pm$1.1 & 32.0$\pm$0.7 & 33.6$\pm$1.1 & 30.7$\pm$0.8 & 29.6$\pm$2.2 & 35.6$\pm$0.7 & \textbf{44.7$\pm$0.6} & \underline{43.3$\pm$0.5} \\
half-m & e   & 8.5$\pm$1.0  & 9.3$\pm$1.6  & 7.9$\pm$0.8  & 8.6$\pm$0.9 & 10.0$\pm$0.8 & 10.7$\pm$1.2 & \textbf{{81.7$\pm$9.2}} & {\underline{61.5$\pm$4.5}} \\
half-m-r & m   & 30.8$\pm$4.4 & 35.6$\pm$0.7 & 35.0$\pm$4.6 & 32.0$\pm$1.4 & 28.0$\pm$2.0 & 40.0$\pm$1.2 & \textbf{44.2$\pm$1.0} & \underline{43.3$\pm$2.3} \\
half-m-r & m-e & 12.9$\pm$2.2 & 16.9$\pm$4.1 & 19.9$\pm$5.5 & 12.4$\pm$1.6 & 12.0$\pm$3.7 & 34.4$\pm$0.7 & \textbf{44.2$\pm$1.3} & \underline{43.1$\pm$0.1} \\
half-m-r & e   & 5.9$\pm$1.7  & 3.7$\pm$2.7  & 2.4$\pm$1.9  & 6.2$\pm$1.4  & 5.3$\pm$2.3  & 8.2$\pm$2.7  & \textbf{80.2$\pm$0.8} & \underline{36.9$\pm$8.6} \\
half-m-e & m   & 41.5$\pm$0.1 & 40.3$\pm$1.2 & 41.3$\pm$0.3 & 41.3$\pm$0.4 & 40.9$\pm$0.4 & 41.4$\pm$0.3 & \textbf{44.1$\pm$1.3} & \textbf{44.1$\pm$2.0} \\
half-m-e & m-e & 25.8$\pm$2.0 & 30.6$\pm$2.8 & 32.1$\pm$0.8 & 27.2$\pm$0.8 & 26.2$\pm$1.8 & 35.1$\pm$0.6 & \textbf{44.8$\pm$0.5} & \underline{43.8$\pm$2.0} \\
half-m-e & e   & 7.8$\pm$1.3  & 8.3$\pm$1.3  & 9.1$\pm$0.8  & 7.8$\pm$0.9  & 7.5$\pm$0.9  & 9.8$\pm$1.0  & \textbf{83.2$\pm$5.1} & \underline{62.9$\pm$8.3} \\ \hline

hopp-m & m   & \underline{13.5$\pm$0.2} & \underline{13.5$\pm$0.4} & 13.2$\pm$0.3 & 13.4$\pm$0.1 & 13.4$\pm$0.2 & 11.0$\pm$0.9 & \textbf{50.0$\pm$2.0} & {5.0$\pm$3.7} \\
hopp-m & m-e & 13.4$\pm$0.1 & \underline{13.6$\pm$0.2} & 11.2$\pm$4.6 & 13.3$\pm$0.2 & 13.3$\pm$0.4 & 12.6$\pm$0.8 & \textbf{16.1$\pm$5.3} & {9.2$\pm$1.8} \\
hopp-m & e   & 13.5$\pm$0.2 & 13.6$\pm$0.3 & 13.3$\pm$0.4 & 13.6$\pm$0.2 & {13.9$\pm$0.1} & 10.7$\pm$4.7 & \textbf{99.8$\pm$0.1} & \underline{77.3$\pm$15.7} \\
hopp-m-r & m   & 10.8$\pm$1.1 & 10.2$\pm$1.0 & 1.2$\pm$0.0 & 10.7$\pm$1.6 & {12.0$\pm$4.4} & 8.7$\pm$2.8  & \textbf{31.1$\pm$3.2} & \underline{27.0$\pm$14.5} \\
hopp-m-r & m-e & \underline{11.6$\pm$1.6} & 10.4$\pm$0.9 & 1.3$\pm$0.2 & 10.4$\pm$1.2 & 8.2$\pm$2.8  & 9.7$\pm$2.7  & \textbf{15.8$\pm$4.1} & {3.4$\pm$0.1} \\
hopp-m-r & e   & 9.8$\pm$0.5  & 9.0$\pm$0.3  & 1.3$\pm$0.1 & 10.4$\pm$1.4 & 11.4$\pm$1.5 & 10.7$\pm$2.4 & \textbf{99.8$\pm$0.1} & \underline{75.4$\pm$26.1} \\
hopp-m-e & m   & 12.6$\pm$1.4 & 13.0$\pm$0.5 & 15.7$\pm$7.2 & 14.0$\pm$2.3 & 12.7$\pm$0.8 & 7.9$\pm$3.2  & \textbf{50.5$\pm$7.6} &\underline{22.0$\pm$17.3} \\
hopp-m-e & m-e & \underline{14.1$\pm$1.3} & {13.8$\pm$0.6} & 12.0$\pm$1.4 & 13.5$\pm$0.3 & 13.3$\pm$1.2 & 9.6$\pm$3.5  & \textbf{18.8$\pm$6.5} & {10.2$\pm$1.3} \\
hopp-m-e & e   & 13.8$\pm$0.5 & 12.3$\pm$1.8 & 10.5$\pm$5.0 & {14.7$\pm$2.3} & 12.8$\pm$0.9 & 5.9$\pm$4.0  & \textbf{97.9$\pm$0.3} & \underline{66.1$\pm$23.1} \\ \hline

walk-m & m   & 23.0$\pm$4.7 & 23.3$\pm$3.3 & 6.2$\pm$2.9 & 24.7$\pm$1.7 & 27.5$\pm$9.5 & \underline{50.5$\pm$5.8} & {43.2$\pm$3.8} & \textbf{55.2$\pm$5.3} \\
walk-m & m-e & 21.5$\pm$8.6 & 22.2$\pm$7.6 & 7.2$\pm$2.9 & 18.7$\pm$7.3 & 20.7$\pm$5.9 & \textbf{44.3$\pm$23.8} & \underline{40.4$\pm$2.5} & 22.7$\pm$9.7 \\
walk-m & e   & 20.3$\pm$2.8 & 17.3$\pm$3.4 & 15.8$\pm$8.7 & 21.1$\pm$7.2 & 15.8$\pm$4.5 & 55.3$\pm$8.3 & \underline{74.2$\pm$1.6} & \textbf{86.8$\pm$11.8} \\
walk-m-r & m   & 11.3$\pm$3.0 & 10.9$\pm$4.6 & 5.4$\pm$4.0 & 10.4$\pm$4.8 & 13.4$\pm$7.2 & {37.4$\pm$5.1} & \underline{44.7$\pm$2.7} & \textbf{45.6$\pm$10.4} \\
walk-m-r & m-e & 7.0$\pm$1.5  & 4.5$\pm$1.1  & 4.0$\pm$2.2  & 4.9$\pm$1.7  & 6.9$\pm$2.2  & \underline{33.8$\pm$6.9} & \textbf{43.4$\pm$2.6} & {17.2$\pm$3.1}\\
walk-m-r & e   & 6.3$\pm$0.9  & 4.5$\pm$1.1  & 3.8$\pm$3.4  & 5.5$\pm$0.9  & 5.5$\pm$2.2  & 41.5$\pm$6.8 & \textbf{63.2$\pm$7.4} & \underline{52.6$\pm$0.2} \\
walk-m-e & m   & 24.1$\pm$7.4 & 31.7$\pm$6.6 & 18.7$\pm$6.5 & 29.9$\pm$4.7 & 27.5$\pm$2.3 & \textbf{49.9$\pm$4.6} & 43.5$\pm$3.4 & \underline{47.4$\pm$3.6} \\
walk-m-e & m-e & 27.0$\pm$5.5 & 23.3$\pm$5.5 & 11.1$\pm$0.9 & 22.9$\pm$3.8 & 25.3$\pm$6.4 & \textbf{40.5$\pm$11.0} & \underline{37.4$\pm$6.1} & {21.7$\pm$7.9} \\
walk-m-e & e   & 22.4$\pm$3.3 & 25.2$\pm$5.7 & 9.9$\pm$3.9 & 18.7$\pm$5.7 & 24.7$\pm$2.4 & 45.7$\pm$6.9 & \textbf{62.5$\pm$9.6} & \underline{47.6$\pm$10.9} \\ \hline

ant-m & m   & 38.7$\pm$3.8 & \underline{41.3$\pm$1.8} & 18.2$\pm$1.9 & 40.6$\pm$2.1 & 40.9$\pm$1.7 & 39.4$\pm$1.7 & \textbf{41.5$\pm$0.8} & {40.8$\pm$1.2} \\
ant-m & m-e & 47.0$\pm$5.1 & 43.3$\pm$2.0 & 45.3$\pm$7.0 & 47.2$\pm$4.3 & 44.4$\pm$1.7 & 58.3$\pm$8.9 & \underline{72.7$\pm$2.5} & \textbf{74.9$\pm$4.0} \\
ant-m & e   & 36.2$\pm$3.5 & 48.5$\pm$4.2 & 72.2$\pm$10.5 & 42.2$\pm$9.9 & 41.4$\pm$4.2 & 85.4$\pm$4.4 & \underline{94.0$\pm$4.5} & \textbf{96.4$\pm$3.2} \\
ant-m-r & m   & 38.2$\pm$2.9 & 38.9$\pm$2.7 & 20.2$\pm$3.7 & 38.3$\pm$1.9 & 39.7$\pm$1.2 & \underline{41.2$\pm$0.9} & \textbf{41.5$\pm$1.5} & {40.0$\pm$1.6} \\
ant-m-r & m-e & 38.1$\pm$3.5 & 33.4$\pm$5.5 & 15.2$\pm$1.6 & 35.0$\pm$5.7 & 37.3$\pm$2.4 & 50.8$\pm$4.5 & \textbf{71.0$\pm$8.4} & \underline{63.1$\pm$9.8} \\
ant-m-r & e   & 24.1$\pm$1.9 & 24.5$\pm$2.6 & 16.0$\pm$1.7 & 22.7$\pm$3.0 & 23.6$\pm$1.4 & 67.2$\pm$7.5 & \textbf{96.1$\pm$0.4} & \textbf{96.1$\pm$0.6} \\
ant-m-e & m   & 32.9$\pm$5.1 & {40.2$\pm$1.5} & 28.1$\pm$5.6 & 35.9$\pm$2.5 & 36.1$\pm$4.4 & {39.9$\pm$2.9} & \textbf{41.9$\pm$0.6} & \underline{41.4$\pm$1.8} \\
ant-m-e & m-e & 35.7$\pm$3.9 & 36.5$\pm$8.7 & 14.8$\pm$15.9 & 24.5$\pm$15.7 & 30.7$\pm$10.8 & 65.7$\pm$4.5 & \underline{66.4$\pm$1.2} & \textbf{69.2$\pm$4.9} \\
ant-m-e & e   & 36.1$\pm$8.5 & 34.6$\pm$5.8 & 53.9$\pm$5.0 & 38.4$\pm$9.4 & 35.2$\pm$6.6 & 86.4$\pm$2.2 & \textbf{96.6$\pm$0.5} & \underline{94.5$\pm$2.4} \\
\midrule
\multicolumn{2}{c|}{\textbf{Total Score}} & 798.0 & 816.8 & 646.3 & 803.1 & 808.7 & 1274.3 & \textbf{2065.2} & \underline{1731.8} \\
\bottomrule
\end{tabular}
}
\vspace{-0.05in}
\label{table:perform_morph}
\end{table*}

\subsection{Performance Comparison}

\paragraph{Morphology Shifts.}
Table~\ref{table:perform_morph} summarizes the results under morphology shifts. TCE achieves the highest average performance on $33$ of $36$ tasks, outperforming all baselines. The gains are most pronounced when the target dataset is high-quality (e.g., \texttt{expert}), where many baselines fail due to the narrow target distribution and large domain gap. By combining controllable state-coverage expansion with target-aligned transition generation, TCE effectively mitigates distributional shift and enables offline policy learning in these challenging settings. Morphology shifts induce the largest domain gaps, as structural changes in the agent substantially alter both the state distribution and transition dynamics. In this regime, TCE(OG) consistently performs best, while TCE(SM) is effective only with small mixing weights and can degrade performance even with slight source incorporation, indicating that generation is more effective than direct mixing under large gaps.

\vspace{-.5em}\paragraph{Kinematic Shifts \& Gravity Shifts.}
We defer full results to Appendix~\ref{subsecapp:kinematic_shift} and~\ref{subsecapp:gravity_shift}, as each setup involves 36 tasks. Kinematic shifts modify transition dynamics through changes in kinematic parameters and generally induce larger gaps than gravity shifts. Consistent with this, TCE outperforms baselines on nearly all tasks, with TCE(OG) remaining the strongest variant in most cases. In contrast, gravity shifts alter vertical dynamics while preserving the overall agent structure, resulting in a milder domain gap. In this regime, TCE(SM) surpasses both TCE(OG) and the baselines, indicating that incorporating target-near source transitions becomes more beneficial as the gap decreases. This trend is further confirmed in the \textit{extreme-shift} study in Appendix~\ref{subsecapp:extreme_shift}, where TCE(OG) again outperforms TCE(SM) under substantially larger mismatch.

\vspace{-.5em}\paragraph{Adroit Tasks.}
Adroit tasks present higher-dimensional state spaces and more complex contact-rich dynamics than locomotion tasks, leading to substantially larger domain gaps. We therefore focus on TCE(OG), which is better suited to regimes where direct source mixing is unreliable. As shown in Table~\ref{tab:adroit}, TCE(OG) achieves the best performance across all tasks, outperforming all baselines. These results demonstrate that target-aligned generation remains effective even in high-dimensional manipulation settings, and highlight its robustness under severe distributional mismatch.

\begin{table*}[t]
\centering
\vspace{-0.1in}
\caption{Performance comparison on Adroit tasks from the ODRL benchmark.}
\vspace{-0.05in}
\resizebox{1\textwidth}{!}{
\scriptsize
\begin{tabular}{@{}l l l|cccccc|c@{}}
\toprule
Env & Shift & Level & IQL* & DARA & BOSA & IGDF & OTDF & MOBODY & TCE(OG) \\
\midrule
\multirow{4}{*}{Pen}
& \multirow{2}{*}{broken-joint}
& M & 15.5 $\pm$ 3.4 & 33.8 $\pm$ 2.8 & 20.4 $\pm$ 2.5 & 29.8 $\pm$ 2.8 & 36.6 $\pm$ 2.9 & \underline{41.3 $\pm$ 3.5} & \textbf{59.6 $\pm$ 3.7} \\
& & H & 24.1 $\pm$ 1.6 & 13.3 $\pm$ 3.3 & 19.8 $\pm$ 1.3 & 14.0 $\pm$ 1.9 & \underline{40.1 $\pm$ 4.1} & 34.5 $\pm$ 3.1 & \textbf{44.8 $\pm$ 2.2} \\
& \multirow{2}{*}{shrink-finger}
& M & 18.6 $\pm$ 2.2 & 16.2 $\pm$ 3.3 & 17.4 $\pm$ 1.3 & 22.2 $\pm$ 0.9 & \underline{34.5 $\pm$ 5.0} & 24.9 $\pm$ 2.0 & \textbf{38.6 $\pm$ 3.1} \\
& & H & 16.4 $\pm$ 1.7 & 24.5 $\pm$ 2.8 & 9.5 $\pm$ 1.6 & 23.5 $\pm$ 7.2 & \underline{32.4 $\pm$ 3.9} & 29.7 $\pm$ 1.5 & \textbf{51.9 $\pm$ 4.2} \\
\midrule
\multirow{4}{*}{Door}
& \multirow{2}{*}{broken-joint}
& M & 50.8 $\pm$ 2.4 & 50.9 $\pm$ 6.2 & 47.8 $\pm$ 5.4 & 73.2 $\pm$ 8.8 & \underline{80.4 $\pm$ 1.6} & 79.4 $\pm$ 1.8 & \textbf{82.8 $\pm$ 1.3} \\
& & H & 50.7 $\pm$ 7.2 & 36.2 $\pm$ 4.5 & 8.8 $\pm$ 2.1 & 53.4 $\pm$ 7.6 & \underline{78.5 $\pm$ 2.1} & 54.3 $\pm$ 2.3 & \textbf{91.8 $\pm$ 0.9} \\
& \multirow{2}{*}{shrink-finger}
& M & 28.3 $\pm$ 5.2 & 58.6 $\pm$ 8.4 & 0.8 $\pm$ 0.2 & 73.2 $\pm$ 9.1 & \underline{70.2 $\pm$ 1.1} & 50.6 $\pm$ 0.9 & \textbf{77.2 $\pm$ 3.0} \\
& & H & 14.1 $\pm$ 7.7 & 8.8 $\pm$ 1.9 & 9.3 $\pm$ 1.0 & 35.0 $\pm$ 6.5 & \underline{56.5 $\pm$ 2.0} & 42.6 $\pm$ 2.4 & \textbf{57.2 $\pm$ 5.7} \\
\midrule
\multicolumn{3}{c|}{Total Score} & 218.5 & 242.3 & 143.8 & 324.3 & \underline{428.8} & 357.3 & \textbf{504.0} \\
\bottomrule
\end{tabular}
}
\vspace{-0.15in}
\label{tab:adroit}
\end{table*}

\subsection{Analysis of Sample Reliability under Coverage Expansion}
\vspace{-0.1in}
\label{subsec:cover}

\begin{figure}[H]
    \vspace{-0.1in}
    \centering
    \includegraphics[width=0.8\columnwidth]{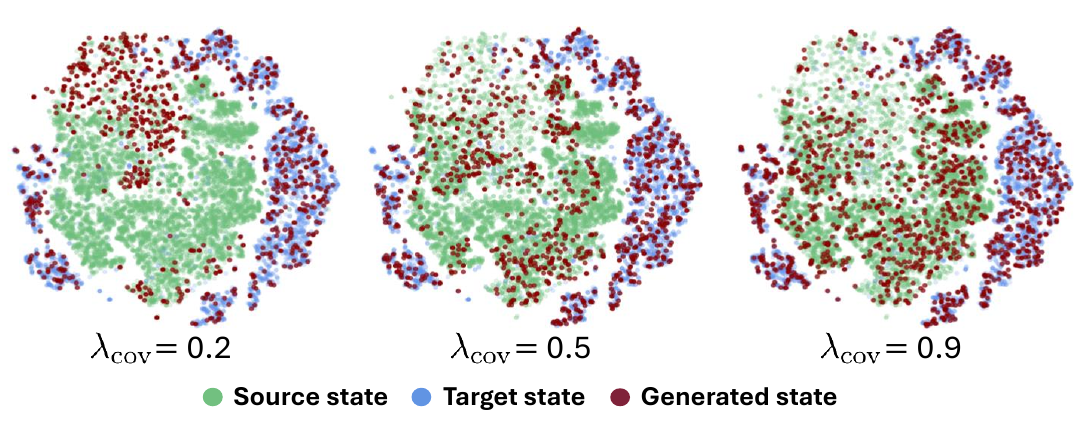}
    \vspace{-0.08in}
    \caption{Coverage analysis under varying $\lambda_{\mathrm{cov}}$: t-SNE visualization for Ant morphology shifts.}
    \label{fig:analysis_sample}
    \vspace{-0.1in}
\end{figure}

\begin{table}[H]
\centering
\vspace{-0.15in}
\caption{Error analysis across coverage settings on MuJoCo morphology shifts (averaged over 36 tasks), reporting action, reward, and transition errors together with normalized score.}
\setlength{\tabcolsep}{4pt}
\renewcommand{\arraystretch}{1.12}
\resizebox{\columnwidth}{!}{
\begin{tabular}{@{}l c c c c c@{}}
\toprule
\multirow[c]{2}{*}[-0.4ex]{Model} & \multirow[c]{2}{*}[-0.4ex]{$\lambda_{\mathrm{cov}}$} & \multicolumn{3}{c}{Prediction error} & \multirow[c]{2}{*}[-0.4ex]{Normalized score} \\
\cmidrule(lr){3-5}
& & Action & Reward & Transition & \\
\midrule
\multirow{5}{*}{TCE(OG)}
& $0$ (Simple Aug.) & \textbf{0.038 $\pm$ 0.009} & \textbf{0.024 $\pm$ 0.010} & \textbf{0.016 $\pm$ 0.005} & 54.6 $\pm$ 25.1 \\
& $0.1$ & 0.062 $\pm$ 0.011 & 0.025 $\pm$ 0.012 & 0.024 $\pm$ 0.013 & 55.8 $\pm$ 24.1 \\
& $0.2$ (Default) & 0.066 $\pm$ 0.014 & 0.025 $\pm$ 0.013 & 0.029 $\pm$ 0.012 & \textbf{57.4 $\pm$ 24.6} \\
& $0.5$ & 0.085 $\pm$ 0.021 & 0.052 $\pm$ 0.014 & 0.046 $\pm$ 0.015 & 52.1 $\pm$ 25.5 \\
& $0.9$ & 0.118 $\pm$ 0.020 & 0.069 $\pm$ 0.017 & 0.052 $\pm$ 0.019 & 50.9 $\pm$ 26.1 \\
\midrule
TCE (one-stage) & $0.2$ & -- & 0.105 $\pm$ 0.015 & 0.159 $\pm$ 0.052 & 46.5 $\pm$ 22.6 \\
\bottomrule
\end{tabular}
}
\label{tab:sa_vs_tce_compact}
\vspace{-0.1in}
\end{table}

To analyze how coverage expansion affects generation error and performance, we present Fig.~\ref{fig:analysis_sample} and Table~\ref{tab:sa_vs_tce_compact}. Fig.~\ref{fig:analysis_sample} shows that increasing $\lambda_{\mathrm{cov}}$ expands the covered state region, enabling the model to learn target-aligned behavior on previously unseen states and improving performance initially; however, as coverage further expands, the generation error $D_{\mathrm{TV}}(\hat{P}_{\mathrm{tar}}\,\|\,P_{\mathrm{tar}})$ increases and eventually degrades performance. Table~\ref{tab:sa_vs_tce_compact} quantifies \textbf{this coverage--error trade-off} through action, reward, and transition errors together with normalized score, and compares our two-stage design with \textbf{TCE (one-stage)}, which directly estimates $\hat{P}_{\mathrm{tar}}(s' \mid s,a)$. 

As in Fig.~2, $\lambda_{\mathrm{cov}}=0$ corresponds to Simple Augmentation (Simple Aug.), whose strong performance suggests the benefit of self-augmentation and justifies incorporating target states into the mixture-state distribution. As $\lambda_{\mathrm{cov}}$ increases, broader coverage improves performance up to a point, after which the error term $D_{\mathrm{TV}}(\hat{P}_{\mathrm{tar}}\,\|\,P_{\mathrm{tar}})$ grows, as predicted by Theorem~\ref{thm:mix-tar-gap}, leading to degradation. In Table~\ref{tab:sa_vs_tce_compact}, $\lambda_{\mathrm{cov}}=0.2$ achieves the best average score across environments, motivating our default choice.

The same table further highlights the importance of the two-stage design. At $\lambda_{\mathrm{cov}}=0.2$, the target dataset contains only about 5K transitions, making direct estimation of $s'$ from $(s,a)$ prone to overfitting. As a result, TCE (one-stage) yields substantially larger reward and transition errors and a much lower normalized score, with errors even exceeding those at $\lambda_{\mathrm{cov}}=0.9$. Overall, these results show that (i) performance is governed by a coverage--error trade-off, (ii) controlling $\lambda_{\mathrm{cov}}$ is essential, and (iii) the two-stage design is critical for maintaining low generation error under limited target data. Appendix~\ref{secapp:analy_ymax} further visualizes how sample errors increase with $\lambda_{\mathrm{cov}}$ across MuJoCo environments.

\vspace{-0.1in}
\subsection{Further Ablation and Analysis}
\label{subsec:abl}

\begin{figure}[t]
    \vspace{-0.1in}
    \centering
    \captionsetup[subfigure]{labelfont=normalfont}
    \begin{subfigure}[c]{0.5\linewidth}
        \centering
        \vspace{0.2em}
        \begin{tabular}{@{}l|c@{}}
        \toprule
        \multicolumn{1}{c|}{\textbf{Setting}} & \textbf{Avg. Score} \\
        \midrule
        TCE(OG) & \textbf{57.4$\pm$24.6} \\
        TCE(SM) & 48.1$\pm$25.5 \\
        TCE(OT) & 55.6$\pm$24.2 \\
        TCE(OG) w/o Reg. & 55.8$\pm$26.8 \\
        IQL* & 31.5$\pm$21.2 \\
        \bottomrule
        \end{tabular}
        \vspace{1em}
        \subcaption{Component evaluation}
        \label{fig:ablation_combined_a}
    \end{subfigure}%
    \hfill
    \begin{subfigure}[c]{0.5\linewidth}
        \centering
        \includegraphics[width=0.8\linewidth]{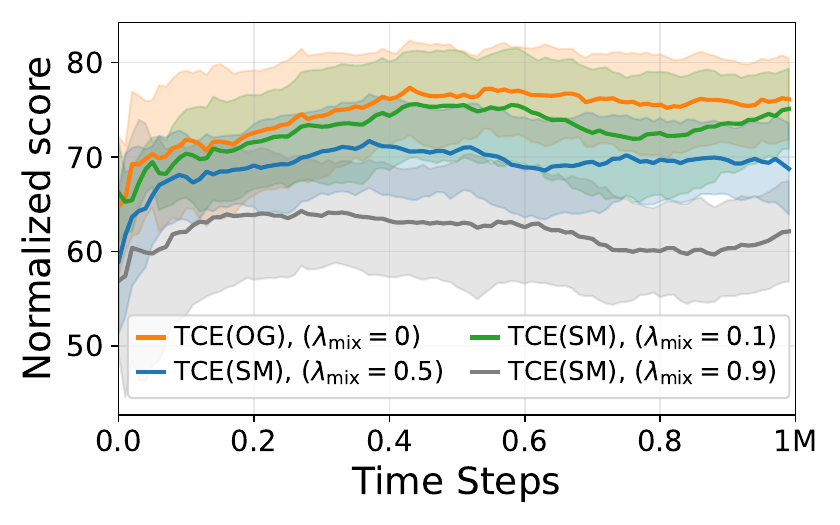}
        \vspace{-0.8em}
        \subcaption{Mixture coefficient $\lambda_\mathrm{mix}$}
        \label{fig:ablation_combined_b}
    \end{subfigure}

    \caption{Additional ablation study: (a) component evaluation under morphology shifts (averaged over 36 tasks). (b) effect of the mixture coefficient $\lambda_{\mathrm{mix}}$ under Ant morphology shifts.}
    \label{fig:ablation_combined}
    \vspace{-1em}
\end{figure}
We next present additional ablation results and more in-depth analyses of TCE.

\vspace{-0.1in}
\paragraph{Component Evaluation.} 
Fig.~\ref{fig:ablation_combined}(a) reports average returns under morphology shifts for TCE(OG), TCE(OT), which replaces the NN criterion with the optimal-transport distance of \citet{lyu2025cross}, and TCE(OG) w/o Reg., which removes the policy regularization term. TCE(OG) performs best overall, indicating that each component contributes to performance. TCE(OT) improves upon TCE(SM) but remains slightly below TCE(OG), suggesting that the selection metric is less critical than controlling the two domain gaps. Removing regularization leads to a modest drop, confirming that the policy regularization term provides additional benefit.

\vspace{-0.1in}
\paragraph{Mixture Coefficient $\lambda_{\mathrm{mix}}$.}
In TCE, $\lambda_{\mathrm{mix}}$ controls how much target-near source data is incorporated into policy learning. Fig.~\ref{fig:ablation_combined}(b) fixes $\lambda_{\mathrm{cov}}=0.2$ and varies $\lambda_{\mathrm{mix}}$. For Ant morphology shifts, where the domain gap is large, TCE(OG) with $\lambda_{\mathrm{mix}}=0$ performs best, while TCE(SM) degrades as $\lambda_{\mathrm{mix}}$ increases, indicating that TCE(OG) is a sensible default in large-gap regimes. In contrast, Appendix~\ref{secapp:addabl} shows that under smaller domain gaps, such as gravity shifts, TCE(SM) can outperform TCE(OG), making the choice of $\lambda_{\mathrm{mix}}$ beneficial in these settings.

\vspace{-0.1in}
\paragraph{Further Analysis.}
Appendix~\ref{secapp:compu} provides a computational complexity comparison, showing that TCE incurs only moderate additional training time relative to the baselines while maintaining comparable GPU memory usage, indicating that the overhead is justified by the performance gains. Appendix~\ref{subsecapp:lambda_abl} shows that wider search ranges for $\lambda_{\mathrm{mix}}$ and $\lambda_{\mathrm{cov}}$ yield trends consistent with the main results, suggesting that the practical tuning range is narrow. Appendix~\ref{subsecapp:target-size} demonstrates that varying the target-data size affects inverse-dynamics error and policy performance in a consistent manner, while TCE remains effective even in the scarce-data regime. Appendix~\ref{subsecapp:deno-steps} shows that the denoising step $K$ has limited impact on performance, indicating that TCE is not sensitive to this choice.

\section{Limitations and Future Work}
\vspace{-0.05in}

Although TCE consistently outperforms baselines, it has several limitations. TCE incurs additional training cost for the score networks and introduces hyperparameters for coverage expansion and data mixing. However, the sampling cost and GPU usage remain comparable to the baselines, and the added training overhead is moderate given the performance gains. Moreover, our experiments suggest that the practical search range of the hyperparameters is narrow and easy to tune. In addition, while TCE is effective under large domain gaps, it does not yet extend to extremely mismatched settings where the source and target share little support, including cases with different state or action spaces. Extending the framework to such settings remains an important direction for future work.

\section{Conclusion}
\vspace{-0.05in}

We presented TCE, a framework that controls the performance gap induced by source data based on our theoretical analysis, balancing target-near source mixing and generation-based coverage expansion. TCE generates target-aligned transitions via a two-stage score-based model to expand target state coverage, mitigating distributional mismatch in cross-domain offline RL and enabling effective policy learning with limited target data. Experiments across diverse MuJoCo domain shifts show that TCE consistently outperforms prior methods, highlighting it as a simple and practical solution for cross-domain offline RL.

\bibliography{neurips_2026}
\bibliographystyle{unsrtnat}

\newpage
\appendix
\onecolumn


\section{Proof of Theorem~\ref{thm:mix-tar-gap}}
\label{secapp:theorem-proof}
We begin by introducing the \textit{Telescoping Lemma} \citep{vgdf}, a fundamental result that decomposes the performance difference between two MDPs arising from their dynamic mismatch. This lemma serves as a cornerstone for our analysis.

\begin{lemma}[Telescoping Lemma, Lemma C.1 in \citet{vgdf}] \label{lem:telescoping}
Let $\mathcal{M}_1 := (\mathcal{S}, \mathcal{A}, P_1, r, \gamma)$ and $\mathcal{M}_2 := (\mathcal{S}, \mathcal{A}, P_2, r, \gamma)$ be two MDPs differing only in their transition dynamics $P_1$ and $P_2$. For any policy $\pi$, defining the value discrepancy term as
\[
G^{\pi}_{\mathcal{M}_1,\mathcal{M}_2}(s, a) := \mathbb{E}_{s'\sim P_1}\left[V^{\pi}_{\mathcal{M}_2}(s')\right] - \mathbb{E}_{s'\sim P_2}\left[V^{\pi}_{\mathcal{M}_2}(s')\right],
\]
allows us to express the performance gap as:
\[
\eta_{\mathcal{M}_1}(\pi) - \eta_{\mathcal{M}_2}(\pi) = \frac{\gamma}{1-\gamma}\mathbb{E}_{(s,a)\sim\rho^{\pi}_{\mathcal{M}_1}}\left[G^{\pi}_{\mathcal{M}_1,\mathcal{M}_2}(s, a)\right].
\]
\end{lemma}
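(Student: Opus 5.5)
The plan is to prove the Telescoping Lemma by the standard hybrid-rollout (simulation-lemma) argument. I interpolate between $\mathcal{M}_2$ and $\mathcal{M}_1$ by switching dynamics at a fixed time. Throughout, I read $\eta_{\mathcal{M}}(\pi)$ as the expected discounted return $\mathbb{E}[\sum_{t\ge0}\gamma^t r(s_t,a_t)]$ from the initial distribution $\mu_0$, and $V^\pi_{\mathcal{M}}$ as the unnormalized discounted value. I use the normalized occupancy $\rho^\pi_{\mathcal{M}_1}(s,a)=(1-\gamma)\sum_t\gamma^t\Pr_{\mathcal{M}_1}(s_t=s,a_t=a)$. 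Under these conventions the constant $\gamma/(1-\gamma)$ comes out exactly, so the first step is to fix them explicitly.

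\textbf{Hybrid returns.} For $j\ge 0$, let $W_j$ be the expected discounted return of $\pi$ when transitions at steps $0,\dots,j-1$ are drawn from $P_1$ and all later transitions from $P_2$. Then $W_0=\eta_{\mathcal{M}_2}(\pi)$. Also $W_j\to\eta_{\mathcal{M}_1}(\pi)$ as $j\to\infty$, because $W_j$ and $\eta_{\mathcal{M}_1}(\pi)$ agree on the first $j$ reward terms and the remaining tail is bounded by $2\gamma^j r_{\max}/(1-\gamma)$. By the Markov property,
\begin{equation*}
W_j=\mathbb{E}_{\mathcal{M}_1}\Big[\textstyle\sum_{t=0}^{j}\gamma^t r(s_t,a_t)+\gamma^{j+1}\,\mathbb{E}_{s'\sim P_2(\cdot\mid s_j,a_j)}V^\pi_{\mathcal{M}_2}(s')\Big],
\end{equation*}
where $(s_0,a_0,\dots,s_j,a_j)$ are generated under $P_1$ and $\pi$. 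The analogous expression for $W_{j+1}$ uses $P_1$ at step $j$ in place of $P_2$.

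\textbf{Telescoping.} Subtracting the two expressions, the reward terms up to time $j$ cancel, leaving
\begin{equation*}
W_{j+1}-W_j=\gamma^{j+1}\,\mathbb{E}_{(s_j,a_j)\sim\mathcal{M}_1}\big[G^\pi_{\mathcal{M}_1,\mathcal{M}_2}(s_j,a_j)\big].
\end{equation*}
Summing over $j\ge0$ and using the limit above gives
\begin{equation*}
\eta_{\mathcal{M}_1}(\pi)-\eta_{\mathcal{M}_2}(\pi)=\gamma\sum_{j}\gamma^j\,\mathbb{E}_{\mathcal{M}_1}[G(s_j,a_j)].
\end{equation*}
Since $\sum_j\gamma^j\Pr_{\mathcal{M}_1}(s_j,a_j)=\rho^\pi_{\mathcal{M}_1}/(1-\gamma)$, this equals $\frac{\gamma}{1-\gamma}\mathbb{E}_{\rho^\pi_{\mathcal{M}_1}}[G]$. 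Exchanging the sum and the expectation is justified because $|G|\le 2r_{\max}/(1-\gamma)$ is bounded, so Fubini applies.

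\textbf{Main obstacle.} The mathematics is routine. The real difficulty is bookkeeping of normalizations, because the paper's definitions of $\rho$, $V$ and $\eta$ in Section~\ref{sec:TCE} are somewhat loose. If $\eta$ were instead the normalized quantity $\mathbb{E}_\rho[r]$ while $V$ stays unnormalized, the constant would become $\gamma$ rather than $\gamma/(1-\gamma)$. I would therefore state the conventions up front: unnormalized $\eta$ and $V$, normalized $\rho$. With these fixed, the rest of the argument is the hybrid-rollout computation above.
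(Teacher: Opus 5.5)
Your proof is correct and follows essentially the same route as the proof the paper defers to in VGDF: hybrid returns $W_j$ that use $P_1$ for the first $j$ transitions and $P_2$ afterwards, the one-step identity $W_{j+1}-W_j=\gamma^{j+1}\mathbb{E}_{\mathcal{M}_1}[G^{\pi}_{\mathcal{M}_1,\mathcal{M}_2}(s_j,a_j)]$, and summation against the normalized occupancy. Your normalization remark is also right: with the paper's literal definition $\eta=\mathbb{E}_{\rho}[r]$ and the unnormalized $V$ used in the proof of Theorem~\ref{thm:mix-tar-gap}, the constant would be $\gamma$, so the conventions you fix (unnormalized $\eta$ and $V$, normalized $\rho$) are the ones under which the stated identity holds.
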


The proof of Lemma~\ref{lem:telescoping} is provided in \citet{vgdf}. Building upon this result, we restate and prove our main theorem, which bounds the performance gap between the mixture and target domains from the perspective of transition dynamics.


\def\thetheorem{4.1}
\begin{theorem}[Performance Gap via Total Variation]
Let $\eta_{\mathrm{mix}}(\pi)$ and $\rho_{\mathrm{mix}}^{\pi}$ denote the expected return and the state-action occupancy measure in the mixture MDP, respectively. Then, the performance gap between the target MDP and the mixture MDP can be bounded in terms of transition dynamics as follows:
{\small
\begin{equation}
\eta_{\mathrm{mix}}(\pi) - \eta_{\mathrm{tar}}(\pi) \leq \frac{2\gamma r_{\max}}{(1-\gamma)^2} \Big( \lambda\,\mathbb{E}_{\rho_{\mathrm{mix}}^{\pi}}\big[ D_{\mathrm{TV}}(P_{\mathrm{src}} \,\|\, P_{\mathrm{tar}}) \big] + (1-\lambda)\,\mathbb{E}_{\rho_{\mathrm{mix}}^{\pi}}\big[ D_{\mathrm{TV}}(\hat P_{\mathrm{tar}} \,\|\, P_{\mathrm{tar}}) \big] \Big),
\end{equation}
}
where $r_{\max}$ denotes the maximum reward and $D_{\mathrm{TV}}(P \,\|\, Q)$ denotes the total variation distance between $P$ and $Q$.
\end{theorem}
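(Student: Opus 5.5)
We apply Lemma~\ref{lem:telescoping} with $\mathcal{M}_1=\mathcal{M}_{\mathrm{mix}}$ and $\mathcal{M}_2=\mathcal{M}_{\mathrm{tar}}$. This gives
\[
\eta_{\mathrm{mix}}(\pi)-\eta_{\mathrm{tar}}(\pi)=\frac{\gamma}{1-\gamma}\,\mathbb{E}_{(s,a)\sim\rho^{\pi}_{\mathrm{mix}}}\big[G^{\pi}_{\mathcal{M}_{\mathrm{mix}},\mathcal{M}_{\mathrm{tar}}}(s,a)\big],
\]
so it suffices to bound $G$ pointwise in $(s,a)$.

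\textbf{Step 1: split the discrepancy by linearity.} Because $P_{\mathrm{mix}}=\lambda P_{\mathrm{src}}+(1-\lambda)\hat P_{\mathrm{tar}}$ and the expectation is linear in the measure, we can write $P_{\mathrm{tar}}=\lambda P_{\mathrm{tar}}+(1-\lambda)P_{\mathrm{tar}}$. Then
\[
G(s,a)=\lambda\big(\mathbb{E}_{P_{\mathrm{src}}}-\mathbb{E}_{P_{\mathrm{tar}}}\big)[V^{\pi}_{\mathrm{tar}}(s')]+(1-\lambda)\big(\mathbb{E}_{\hat P_{\mathrm{tar}}}-\mathbb{E}_{P_{\mathrm{tar}}}\big)[V^{\pi}_{\mathrm{tar}}(s')].
\]

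\textbf{Step 2: bound each term by total variation.} For any bounded $f$ we have $|\mathbb{E}_P f-\mathbb{E}_Q f|\le \sum_{s'}|P(s')-Q(s')|\,\|f\|_\infty = 2\|f\|_\infty D_{\mathrm{TV}}(P\|Q)$, where $D_{\mathrm{TV}}=\tfrac12\|\cdot\|_1$; in the continuous case the sum becomes an integral. Rewards are bounded by $r_{\max}$, so $\|V^{\pi}_{\mathrm{tar}}\|_\infty\le r_{\max}/(1-\gamma)$. Each term is therefore bounded by $\frac{2r_{\max}}{1-\gamma}D_{\mathrm{TV}}(\cdot\,\|\,P_{\mathrm{tar}})(s,a)$.

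\textbf{Step 3: combine.} Substituting into the telescoping identity, and using that the expectation of a bound dominates the expectation of $G$, gives the prefactor $\frac{\gamma}{1-\gamma}\cdot\frac{2r_{\max}}{1-\gamma}=\frac{2\gamma r_{\max}}{(1-\gamma)^2}$. The two expectations under $\rho^{\pi}_{\mathrm{mix}}$ are exactly those in the statement, which proves the claim.

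\textbf{Main obstacle.} The only delicate point is keeping the constants consistent. The factor $2$ relies on the convention $D_{\mathrm{TV}}=\tfrac12\|P-Q\|_1$ together with the crude bound $\|V\|_\infty\le r_{\max}/(1-\gamma)$. The statement asserts a bound on the signed difference rather than its absolute value, so the inequality only needs to hold in one direction. The constants also depend on the paper's normalization of $V$, $\eta$ and $\rho$: $\eta$ is defined via the normalized occupancy, while $V$ carries the sum $\sum\gamma^l r$. Under either normalization, $V$ is bounded by $r_{\max}/(1-\gamma)$, so the stated constant holds.
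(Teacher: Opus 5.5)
Your proof is correct and follows essentially the same route as the paper. You apply the telescoping lemma with $\mathcal{M}_1=\mathcal{M}_{\mathrm{mix}}$ and $\mathcal{M}_2=\mathcal{M}_{\mathrm{tar}}$, bound the value discrepancy by the $L^1$ distance between the transition kernels times $\|V^{\pi}_{\mathrm{tar}}\|_\infty\le r_{\max}/(1-\gamma)$, and get the factor $2$ from $D_{\mathrm{TV}}=\tfrac12\|\cdot\|_1$. The only difference is order: the paper bounds by $D_{\mathrm{TV}}(P_{\mathrm{mix}}\,\|\,P_{\mathrm{tar}})$ and then uses convexity of total variation, whereas you split $P_{\mathrm{mix}}$ linearly first and bound each piece, which is the same triangle inequality.
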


\begin{proof}
To simplify notation, let $P_{\mathrm{mix}}$, $P_{\mathrm{src}}$, and $P_{\mathrm{tar}}$ denote $P_{\mathrm{mix}}(\cdot\mid s,a)$, $P_{\mathrm{src}}(\cdot\mid s,a)$, and $P_{\mathrm{tar}}(\cdot\mid s,a)$ respectively. Similarly, $\rho^{\pi}_{\mathrm{mix}}$ denotes $\rho^{\pi}_{\mathrm{mix}}(s,a)$, and $V^{\pi}_{\mathrm{tar}}$ represents $V^{\pi}_{\mathrm{tar}}(s')$.

Applying Lemma~\ref{lem:telescoping} with $\mathcal{M}_1 = \mathrm{mix}$ and $\mathcal{M}_2 = \mathrm{tar}$, we have:
{\small
\begin{alignat*}{2}
\eta_{\mathrm{mix}}(\pi)-\eta_{\mathrm{tar}}(\pi)
&=
\frac{\gamma}{1-\gamma}
\mathbb{E}_{\rho^{\pi}_{\mathrm{mix}}}
\left[
\int P_{\mathrm{mix}} V^{\pi}_{\mathrm{tar}} ds'
-
\int P_{\mathrm{tar}} V^{\pi}_{\mathrm{tar}} ds'
\right] \; && \hspace{-0.2in} \text{(by Lemma~\ref{lem:telescoping})} \\[0.6em]
&=
\frac{\gamma}{1-\gamma}
\mathbb{E}_{\rho^{\pi}_{\mathrm{mix}}}
\left[
\int 
\left(
P_{\mathrm{mix}}-P_{\mathrm{tar}}
\right)
V^{\pi}_{\mathrm{tar}} ds'
\right] \\[0.6em]
&\leq
\frac{\gamma}{1-\gamma}
\mathbb{E}_{\rho^{\pi}_{\mathrm{mix}}}
\left[
\int 
\left| P_{\mathrm{mix}}-P_{\mathrm{tar}} \right|
\cdot \left| V^{\pi}_{\mathrm{tar}} \right| ds'
\right] \; && \hspace{-0.2in} \text{(Triangle Inequality)} \\[0.6em]
&\leq
\frac{\gamma}{1-\gamma} \cdot \frac{r_\text{max}}{1-\gamma}
\mathbb{E}_{\rho^{\pi}_{\mathrm{mix}}}\left[\int \left| P_{\mathrm{mix}}-P_{\mathrm{tar}} \right| ds' \right]   \\[0.6em]
&=
\frac{2\gamma r_{\text{max}}}{(1-\gamma)^2}
\mathbb{E}_{\rho^{\pi}_{\mathrm{mix}}}\left[ \frac{1}{2} \int |P_{\mathrm{mix}}-P_{\mathrm{tar}}| ds' \right] \\[0.6em]
&=
\frac{2\gamma r_{\text{max}}}{(1-\gamma)^2}
\mathbb{E}_{\rho^{\pi}_{\mathrm{mix}}}\left[D_{\text{TV}}(P_{\mathrm{mix}} \| P_{\mathrm{tar}})  \right]   \\[0.6em]
&=
\frac{2\gamma r_{\text{max}}}{(1-\gamma)^2}
\mathbb{E}_{\rho^{\pi}_{\mathrm{mix}}}\left[D_{\text{TV}}((\lambda P_{\mathrm{src}} + (1-\lambda) \hat{P}_{\mathrm{tar}}) \| P_{\mathrm{tar}})  \right] \\[0.6em]
&\leq
\frac{2\gamma r_{\text{max}}}{(1-\gamma)^2}
\mathbb{E}_{\rho^{\pi}_{\mathrm{mix}}}\left[\lambda D_{\text{TV}}( P_{\mathrm{src}} \| P_{\mathrm{tar}}) + (1-\lambda)D_{\text{TV}}( \hat{P}_{\mathrm{tar}} \| P_{\mathrm{tar}})  \right]
\\[0.6em]
\end{alignat*}
}
where the inequality follows from the convexity of total variation distance, yielding the desired bound.
\end{proof}

While Theorem~\ref{thm:mix-tar-gap} bounds the performance gap using the total variation distance between transition dynamics, it is also insightful to analyze this gap from the perspective of value discrepancy. This alternative view highlights how errors in value estimation under different dynamics contribute to the overall performance difference.

\def\thetheorem{4.2}
\begin{theorem}[Performance Gap via Value Discrepancy]
\label{thm:mix-tar-gap-value}
Under the same assumptions as Theorem~\ref{thm:mix-tar-gap}, the performance gap between the mixture MDP and the target MDP satisfies:
\begin{align}
\eta_{\mathrm{mix}}(\pi) - \eta_{\mathrm{tar}}(\pi)
\leq \frac{\gamma }{1-\gamma} \Bigg(
&\lambda \,\mathbb{E}_{\rho_{\mathrm{mix}}^{\pi}}\Big[ \big|\mathbb{E}_{ P_{\mathrm{src}}}\big[V_{\mathrm{tar}}^{\pi}(s')\big]
- \mathbb{E}_{P_{\mathrm{tar}}}\big[V_{\mathrm{tar}}^{\pi}(s')\big]\big| \Big] \nonumber \\
&\quad+ (1-\lambda)\,\mathbb{E}_{\rho_{\mathrm{mix}}^{\pi}}\Big[ \big|\mathbb{E}_{\widehat P_{\mathrm{tar}}}\big[V_{\mathrm{tar}}^{\pi}(s')\big]
- \mathbb{E}_{P_{\mathrm{tar}}}\big[V_{\mathrm{tar}}^{\pi}(s')\big]\big| \Big]
\Bigg). \label{eq:value_gap_bound}
\end{align}
\end{theorem}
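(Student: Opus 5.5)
We start exactly as in the proof of Theorem~\ref{thm:mix-tar-gap}, applying Lemma~\ref{lem:telescoping} with $\mathcal{M}_1=\mathcal{M}_{\mathrm{mix}}$ and $\mathcal{M}_2=\mathcal{M}_{\mathrm{tar}}$. This gives the identity
\[
\eta_{\mathrm{mix}}(\pi)-\eta_{\mathrm{tar}}(\pi)=\frac{\gamma}{1-\gamma}\,\mathbb{E}_{\rho^{\pi}_{\mathrm{mix}}}\Big[\mathbb{E}_{P_{\mathrm{mix}}}\big[V^{\pi}_{\mathrm{tar}}(s')\big]-\mathbb{E}_{P_{\mathrm{tar}}}\big[V^{\pi}_{\mathrm{tar}}(s')\big]\Big].
\]
Here $V^{\pi}_{\mathrm{tar}}$ is a fixed bounded function of $s'$, and the outer expectation is over the fixed measure $\rho^{\pi}_{\mathrm{mix}}$. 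In contrast to the TV theorem, we do not pass to $|V|\le r_{\max}/(1-\gamma)$. We keep the value function inside the inner expectation instead.

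The key step is linearity of the inner expectation in the transition kernel. Since $P_{\mathrm{mix}}(\cdot\mid s,a)=\lambda P_{\mathrm{src}}(\cdot\mid s,a)+(1-\lambda)\hat P_{\mathrm{tar}}(\cdot\mid s,a)$ pointwise in $(s,a)$, we have $\mathbb{E}_{P_{\mathrm{mix}}}[V^{\pi}_{\mathrm{tar}}]=\lambda\,\mathbb{E}_{P_{\mathrm{src}}}[V^{\pi}_{\mathrm{tar}}]+(1-\lambda)\,\mathbb{E}_{\hat P_{\mathrm{tar}}}[V^{\pi}_{\mathrm{tar}}]$. We also write $\mathbb{E}_{P_{\mathrm{tar}}}[V^{\pi}_{\mathrm{tar}}]=\lambda\,\mathbb{E}_{P_{\mathrm{tar}}}[V^{\pi}_{\mathrm{tar}}]+(1-\lambda)\,\mathbb{E}_{P_{\mathrm{tar}}}[V^{\pi}_{\mathrm{tar}}]$. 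The integrand then splits exactly as
\[
\lambda\Big(\mathbb{E}_{P_{\mathrm{src}}}[V^{\pi}_{\mathrm{tar}}]-\mathbb{E}_{P_{\mathrm{tar}}}[V^{\pi}_{\mathrm{tar}}]\Big)+(1-\lambda)\Big(\mathbb{E}_{\hat P_{\mathrm{tar}}}[V^{\pi}_{\mathrm{tar}}]-\mathbb{E}_{P_{\mathrm{tar}}}[V^{\pi}_{\mathrm{tar}}]\Big).
\]
Each bracket is bounded above by its absolute value. Because $\lambda,1-\lambda\ge 0$, the triangle inequality preserves the weighted sum, and linearity of $\mathbb{E}_{\rho^{\pi}_{\mathrm{mix}}}$ distributes the outer expectation over the two terms. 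Multiplying by $\gamma/(1-\gamma)>0$ yields Eq.~\eqref{eq:value_gap_bound}.

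There is no real obstacle. The only points needing care are measure-theoretic: the integrals must be finite, which holds since $V^{\pi}_{\mathrm{tar}}$ is bounded by $r_{\max}/(1-\gamma)$. The mixture must also be understood pointwise in $(s,a)$, so that $\rho^{\pi}_{\mathrm{mix}}$ is the occupancy of the single MDP with kernel $P_{\mathrm{mix}}$ and the outer measure is not split. As a consistency check, each absolute value is bounded via Hölder by $\frac{2r_{\max}}{1-\gamma}D_{\mathrm{TV}}$. This recovers Theorem~\ref{thm:mix-tar-gap}, so the present bound is a refinement of it.
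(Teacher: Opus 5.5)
Your proof is correct and follows essentially the same route as the paper: apply Lemma~\ref{lem:telescoping} with $\mathcal{M}_1=\mathcal{M}_{\mathrm{mix}}$ and $\mathcal{M}_2=\mathcal{M}_{\mathrm{tar}}$, use linearity in the kernel to split the integrand into the $\lambda$- and $(1-\lambda)$-weighted differences, and bound each difference by its absolute value. The only difference is the order of steps: the paper takes the absolute value of the whole integrand first and then applies the triangle inequality, whereas you split first and bound each bracket separately, which is equivalent.
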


\begin{proof}
To simplify notation, let $P_{\mathrm{mix}}$, $P_{\mathrm{src}}$, and $P_{\mathrm{tar}}$ denote $P_{\mathrm{mix}}(\cdot\mid s,a)$, $P_{\mathrm{src}}(\cdot\mid s,a)$, and $P_{\mathrm{tar}}(\cdot\mid s,a)$ respectively. Similarly, $\rho^{\pi}_{\mathrm{mix}}$ denotes $\rho^{\pi}_{\mathrm{mix}}(s,a)$, and $V^{\pi}_{\mathrm{tar}}$ represents $V^{\pi}_{\mathrm{tar}}(s')$.

Applying Lemma~\ref{lem:telescoping} with $\mathcal{M}_1 = \mathrm{mix}$ and $\mathcal{M}_2 = \mathrm{tar}$, we have:
{\small
\begin{align*}
\eta_{\mathrm{mix}}(\pi)-\eta_{\mathrm{tar}}(\pi)
&=
\frac{\gamma}{1-\gamma}
\mathbb{E}_{\rho^{\pi}_{\mathrm{mix}}}
\left[
\int 
P_{\mathrm{mix}} V^{\pi}_{\mathrm{tar}} ds'
-
\int
P_{\mathrm{tar}} V^{\pi}_{\mathrm{tar}} ds'
\right] \qquad (\text{by Lemma~\ref{lem:telescoping}})\\[0.6em]
&=
\frac{\gamma}{1-\gamma}
\mathbb{E}_{\rho^{\pi}_{\mathrm{mix}}}
\left[
\mathbb{E}_{P_{\mathrm{mix}}}[V^{\pi}_{\mathrm{tar}}(s')]
-
\mathbb{E}_{P_{\mathrm{tar}}}[V^{\pi}_{\mathrm{tar}}(s')]
\right] \\[0.6em]
&\leq
\frac{\gamma}{1-\gamma}
\mathbb{E}_{\rho^{\pi}_{\mathrm{mix}}}
\left[\left|
\mathbb{E}_{P_{\mathrm{mix}}}[V^{\pi}_{\mathrm{tar}}(s')]
-
\mathbb{E}_{P_{\mathrm{tar}}}[V^{\pi}_{\mathrm{tar}}(s')]
\right|\right] \\[0.6em]
&=
\frac{\gamma}{1-\gamma}
\mathbb{E}_{\rho^{\pi}_{\mathrm{mix}}}
\bigg[\Big|
\lambda \mathbb{E}_{P_{\mathrm{src}}}[V^{\pi}_{\mathrm{tar}}(s')] 
+
(1-\lambda) \mathbb{E}_{\hat P_{\mathrm{tar}}}[V^{\pi}_{\mathrm{tar}}(s')]
-
\mathbb{E}_{P_{\mathrm{tar}}}[V^{\pi}_{\mathrm{tar}}(s')]
\Big|\bigg] \\[0.6em]
&=
\frac{\gamma}{1-\gamma}
\mathbb{E}_{\rho^{\pi}_{\mathrm{mix}}}
\bigg[\Big|
\lambda (\mathbb{E}_{P_{\mathrm{src}}}[V^{\pi}_{\mathrm{tar}}(s')] 
-
\mathbb{E}_{P_{\mathrm{tar}}}[V^{\pi}_{\mathrm{tar}}(s')]) \notag \\
&\qquad\qquad\qquad\qquad
+
(1-\lambda) (\mathbb{E}_{\hat P_{\mathrm{tar}}}[V^{\pi}_{\mathrm{tar}}(s')]
-
\mathbb{E}_{P_{\mathrm{tar}}}[V^{\pi}_{\mathrm{tar}}(s')])
\Big|\bigg] \\[0.6em]
&\leq
\frac{\gamma}{1-\gamma}
\mathbb{E}_{\rho^{\pi}_{\mathrm{mix}}}
\bigg[
\lambda \big|\mathbb{E}_{P_{\mathrm{src}}}[V^{\pi}_{\mathrm{tar}}(s')] 
-
\mathbb{E}_{P_{\mathrm{tar}}}[V^{\pi}_{\mathrm{tar}}(s')]\big| \notag \\
&\qquad\qquad\qquad
+
(1-\lambda) \big| \notag \\
&\qquad\qquad\qquad
\mathbb{E}_{\hat P_{\mathrm{tar}}}[V^{\pi}_{\mathrm{tar}}(s')]
-
\mathbb{E}_{P_{\mathrm{tar}}}[V^{\pi}_{\mathrm{tar}}(s')]\big|
\bigg] \\[0.6em]
\end{align*}
}
where the last inequality follows from the triangle inequality, proving the claim.
\end{proof}

The derived bounds (Theorem~\ref{thm:mix-tar-gap} and Theorem~\ref{thm:mix-tar-gap-value}) consistently highlight two distinct avenues for reducing the performance discrepancy. Whether viewed through transition dynamics or value estimates, the gap is governed by the alignment between source and target dynamics and the accuracy of the generative model. Coefficient $\lambda$ explicitly controls the trade-off between these two sources of error, enabling us to attenuate the influence of irreducible source-target mismatches while leveraging accurate target-like generations to tighten the overall performance bound.

\clearpage
\newpage

\section{Detailed Implementation and Algorithm of TCE}
\label{secapp:imp}

This section summarizes core components of our proposed TCE method: Subsection~\ref{subsecapp:redef} provides the configuration of the score-based generative models. Subsection~\ref{subsecapp:detailTCE} details the auxiliary models, dataset construction, and the offline policy learning. Subsection~\ref{subsecapp:impNS} provides network architecture and configuration details supporting the overall implementation.

\vspace{-0.1in}
\subsection{Details of Score-based Generative Model with SDEs}
\label{subsecapp:redef}

Our generative framework follows the score-based modeling with SDEs described in Section~\ref{subsec:sdes}. We adopt the variance exploding (VE) SDE formulation with drift \(f(x, \tau)=0\) and diffusion coefficient \(g(\tau) = \sqrt{d(\sigma^2)/d\tau}\). In our implementation, we use a VP-style variance schedule while retaining the VE reverse-time dynamics and predictor-corrector sampling, which can be viewed as a parameterization choice within the score-SDE framework. The noise scale \(\sigma(\tau)\) is defined using a quadratic schedule for the exponent \(\mathcal{B}(\tau)\):
\begin{equation}
\sigma(\tau) = \sqrt{1 - \exp\left(-\mathcal{B}(\tau)\right)}, \qquad \mathcal{B}(\tau) = \alpha_{\min} \tau + \frac{1}{2} (\alpha_{\max} - \alpha_{\min}) \tau^2,
\end{equation}
where we fix \(\alpha_{\min}=0.1\) and \(\alpha_{\max}=20\) to ensure a smooth noise injection process. This choice is better suited to our low-dimensional, data-scarce offline RL setting than the canonical geometric VE schedule, as it gives smoother noise growth and more stable score estimation under limited data.

The score network is optimized via the denoising score matching loss (Eq.~\eqref{eq:baseloss} in main text) using the weighting function \(\zeta(\tau) = \sigma(\tau)^2\). In the absence of conditional input \(c\), this formulation simplifies to an unconditional score network \(q_\theta(x^\tau, \tau)\). 

For sample generation, we employ the predictor-corrector (PC) scheme with \(K=500\) discretization steps. This method combines a numerical SDE solver (Predictor) with Langevin dynamics (Corrector) to improve robustness and sample quality. Specifically, at each timestep, the predictor integrates the reverse SDE:
\begin{align}
\text{(Predictor)} \quad x^{k-1} = x^k - g(\tau^k)^2 q_\theta(x^k, \tau^k \mid c) \Delta \tau^k + g(\tau^k) \sqrt{|\Delta \tau^k|} \, \xi^k, \quad \xi^k \sim \mathcal{N}(0, I).
\end{align}
Following the predictor step, a Langevin corrector step refines the sample based on the score function:
\begin{align}
\text{(Corrector)} \quad x^{k-1} \leftarrow x^{k-1} + \eta^k q_\theta(x^{k-1}, \tau^{k-1} \mid c) + \sqrt{2\eta^k} \xi^{k}, \quad \xi^k \sim \mathcal{N}(0, I),
\end{align}
where $\eta^k$ is a step-size parameter for the corrector. This refinement helps correct errors accumulated during the numerical integration, yielding higher-quality transitions.

\vspace{-0.1in}
\subsection{Detailed Implementation of TCE}
\label{subsecapp:detailTCE}
\vspace{-0.1in}

In this section, we provide implementation details for the auxiliary models, the dataset construction, and the offline policy learning. The training and sampling for the score-based generative models follow the methodology described in Section~\ref{subsec:sdes_TCE}.

\paragraph{Auxiliary Models and Labeling.} 
Since the score-based models generate only state transitions \((\hat{s}_t, \hat{s}_{t+1})\), we require auxiliary models to recover the missing actions and rewards. We train an inverse dynamics model \(\mathrm{Inv}_{\psi}\) and a reward model \({R}_{\psi}\) using the mean squared error (MSE) loss:
\begin{equation}
\mathcal{L}_{\mathrm{aux}}(\psi) = 
\underbrace{
\mathbb{E}_{\mathcal{D}_{\mathrm{tar}}} \| \mathrm{Inv}_{\psi}(s_t, s_{t+1}) - a_t \|^2_2
}_{\text{Inverse dynamics loss}} \ + \ 
\underbrace{
\mathbb{E}_{\mathcal{D}_{\mathrm{tar}} \cup \mathcal{D}_{\mathrm{src}}^{\mathrm{\lambda_\mathrm{cov}}}} \| {R}_{\psi}(s_t, a_t,s_{t+1}) - r_t \|^2_2
}_{\text{Reward loss}}.
\end{equation}
The inverse dynamics model is trained solely on \(\mathcal{D}_{\mathrm{tar}}\) to ensure consistency with target dynamics, while the reward model is trained on \(\mathcal{D}_{\mathrm{tar}} \cup \mathcal{D}_{\mathrm{src}}^{\lambda_{\mathrm{cov}}}\) to leverage the shared reward function. We then utilize these models to label the generated state transitions \((\hat{s}_t, \hat{s}_{t+1})\) with actions and rewards, forming the fully labeled dataset \(\mathcal{D}_{\mathrm{gen}}^{\lambda_\mathrm{cov}}\). This labeled dataset is combined with the original data to form the final training set.

\paragraph{Dataset Construction and IQL Training.}
We construct the final training dataset \(\mathcal{D}_{\mathrm{train}}\) by augmenting \(\mathcal{D}_{\mathrm{tar}}\) with generated and source transitions. To maintain a consistent data volume, we set \(|\mathcal{D}_{\mathrm{gen}}^{ \lambda_{\mathrm{cov}}}| = (1-\lambda_{\mathrm{mix}})|\mathcal{D}_{\mathrm{src}}|\). Based on the mixture coefficient \(\lambda_{\mathrm{mix}}\), we define \textbf{TCE(OG)} (\(\lambda_{\mathrm{mix}}=0\)) as \(\mathcal{D}_{\mathrm{train}} := \mathcal{D}_{\mathrm{tar}} \cup \mathcal{D}_{\mathrm{gen}}^{ \lambda_{\mathrm{cov}}}\), and \textbf{TCE(SM)} (\(\lambda_{\mathrm{mix}} > 0\)) as \(\mathcal{D}_{\mathrm{train}} := \mathcal{D}_{\mathrm{tar}} \cup \mathcal{D}_{\mathrm{src}}^{\lambda_{\mathrm{mix}}} \cup \mathcal{D}_{\mathrm{gen}}^{ \lambda_{\mathrm{cov}}}\).

For policy learning, we employ Implicit Q-Learning (IQL) \citep{iql} on \(\mathcal{D}_{\mathrm{train}}\). The training objectives for the value function \(V_\varphi\), Q-function \(Q_\varphi\), and policy \(\pi_\omega\) are defined as follows:
\begin{align}
\mathcal{L}_V(\varphi) 
&= \mathbb{E}_{(s_t,a_t)\sim \mathcal{D}_{\mathrm{train}}}
\left[ L_2^{\tau_V}\big(Q_{\varphi{'}}(s_t,a_t) - V_{\varphi}(s_t)\big) \right], \\
\mathcal{L}_Q(\varphi) 
&= \mathbb{E}_{(s_t,a_t,r_t,s_{t+1})\sim \mathcal{D}_{\mathrm{train}}}
\left[\big(r_t+\gamma V_{\varphi^-}(s_{t+1}) - Q_{\varphi}(s_t,a_t)\big)^2\right], \\
\mathcal{L}_{\pi}(\omega) 
&= \mathbb{E}_{(s_t,a_t)\sim \mathcal{D}_{\mathrm{train}}}
\Big[\exp\big(\kappa \cdot \mathrm{Adv}(s_t,a_t)\big)\,\log \pi_{\omega}(a_t|s_t)\Big] \notag \\
&\quad + \beta\, \mathbb{E}_{s_t\sim\mathcal{D}_{\mathrm{tar}}}
\Big[D_{\mathrm{KL}}\big(\hat{\pi}_{b}(\cdot|s_t)\,\|\,\pi_{\omega}(\cdot|s_t)\big)\Big],
\end{align}
where \(L_2^{\tau_V}(u)=|\tau_V-\mathds{1}[u<0]|\,u^2\) is the expectile loss with \(\tau_V=0.7\); \(\varphi'\) and \(\varphi^-\) denote the target network and stop-gradient parameters, respectively; \(\mathrm{Adv}(s_t,a_t) = Q_\varphi(s_t,a_t) - V_\varphi(s_t)\) is the advantage weighted by temperature \(\kappa=3\); and \(\hat{\pi}_{b}\) is the empirical behavior policy cloned from \(\mathcal{D}_{\mathrm{tar}}\), with \(\beta\) controlling the KL-divergence penalty.

\vspace{-0.1in}
\subsection{Network Architecture and Configurations}
\label{subsecapp:impNS}

We employ four core neural network architectures to model mixture-state scores, target-transition scores, inverse dynamics, and reward estimation. As summarized in Table~\ref{sectab:net_arch},
all networks leverage MLP backbones with residual connections to ensure effective gradient flow. Details of environment-specific state and action dimensions (\(d_s\) and \(d_a\)) are provided in Table~\ref{sectab:envdetail}.

\textbf{Mixture state score network (\(q_{\theta}^\mathrm{mix}\))} estimates the score of perturbed states conditioned on diffusion time. It takes the state \(s_t\) and time \(\tau\) as inputs. The time input is embedded via a 2-layer MLP, concatenated with \(s_t\), and processed through residual dense blocks before a final projection to \(\mathbb{R}^{d_s}\).

\textbf{Target-transition score network (\(q_{\theta}^\mathrm{tran}\))} predicts the score of the next state \(s_{t+1}\) conditioned on \(s_t\) and \(\tau\). Similar to the state score network, it embeds both \(\tau\) and \(s_t\) using separate MLPs. These embeddings are concatenated with \(s_{t+1}\) and passed through residual blocks to output a score in \(\mathbb{R}^{d_s}\).

\textbf{Inverse dynamics (\(\mathrm{Inv}_\psi\)) and Reward models (\({R}_\psi\))} map state transition pairs \([s_t, s_{t+1}]\) to the action space \(\mathbb{R}^{d_a}\) and scalar rewards, respectively. Both networks share the same architecture, starting with a dense layer followed by residual blocks.

\begin{table*}[h]
\centering
\caption{Detailed architectures of the score and dynamics models.}
\label{sectab:net_arch}
{\footnotesize \textbf{Notation:} $\text{MLP}(x; [h_1, \dots, h_k])$ denotes an MLP with hidden sizes $h_i$ and SiLU activations. $\text{Linear}(d)$ is a fully connected layer with output dimension $d$. $[\text{ResBlock}(d)]_{\times N}$ denotes a stack of $N$ residual MLP blocks.\par}
\vspace{0.3em}
\small
\begin{tabularx}{\textwidth}{@{}c|c|>{\raggedright\arraybackslash}X@{}}
\toprule
\textbf{Network} & \textbf{Inputs} & \textbf{Architecture Specification} \\
\midrule
\multirow{3}{*}{Mixture-state score $(q_{\theta}^\mathrm{mix})$} 
 & \multirow{3}{*}{\shortstack{State $s_t$ \\ Time $\tau$}} 
 & $h_\tau = \text{MLP}(\tau; [128, 128])$ \\
 & & $h_{\text{in}} = \text{Concat}[s_t, h_\tau]$ \\
 & & $\text{MLP}(h_{\text{in}}; [256]) \to [\text{ResBlock}(256)]_{\times 4} \to \text{Linear}(d_s)$ \\
\midrule
\multirow{4}{*}{Target-transition score $(q_{\theta}^\mathrm{tran})$} 
 & \multirow{4}{*}{\shortstack{State $s_t$ \\ Time $\tau$ \\ Next state $s_{t+1}$}}
 & $h_\tau = \text{MLP}(\tau; [128, 128])$ \\
 & & $h_s = \text{MLP}(s_t; [128, 128])$ \\
 & & $h_{\text{in}} = \text{Concat}[s_{t+1}, h_\tau, h_s]$ \\
 & & $\text{MLP}(h_{\text{in}}; [256]) \to [\text{ResBlock}(256)]_{\times 4} \to \text{Linear}(d_s)$ \\
\midrule
\multirow{2}{*}{Inverse dynamics model $(\mathrm{Inv}_{\psi})$} 
 & \multirow{2}{*}{\shortstack{State $s_t$ \\ Next state $s_{t+1}$}}
 & $h_{\text{in}} = \text{Concat}[s_t, s_{t+1}]$ \\
 & & $\text{MLP}(h_{\text{in}}; [256]) \to [\text{ResBlock}(256)]_{\times 3} \to \text{Linear}(d_a)$ \\
\midrule
\multirow{2}{*}{Reward model $(R_{\psi})$} 
 & \multirow{2}{*}{\shortstack{State $s_t$ \\ Next state $s_{t+1}$}}
 & $h_{\text{in}} = \text{Concat}[s_t, s_{t+1}]$ \\
 & & $\text{MLP}(h_{\text{in}}; [256]) \to [\text{ResBlock}(256)]_{\times 3} \to \text{Linear}(1)$ \\
\bottomrule
\end{tabularx}
\end{table*}

\vspace{-.06in}

\begin{table*}[ht]
\centering
\caption{State dimension ($d_s$) and action dimension ($d_a$) of each agent.}
\vspace{-.1in}
\begin{tabular}{l|c|c}
\toprule
\textbf{Agent Type} & \textbf{State Dimension ($d_s$)} & \textbf{Action Dimension ($d_a$)} \\ 
\midrule
HalfCheetah, Walker2d & 17 & 6 \\
Hopper      & 11 & 3 \\
Ant         & 111 & 8 \\
Pen  & 45 & 24 \\
Door & 39 & 24 \\
\bottomrule
\end{tabular}
\label{sectab:envdetail}
\end{table*}

\clearpage
\newpage

\section{Detailed Experimental Setup}
\label{secapp:expdetail}

This section outlines the experimental setup in detail. It first summarizes the baselines used for comparison in section \ref{subsecapp:baselines_explanation}, then describes the environment and offline dataset configurations and domain shifts considered in section \ref{subsecapp:environmental_setup}, and finally presents the hyperparameter choices for model training and algorithm parameters in section \ref{subsecapp:hyperparameter_setup}.

\subsection{Baselines Explanation}
\label{subsecapp:baselines_explanation}

This section summarizes the baseline algorithms used for comparison with TCE approach.

\textbf{IQL}~\citep{iql} is a widely used offline RL method that learns policies strictly within the support of the dataset, avoiding extrapolation to OOD samples. While stable, this design makes it difficult to learn meaningful policies when only limited target-domain data is available. The variant IQL* leverages both source and target datasets to stabilize training and expand state coverage. \textbf{Official Code:} \url{https://github.com/ikostrikov/implicit_q_learning}.

\textbf{DARA}~\citep{dara} mitigates the effect of dynamics mismatch by training domain classifiers on state-action-next-state and state-action pairs to quantify domain discrepancy. This discrepancy is used to adjust source rewards, encouraging source data to better align with target dynamics. For this method, we follow the implementation provided in \cite{lyu2025cross}.

\textbf{BOSA}~\citep{bosa} introduces supported value estimation to constrain critic updates to plausible transitions under target dynamics. The actor updates only consider supported actions, preventing exploitation of unsupported out-of-distribution transitions. For this method, we follow the implementation provided in \cite{lyu2025cross}.

\textbf{SRPO}~\citep{srpo} constrains the learned policy distribution to remain close to the target state distribution by incorporating a KL divergence budget, inducing a reward shaping term via a domain discriminator to enforce consistency with target dynamics. For this method, we follow the implementation provided in \cite{lyu2025cross}.

\textbf{IGDF}~\citep{wen2024contrastive} learns cross-domain contrastive representations that distinguish source from target transitions. The resulting scores filter source data during critic training to ensure only reliable source transitions contribute. \textbf{Official Code:} \url{https://github.com/BattleWen/IGDF}.

\textbf{OTDF}~\citep{lyu2025cross} applies optimal transport to align source and target transitions by computing deviation scores and selectively weighting source samples. The policy update also integrates CVAE-based support regularization to ensure the learned policy remains consistent with the target action. \textbf{Official Code:} \url{https://github.com/dmksjfl/OTDF}.

\textbf{MOBODY}~\citep{mobody} is a model-based off-dynamics offline RL method that learns target dynamics from source and limited target data, then performs policy optimization via rollouts from the learned target dynamics. It models source and target with separate action encoders and a shared latent transition, and uses target-Q-weighted behavior cloning for policy regularization. \textbf{Official Code:} \url{https://github.com/guoyihonggyh/MOBODY-Model-Based-Off-Dynamics-Offline-Reinforcement-Learning}.

\clearpage
\newpage

\subsection{Environmental Setup}
\label{subsecapp:environmental_setup}

This section details the experimental setup used to evaluate our approach. We adopt the cross-domain continuous control setup, including environments, dataset compositions, and domain shift configurations, proposed by \citet{lyu2025cross}. The benchmark is based on MuJoCo environments \citep{todorov2012mujoco} and features four agent types: HalfCheetah, Hopper, Walker2d, and Ant. In addition, we evaluate Adroit dexterous-manipulation tasks from the ODRL benchmark \citep{lyu2024odrl}, which extend the evaluation to higher-dimensional domains with more pronounced structural shifts.

\paragraph{Offline Datasets.}
The offline datasets consist of pre-collected data from both source and target domains, primarily drawn from the D4RL benchmark \citep{d4rl} for MuJoCo tasks and from the ODRL benchmark \citep{lyu2024odrl} for Adroit tasks.

\textbf{Source domain.} For MuJoCo, source data comprises three quality levels: \texttt{medium}, \texttt{medium-replay}, and \texttt{medium-expert}. The \texttt{medium} dataset contains 1M samples generated by a partially trained SAC policy. The \texttt{medium-replay} dataset includes all samples in the SAC replay buffer up to the point of medium performance (approx. 0.2M to 0.4M samples), thus mixing low- to medium-quality experiences. Finally, the \texttt{medium-expert} datasets combine 50\% expert and 50\% suboptimal data, with total sizes ranging from 1M to 2M transitions. For Adroit, following ODRL, the source domain consists of expert-demonstration data with roughly 0.5M transitions per task.

\textbf{Target domain.} The target domain is used to assess policy adaptation under distributional shifts. For MuJoCo, three types of dynamics changes are introduced to the agents: \textbf{Morphology} shift (modifying the agent's physical structure), \textbf{Kinematic} shift (restricting joint rotations to simulate malfunctions), and \textbf{Gravity} shift (altering gravitational acceleration). MuJoCo target datasets are limited to a small number of samples (typically under 5K per dataset) and are provided across \texttt{medium}, \texttt{medium-expert}, and \texttt{expert} quality levels. For Adroit, following ODRL, we evaluate two tasks, \texttt{Pen} and \texttt{Door}, under two target-domain shift types: \textbf{Morphology} shift (\texttt{shrink-finger}, reducing finger capsule sizes) and \textbf{Kinematic} shift (\texttt{broken-joint}, restricting finger joint ranges). Each target dataset is restricted to 5K transitions. The labels \texttt{M} and \texttt{H} denote medium and hard shift levels. Table~\ref{table:dataset_size} summarizes the dataset sizes by domain and quality.

\begin{table*}[ht]
\centering
\caption{Dataset sizes by domain and data quality for MuJoCo and Adroit tasks.}
\small
\resizebox{\textwidth}{!}{%
\begin{tabular}{l|lc|lccc}
\toprule
\textbf{Environment} & \multicolumn{2}{c|}{\textbf{Source Domain}} & \multicolumn{1}{c}{\textbf{Target Domain}} & \textbf{Morphology} & \textbf{Kinematic} & \textbf{Gravity} \\ 
\midrule
\multirow{3}{*}{HalfCheetah} & \texttt{medium}        & 1M & \texttt{medium}        & 5K & 5K & 5K \\
                             & \texttt{medium-replay} & 0.2M  & \texttt{medium-expert} & 5K & 5K & 5K \\
                             & \texttt{medium-expert} & 2M & \texttt{expert}        & 5K & 5K & 5K \\ \midrule
\multirow{3}{*}{Hopper}      & \texttt{medium}        & 1M & \texttt{medium}        & 5K & 5K & 5K \\
                             & \texttt{medium-replay} & 0.4M  & \texttt{medium-expert} & 4.3K & 5K & 4.3K \\
                             & \texttt{medium-expert} & 2M & \texttt{expert}        & 5K & 5K & 5K \\ \midrule
\multirow{3}{*}{Walker2d}    & \texttt{medium}        & 1M & \texttt{medium}        & 5K & 5K & 5K \\
                             & \texttt{medium-replay} & 0.3M  & \texttt{medium-expert} & 3.5K & 4.4K & 4.8K \\
                             & \texttt{medium-expert} & 2M & \texttt{expert}        & 5K & 5K & 5K \\ \midrule
\multirow{3}{*}{Ant}         & \texttt{medium}        & 1M & \texttt{medium}        & 5K & 5K & 5K \\
                             & \texttt{medium-replay} & 0.3M  & \texttt{medium-expert} & 5K & 5K & 3.1K \\
                             & \texttt{medium-expert} & 2M & \texttt{expert}        & 5K & 5K & 5K \\ \midrule
Pen            & \texttt{expert} & 0.5M & \texttt{expert} & 5K & 5K & -- \\ \midrule
Door           & \texttt{expert} & 0.5M & \texttt{expert} & 5K & 5K & -- \\
\bottomrule
\end{tabular}
}
\label{table:dataset_size}
\end{table*}

\clearpage
\newpage

\paragraph{Evaluation Metric.}
All results are presented as normalized scores to fairly compare across environments with varying return scales:
\[
NS = \frac{J - J_r}{J_e - J_r} \times 100,
\]
where \(J\) is the return of the evaluated policy, and \(J_r\) and \(J_e\) represent returns from random and expert policies in the target domain, respectively. Reference scores proposed by  \citet{lyu2025cross} for each agent and target domain scenario are shown in Table~\ref{table:ref_score}. For Adroit, the morphology rows below correspond to the \texttt{shrink-finger} shift and the kinematic rows correspond to the \texttt{broken-joint} shift.

\begin{table*}[ht]
\centering
\caption{Reference minimum score $J_r$ and maximum score $J_e$ by agent and domain shifts.}
\vspace{-0.1in}
\begin{tabular}{l|l|c|c}
\toprule
\textbf{Agent Type} & \textbf{Domain Shifts} & \textbf{Reference min score $J_r$} & \textbf{Reference max score $J_e$} \\ 
\midrule
\multirow{3}{*}{HalfCheetah} & Morphology & -280.18 & 9713.59 \\
                             & Kinematic  & -280.18 & 7065.03 \\
                             & Gravity    & -280.18 & 9509.15 \\ \hline
\multirow{3}{*}{Hopper}      & Morphology & -26.34  & 3152.75 \\
                             & Kinematic  & -26.34  & 2842.73 \\
                             & Gravity    & -26.34  & 3234.3  \\ \hline
\multirow{3}{*}{Walker2d}    & Morphology & 10.8    & 4398.43 \\
                             & Kinematic  & 10.8    & 3257.51 \\
                             & Gravity    & 10.8    & 5154.71 \\ \hline
\multirow{3}{*}{Ant}         & Morphology & -325.6  & 5722.01 \\
                             & Kinematic  & -325.6  & 5122.57 \\
                             & Gravity    & -325.6  & 4317.07 \\ \hline
\multirow{2}{*}{Pen} & Morphology & -12.17 & 6408.38 \\
 & Kinematic & -12.17 & 6408.38 \\ \hline
\multirow{2}{*}{Door} & Morphology & -52.34 & 2880.57 \\
 & Kinematic & -52.34 & 2880.57 \\ 
\bottomrule
\end{tabular}
\label{table:ref_score}
\end{table*}

\paragraph{Domain Shifts.}

To simulate realistic but significant distributional differences between source and target domains, we consider separate domain-shift settings for MuJoCo and Adroit. These shifts directly impact the agent’s control dynamics and pose challenges to policy generalization. Fig.~\ref{fig:env_diff} provides visual examples of the source domain and the corresponding morphology-shifted target domains, highlighting the structural changes induced by these perturbations.

\begin{figure}[ht]
    \centering
    \includegraphics[width=0.95\textwidth]{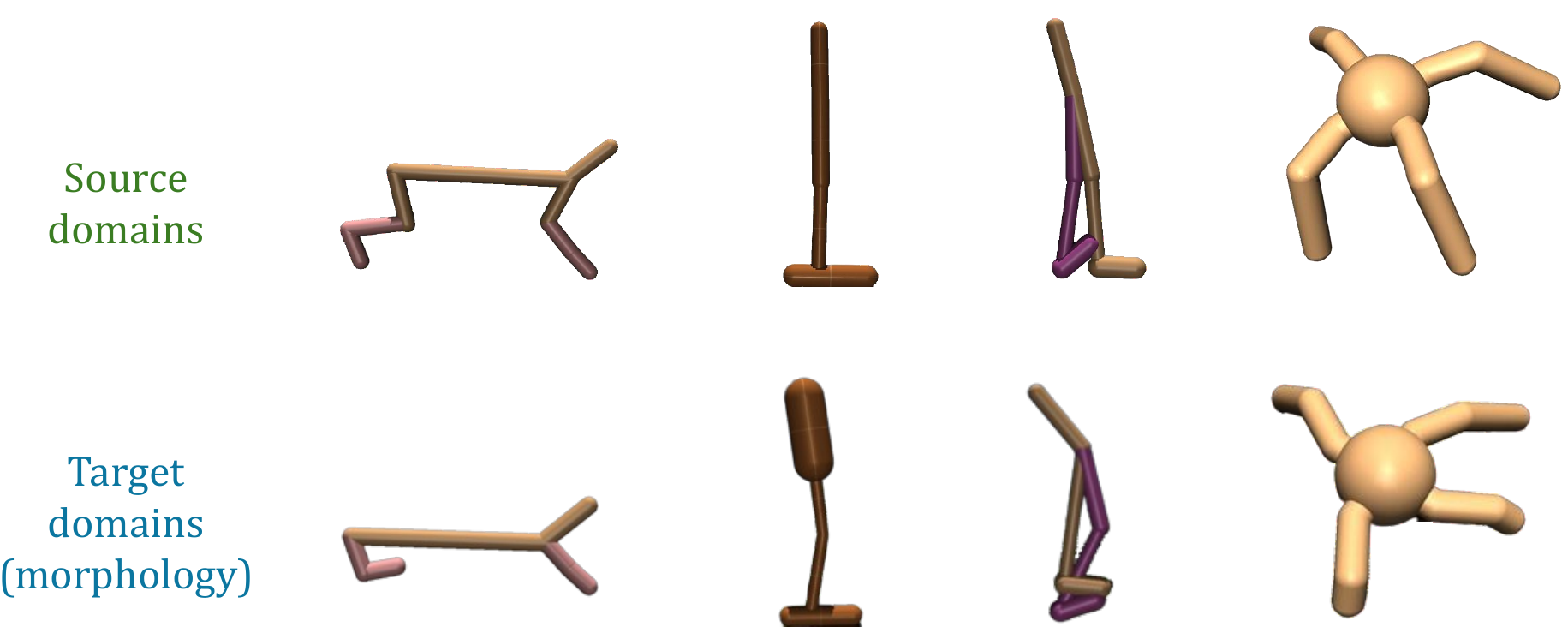}
    \caption{Visual examples of the source domain and morphology-shifted target domains in MuJoCo.}
    \label{fig:env_diff}
\end{figure}

For MuJoCo tasks, we apply three types of domain shifts:
\begin{itemize}[leftmargin=*]
    \item \textit{Morphology shift} involves structural modifications to the agent's body by altering body-segment dimensions. This includes shortening the thighs in HalfCheetah, enlarging the torso in Hopper, elongating the right leg in Walker2d, and reducing the front ankle capsules in Ant. These modifications create clear physical domain gaps that affect locomotion efficiency and balance.
    \item \textit{Kinematic shift} emulates impairments or restrictions in joint mobility by narrowing the allowed range of joint rotations. This corresponds to restricting the back thigh in HalfCheetah, the head and foot joints in Hopper, the foot joint in Walker2d, and the front hip joints in Ant, thereby reducing maneuverability in the target domain.
    \item \textit{Gravity shift} reduces the gravitational force magnitude acting on the agents by changing the target-domain gravity from the standard Earth value of -9.81 m/s\textsuperscript{2} in the source to -4.905 m/s\textsuperscript{2}. This creates a lighter environment in which the reduced downward force alters the agents' balance, contact dynamics, and overall locomotion efficiency during movement.
\end{itemize}

For Adroit tasks, following ODRL, we consider two target-domain shift types:
\begin{itemize}[leftmargin=*]
    \item \textit{Morphology shift} follows the \texttt{shrink-finger} setting, which reduces the capsule half-lengths of the finger phalanges, yielding shorter fingers with less contact area for stable grasping.
    \item \textit{Kinematic shift} follows the \texttt{broken-joint} setting, which contracts the rotational ranges of the index-finger and thumb joints, substantially limiting finger dexterity and precise grasp formation.
\end{itemize}

These domain shifts are implemented by modifying the MuJoCo XML configuration files following \citet{lyu2025cross} for MuJoCo and the ODRL XML and environment configurations for Adroit. Together, these settings enable a comprehensive evaluation of cross-domain offline reinforcement learning under varied and challenging environment changes.
\vspace{0.5in}

{HalfCheetah morphology shift} : thigh length reduced by approximately 99\%
\begin{lstlisting}[style=xmlbox]
<geom fromto="0 0 0 -0.0001 0 -0.0001" name="bthigh" size="0.046" type="capsule"/>
<body name="bshin" pos="-0.0001 0 -0.0001">
<geom fromto="0 0 0 0.0001 0 0.0001" name="fthigh" size="0.046" type="capsule"/>
<body name="fshin" pos="0.0001 0 0.0001">
\end{lstlisting}

{Hopper morphology shift} : torso length reduced by approximately 56\%
\begin{lstlisting}[style=xmlbox]
<geom friction="0.9" fromto="0 0 1.45 0 0 1.05" name="torso_geom" size="0.125" type="capsule"/>
\end{lstlisting}

{Walker2d morphology shift} : thigh shortened and lower leg elongated by approximately 98\%
\begin{lstlisting}[style=xmlbox]
<body name="thigh" pos="0 0 1.05">
  <joint axis="0 -1 0" name="thigh_joint" pos="0 0 1.05" range="-150 0" type="hinge"/>
  <geom friction="0.9" fromto="0 0 1.05 0 0 1.045" name="thigh_geom" size="0.05" type="capsule"/>
  <body name="leg" pos="0 0 0.35">
    <joint axis="0 -1 0" name="leg_joint" pos="0 0 1.045" range="-150 0" type="hinge"/>
    <geom friction="0.9" fromto="0 0 1.045 0 0 0.3" name="leg_geom" size="0.04" type="capsule"/>
    <body name="foot" pos="0.2 0 0">
      <joint axis="0 -1 0" name="foot_joint" pos="0 0 0.3" range="-45 45" type="hinge"/>
      <geom friction="0.9" fromto="-0.0 0 0.3 0.2 0 0.3" name="foot_geom" size="0.06" type="capsule"/>
    </body>
  </body>
</body>
\end{lstlisting}

{Ant morphology shift} : front ankle capsule size reduced by 20\%
\begin{lstlisting}[style=xmlbox]
<geom fromto="0.0 0.0 0.0 0.1 0.1 0.0" name="left_ankle_geom" size="0.08" type="capsule"/>
<geom fromto="0.0 0.0 0.0 -0.1 0.1 0.0" name="right_ankle_geom" size="0.08" type="capsule"/>
\end{lstlisting}
 
{HalfCheetah kinematic shift} : back thigh joint range reduced by approximately 99\%
\begin{lstlisting}[style=xmlbox]
<joint axis="0 1 0" damping="6" name="bthigh" pos="0 0 0" range="-.0052 .0105" stiffness="240" type="hinge"/>
\end{lstlisting}

{Hopper kinematic shift} : thigh joint range reduced by 90\%, foot joint range reduced by 60\%
\begin{lstlisting}[style=xmlbox]
<joint axis="0 -1 0" name="thigh_joint" pos="0 0 1.05" range="-0.15 0" type="hinge"/>
<joint axis="0 -1 0" name="foot_joint" pos="0 0 0.1" range="-18 18" type="hinge"/>
\end{lstlisting}

{Walker2d kinematic shift} : foot joint range reduced by 99\%
\begin{lstlisting}[style=xmlbox]
<joint axis="0 -1 0" name="foot_joint" pos="0 0 0.1" range="-0.45 0.45" type="hinge"/>
\end{lstlisting}

{Ant kinematic shift} : front hip joint ranges reduced by approximately 89\%
\begin{lstlisting}[style=xmlbox]
<joint axis="0 0 1" name="hip_1" pos="0.0 0.0 0.0" range="-0.3 0.3" type="hinge"/>
<joint axis="0 0 1" name="hip_2" pos="0.0 0.0 0.0" range="-0.3 0.3" type="hinge"/>
\end{lstlisting}

{Gravity shift} : gravity magnitude reduced by 50\%

\begin{lstlisting}[style=xmlbox]
<option gravity="0 0 -4.905" timestep="0.01"/>
\end{lstlisting}

{Adroit kinematic shift (medium)} : restricted index finger and thumb joints by 75\%
\begin{lstlisting}[style=xmlbox]
<joint name="FFJ3" pos="0 0 0" axis="0 1 0" range="-0.109 0.109" user="1103"/>
<joint name="FFJ2" pos="0 0 0" axis="1 0 0" range="0 0.39275" user="1102"/>
<joint name="THJ4" pos="0 0 0" axis="0 0 -1" range="-0.26175 0.26175" user="1121"/>
<joint name="THJ0" pos="0 0 0" axis="0 1 0" range="-0.3925 0" user="1117"/>
\end{lstlisting}

{Adroit kinematic shift (hard)} : restricted index finger and thumb joints by 87.5\%
\begin{lstlisting}[style=xmlbox]
<joint name="FFJ3" pos="0 0 0" axis="0 1 0" range="-0.0545 0.0545" user="1103"/>
<joint name="FFJ2" pos="0 0 0" axis="1 0 0" range="0 0.196375" user="1102"/>
<joint name="THJ4" pos="0 0 0" axis="0 0 -1" range="-0.130875 0.130875" user="1121"/>
<joint name="THJ0" pos="0 0 0" axis="0 1 0" range="-0.19625 0" user="1117"/>
\end{lstlisting}

{Adroit morphology shift (medium)} : finger phalanx half-lengths reduced to 50\%
\begin{lstlisting}[style=xmlbox]
<geom name="C_ffproximal" class="DC_Hand" size="0.01 0.005625" pos="0 0 0.005625" type="capsule"/>
<geom name="C_ffmiddle"   class="DC_Hand" size="0.00805 0.003125" pos="0 0 0.003125" type="capsule"/>
<geom name="C_ffdistal"   class="DC_Hand" size="0.00705 0.003" pos="0 0 0.003" type="capsule" condim="4"/>
\end{lstlisting}

{Adroit morphology shift (hard)} : finger phalanx half-lengths reduced to 25\%
\begin{lstlisting}[style=xmlbox]
<geom name="C_ffproximal" class="DC_Hand" size="0.01 0.0028125" pos="0 0 0.0028125" type="capsule"/>
<geom name="C_ffmiddle"   class="DC_Hand" size="0.00805 0.0015625" pos="0 0 0.0015625" type="capsule"/>
<geom name="C_ffdistal"   class="DC_Hand" size="0.00705 0.0015" pos="0 0 0.0015" type="capsule" condim="4"/>
\end{lstlisting}

\newpage

\subsection{Hyperparameter Setup}
\label{subsecapp:hyperparameter_setup}

We summarize the hyperparameters related to model training, data generation, and reinforcement learning in Table~\ref{table:shared_hyperparameters}, and task-specific hyperparameters in Table~\ref{table:algorithm_hyperparameters}. 

For the coverage coefficient \(\lambda_{\mathrm{cov}}\), we set it to 0.1 for HalfCheetah and Walker2d under gravity shifts, as the target datasets in these domains often exhibit lower quality or form narrower distributions, making restricted coverage expansion preferable for maintaining high generation fidelity. For the remaining MuJoCo environments, we set $\lambda_{\mathrm{cov}}=0.2$.
The policy regularization coefficient \(\beta\) is increased to 0.5 for Walker2d morphology shifts to enforce stronger adherence to the target behavior policy under large structural gaps. For the remaining MuJoCo environments, we use \(\beta=0.001\).
For TCE(SM), the mixture coefficient $\lambda_{\mathrm{mix}}$ is set to 0.9 in HalfCheetah gravity shifts to leverage source samples when low-quality target data compromises generation accuracy. For the remaining MuJoCo environments, we set $\lambda_{\mathrm{mix}}=0.1$ to prioritize target-aligned transitions while avoiding negative transfer from excessive source-data mixing.
For Adroit tasks, we use the default parameter setup determined from the MuJoCo experiments, i.e., $\lambda_{\mathrm{cov}}=0.2$, \(\beta=0.001\), and $\lambda_{\mathrm{mix}}=0.1$.

\begin{table*}[ht]
\centering
\caption{Shared hyperparameters for model training, sampling, and RL.}
\begin{tabular}{l|l|c}
\toprule
\textbf{Category} & \textbf{Parameter Name} & \textbf{Value} \\ 
\midrule
\multirow{8}{*}{Model Training} 
 & Learning rate & 1e-4 \\
 & Optimizer & Adam \\
 & Batch size & 128 \\
 & Training epochs for \(q_\theta^\mathrm{mix}\) & 10K \\
 & Training epochs for \(q_\theta^\mathrm{tran}\) & 5K \\
 & Training epochs for \(\mathrm{Inv}_\psi\), \(R_\psi\) & 1K \\
 & Noise schedule \(\alpha_{\mathrm{min}}\) & 0.1 \\
 & Noise schedule \(\alpha_{\mathrm{max}}\) & 20 \\ \hline
Sampling & Denoising steps \(K\) & 500 \\ \hline
\multirow{5}{*}{RL Training} 
 & Learning rate for Actor, Critic & 3e-4 \\
 & Optimizer for Actor, Critic & Adam \\
 & Batch size for target sample & 128 \\
 & Batch size for generated sample & 128 \\
 & Training steps & 1M \\ 
\bottomrule
\end{tabular}
\label{table:shared_hyperparameters}
\end{table*}

\begin{table*}[ht]
\centering
\caption{Task-specific hyperparameters across different environments.}
\begin{tabular}{c|l|c}
\toprule
\textbf{Parameter} & \textbf{Environment} & \textbf{Value} \\ 
\midrule
\multirow{2}{*}{$\lambda_{\mathrm{cov}}$} & HalfCheetah, Walker2d (gravity shift) & 0.1 \\ 
 & Other environments & 0.2 \\ \hline
\multirow{2}{*}{$\lambda_{\mathrm{mix}}$} & HalfCheetah (gravity shift) & 0.9 \\ 
 & Other environments & 0.1 \\ \hline
\multirow{2}{*}{$\beta$} & Walker2d (morphology shift)  & 0.5 \\ 
 & Other environments & 0.001 \\ 
\bottomrule
\end{tabular}
\label{table:algorithm_hyperparameters}
\end{table*}

\clearpage
\section{Additional Performance Comparison}
\label{secapp:perf_comp}

Section~\ref{subsecapp:kinematic_shift} presents the full kinematic-shift results omitted from the main paper for space reasons. In Section~\ref{subsecapp:gravity_shift}, we present performance comparisons in gravity shift environments not covered in the main paper. Section~\ref{subsecapp:extreme_shift} evaluates TCE under larger domain gaps using high-gravity settings from the ODRL benchmark~\citep{lyu2024odrl}. 
Finally, Section~\ref{subsecapp:meta_comp} reports comparative results against Meta-DT~\citep{wang2024meta}.

\subsection{Performance Comparison under Kinematic Shifts}
\label{subsecapp:kinematic_shift}

Table~\ref{table:perform_kinematic} reports the full results for kinematic shifts. TCE continues to outperform prior cross-domain offline RL baselines on nearly all tasks, with TCE(OG) achieving the best overall average score. TCE(SM) remains competitive and occasionally performs better in milder cases, consistent with the main-text observation that limited source mixing can help as the domain gap decreases.

\begin{table*}[ht]
\centering
\vspace{-0.07in}
\caption{Performance comparison in cross-domain offline RL under kinematic shifts.}
\vspace{-0.1in}
\resizebox{1\textwidth}{!}{
\scriptsize
\renewcommand{\arraystretch}{1.08}
\begin{tabular}{@{}S{0.98cm}l|cccccc|cc@{}}
\toprule
Src. & Tgt. & IQL* & DARA & BOSA & SRPO & IGDF & OTDF & TCE(OG) & TCE(SM) \\
\midrule
half-m   & m   & 12.3$\pm$1.2 & 10.6$\pm$1.2 &  8.3$\pm$1.2 & 16.8$\pm$4.2 & 23.6$\pm$5.7 & 40.2$\pm$0.0 & \textbf{42.2$\pm$2.4} & \underline{41.6$\pm$1.8} \\
half-m   & m-e & 10.8$\pm$1.9 & 12.9$\pm$2.8 &  8.7$\pm$1.3 & 10.3$\pm$2.7 &  9.8$\pm$2.4 & 10.1$\pm$4.0 & \textbf{40.7$\pm$2.7} & \underline{40.1$\pm$2.0} \\
half-m   & e   & {12.6$\pm$1.7} & 12.1$\pm$1.0 &  10.8$\pm$1.7 & {12.2$\pm$0.9} & \underline{12.8$\pm$0.7} &  8.7$\pm$2.0 & \textbf{21.3$\pm$0.8} & 6.7$\pm$2.1 \\
half-m-r & m   & 10.0$\pm$5.4 & 11.5$\pm$4.9 &  7.5$\pm$3.1 & 10.2$\pm$3.7 & 11.6$\pm$4.6 & 37.8$\pm$2.1 & \textbf{42.0$\pm$0.2} & \underline{41.7$\pm$1.8} \\
half-m-r & m-e &  6.5$\pm$3.1 &  9.2$\pm$4.7 &  6.6$\pm$1.7 &  9.5$\pm$1.8 &  8.6$\pm$2.3 &  9.7$\pm$2.0 & \textbf{41.6$\pm$0.5} & \underline{37.8$\pm$2.8} \\
half-m-r & e   & 13.6$\pm$1.4 & \underline{14.8$\pm$2.0} & 10.4$\pm$4.9 & \underline{14.8$\pm$2.2} & 13.9$\pm$2.2 &  7.2$\pm$1.4 & \textbf{23.4$\pm$1.5} & 4.3$\pm$0.3 \\
half-m-e & m   & 21.8$\pm$6.5 & 25.9$\pm$7.4 & 30.0$\pm$4.3 & 17.2$\pm$3.3 & 21.9$\pm$6.5 & 30.7$\pm$9.6 & \textbf{41.9$\pm$0.3} & \underline{41.7$\pm$0.1} \\
half-m-e & m-e &  7.6$\pm$1.4 &  9.5$\pm$4.2 &  6.8$\pm$2.9 &  9.6$\pm$2.4 &  8.9$\pm$3.3 & 10.9$\pm$4.2 & \textbf{40.0$\pm$2.5} & \underline{39.8$\pm$3.3} \\
half-m-e & e   &  9.1$\pm$2.4 & 10.4$\pm$1.3 &  4.9$\pm$3.2 & \textbf{11.2$\pm$1.0} & \underline{10.7$\pm$1.4} &  3.2$\pm$0.6 & \textbf{11.2$\pm$1.2} & 4.3$\pm$0.3 \\ \hline

hopp-m   & m   & 58.7$\pm$8.4  & 43.9$\pm$15.2 & 12.3$\pm$6.6 & 65.4$\pm$1.5 & 65.3$\pm$1.4 & 65.6$\pm$1.9 & \textbf{66.0$\pm$0.4} & \underline{65.7$\pm$0.5} \\
hopp-m   & m-e & {68.5$\pm$12.4} & 55.4$\pm$16.9 & 15.6$\pm$10.8 & 43.9$\pm$30.8 & 51.1$\pm$18.5 & 55.4$\pm$25.1 & \textbf{75.7$\pm$2.2} & \underline{70.3$\pm$10.0} \\
hopp-m   & e   & 79.9$\pm$35.5 & 83.7$\pm$19.6 & 14.8$\pm$5.5 & 53.1$\pm$39.8 & \underline{87.4$\pm$25.4} & 35.0$\pm$19.4 & \textbf{91.5$\pm$5.7} & {76.0$\pm$15.5} \\
hopp-m-r & m   & 36.0$\pm$0.1  & 39.4$\pm$7.2  &  3.2$\pm$2.6 & 36.1$\pm$0.2  & 35.9$\pm$2.4  & 35.5$\pm$12.2 & \underline{65.9$\pm$1.4} & \textbf{66.2$\pm$0.2} \\
hopp-m-r & m-e & 36.1$\pm$0.1  & 34.1$\pm$3.6  &  4.4$\pm$2.8 & 36.0$\pm$0.1  & 36.1$\pm$0.1  & 47.5$\pm$14.6 & \textbf{82.8$\pm$4.3} & \underline{65.1$\pm$7.9} \\
hopp-m-r & e   & 36.0$\pm$0.1  & 36.1$\pm$0.2  &  3.7$\pm$2.5 & 36.1$\pm$0.1  & 36.1$\pm$0.3  & 49.9$\pm$30.5 & \textbf{84.8$\pm$7.0} & \underline{72.3$\pm$16.1} \\
hopp-m-e & m   & \underline{66.0$\pm$0.5}  & 61.1$\pm$4.0  & 35.0$\pm$20.1 & 64.6$\pm$2.6  & 65.2$\pm$1.5  & 65.3$\pm$2.4  & \textbf{67.3$\pm$0.8} & \underline{66.0$\pm$0.1} \\
hopp-m-e & m-e & 45.1$\pm$15.7 & 61.9$\pm$16.9 &  13.9$\pm$4.9 & 54.7$\pm$17.0 & 62.9$\pm$15.6 & 38.6$\pm$15.9 & \textbf{74.7$\pm$8.9} & \underline{72.1$\pm$6.8} \\
hopp-m-e & e   & 44.9$\pm$19.8 & \textbf{84.2$\pm$21.1} & 12.0$\pm$4.3 & 57.6$\pm$40.6 & 52.8$\pm$39.7 & 29.9$\pm$11.3 & \underline{65.0$\pm$7.5} & {62.9$\pm$31.9} \\ \hline

walk-m   & m   & 34.3$\pm$9.8  & 35.2$\pm$22.5 & 14.3$\pm$11.2 & 39.0$\pm$6.7  & 41.9$\pm$11.2 & {49.6$\pm$18.0} & \textbf{55.3$\pm$2.0} & \underline{53.5$\pm$7.7}\\
walk-m   & m-e & 30.2$\pm$12.5 & \textbf{51.9$\pm$11.5} & 13.6$\pm$7.7  & 38.6$\pm$6.5  & 42.3$\pm$19.3 & \underline{43.5$\pm$16.4} & {35.9$\pm$10.2} & 18.1$\pm$10.2 \\
walk-m   & e   & 56.4$\pm$18.2 & 40.7$\pm$14.4 & 15.3$\pm$2.5  & {57.3$\pm$12.2} & \underline{60.4$\pm$17.5} & 46.7$\pm$13.6 & \textbf{62.8$\pm$1.7} & 38.7$\pm$15.3 \\
walk-m-r & m   & 11.5$\pm$7.1  & 12.5$\pm$4.3  & 1.9$\pm$2.1  & 14.3$\pm$3.1  & 22.2$\pm$5.2  & \underline{49.7$\pm$9.7} & \textbf{55.6$\pm$9.5} & 43.1$\pm$4.8 \\
walk-m-r & m-e &  9.7$\pm$3.8  & 11.2$\pm$5.0  &  4.6$\pm$3.0  &  4.2$\pm$5.1  &  7.6$\pm$4.9  & \textbf{55.9$\pm$17.1} & {40.9$\pm$10.0} & \underline{47.2$\pm$10.3} \\
walk-m-r & e   &  7.7$\pm$4.8  &  7.4$\pm$2.4  &  3.6$\pm$1.5  & 13.2$\pm$8.5  &  7.5$\pm$2.1  & \textbf{51.9$\pm$7.9} & \underline{49.9$\pm$7.4} & 38.5$\pm$9.3 \\
walk-m-e & m   & 41.8$\pm$8.8  & 38.1$\pm$14.4 & 21.4$\pm$8.3  & 36.9$\pm$4.3  & 41.2$\pm$13.0 & 44.6$\pm$6.0 & \underline{46.3$\pm$5.4} & \textbf{47.1$\pm$10.9} \\
walk-m-e & m-e & 22.2$\pm$8.7  & 23.6$\pm$8.1  & 15.9$\pm$4.1  & 23.2$\pm$7.9  & {28.1$\pm$4.0}  & 16.5$\pm$7.2 & \underline{33.6$\pm$4.3} & \textbf{38.3$\pm$10.5} \\ 
walk-m-e & e   & 26.3$\pm$10.4 & 36.0$\pm$9.2  & 18.5$\pm$3.6  & 40.9$\pm$9.6  & \underline{46.2$\pm$19.4} & {42.4$\pm$9.1} & \textbf{54.6$\pm$4.4} & 42.5$\pm$5.7 \\ \hline

ant-m    & m   & 50.0$\pm$5.6  & 42.3$\pm$7.6  & 20.9$\pm$2.6  & 50.5$\pm$6.7  & {54.5$\pm$1.3}  & \textbf{55.4$\pm$0.0} & {\underline{55.0$\pm$3.7}} & 54.9$\pm$5.7 \\
ant-m    & m-e & 57.8$\pm$7.2  & 54.1$\pm$3.8  & 31.7$\pm$7.0  & 54.9$\pm$1.3  & 54.5$\pm$4.6  & {60.7$\pm$3.6} & \textbf{65.1$\pm$6.4} & \underline{61.8$\pm$1.0} \\
ant-m    & e   & 59.6$\pm$18.5 & 54.2$\pm$11.3 & 45.4$\pm$8.6  & 45.5$\pm$9.3  & 49.4$\pm$14.6 & {90.4$\pm$4.8} & \underline{93.9$\pm$4.0} & \textbf{94.3$\pm$1.8} \\
ant-m-r  & m   & 43.7$\pm$4.6  & 42.0$\pm$5.4  & 19.0$\pm$1.8  & 45.3$\pm$5.1  & 41.4$\pm$5.0  & {52.8$\pm$4.4} & \textbf{55.6$\pm$9.1} & \underline{54.1$\pm$4.2} \\
ant-m-r  & m-e & 36.5$\pm$5.9  & 36.0$\pm$6.7  & 19.1$\pm$1.6  & 36.2$\pm$6.6  & 37.2$\pm$4.7  & 54.2$\pm$5.2 & \textbf{65.6$\pm$5.4} & \underline{61.1$\pm$1.0} \\
ant-m-r  & e   & 24.4$\pm$4.8  & 22.1$\pm$0.4  & 19.5$\pm$0.8  & 27.1$\pm$3.7  & 24.3$\pm$2.8  & 74.7$\pm$10.5 & \underline{92.3$\pm$3.7} & \textbf{93.4$\pm$4.5} \\
ant-m-e  & m   & 49.5$\pm$4.1  & 44.7$\pm$4.3  & 19.0$\pm$8.0  & 41.3$\pm$8.1  & 41.8$\pm$8.8  & 50.2$\pm$4.3 & \textbf{57.2$\pm$2.7} & \underline{53.1$\pm$4.6} \\
ant-m-e  & m-e & 37.2$\pm$2.0  & 33.3$\pm$7.0  & 6.4$\pm$2.5  & 38.2$\pm$8.0  & 41.5$\pm$4.9  & 48.8$\pm$2.7 & \textbf{69.0$\pm$1.8} & \underline{59.4$\pm$10.5} \\
ant-m-e  & e   & 18.7$\pm$8.1  & 17.8$\pm$23.6 & 14.5$\pm$9.0 & 35.2$\pm$15.5 & 14.4$\pm$22.9 & 78.4$\pm$12.2 & \textbf{97.3$\pm$1.3} & \underline{93.7$\pm$3.0} \\
\midrule
\multicolumn{2}{c|}{\textbf{Total Score}} & 1193.0 & 1219.8 & 513.5 & 1195.7 & 1271.0 & 1547.6 & \textbf{2063.9} & \underline{1877.4} \\
\bottomrule
\end{tabular}}
\label{table:perform_kinematic}
\vspace{-0.15in}
\end{table*}

\vspace{-0.1in}

\clearpage
\subsection{Performance Comparison under Gravity Shifts}
\label{subsecapp:gravity_shift}

Table~\ref{table:perform_gravity} reports the results for gravity shifts. All TCE variants outperform the baselines on most tasks, and TCE(SM) achieves the best overall average return. In HalfCheetah, where the target data is especially low-quality, the larger mixing ratio $(\lambda_{\mathrm{mix}}=0.9)$ improves stability and performance. Overall, the results support the same trend as in the main text: source mixing becomes more helpful when the shift is relatively mild.

\begin{table*}[ht]
\centering
\caption{Performance comparison under gravity shifts.}
\resizebox{1\textwidth}{!}{
\scriptsize
\renewcommand{\arraystretch}{1.08}
\begin{tabular}{@{}S{0.98cm}l|cccccc|cc@{}}
\toprule
Src. & Tgt. & IQL* & DARA & BOSA & SRPO & IGDF & OTDF & TCE(OG) & TCE(SM) \\
\midrule
half-m   & m    & 39.6$\pm$3.3 & \textbf{41.2$\pm$3.9} & 38.9$\pm$4.0 & 36.9$\pm$4.5 & 36.6$\pm$5.5 & \underline{40.7$\pm$7.7} & 4.1$\pm$0.4 & \underline{40.7$\pm$1.2} \\  
half-m   & m-e  & 39.6$\pm$3.7 & \textbf{40.7$\pm$2.8} & \underline{40.4$\pm$3.0} & \textbf{40.7$\pm$2.3} & 38.7$\pm$6.2 & 28.6$\pm$3.2 & 3.1$\pm$0.3 & {39.2$\pm$0.5} \\
half-m   & e    & \textbf{42.4$\pm$3.8} & 39.8$\pm$4.4 & {40.5$\pm$3.9}  & 39.4$\pm$1.6 & 39.6$\pm$4.6 & 36.1$\pm$5.3 & 14.8$\pm$3.9 & \underline{41.1$\pm$6.1} \\ 
half-m-r & m    & \underline{20.1$\pm$5.0} & 17.6$\pm$6.2 & 20.0$\pm$4.9 & 17.5$\pm$5.2 & 14.4$\pm$2.2 & \textbf{21.5$\pm$6.5} & 4.0$\pm$1.8 & 18.1$\pm$7.2 \\  
half-m-r & m-e  & 17.2$\pm$1.6 & \underline{20.2$\pm$5.2} & 16.7$\pm$4.2  & 16.3$\pm$1.7 & 10.0$\pm$2.5 & 14.7$\pm$4.1 & 2.9$\pm$0.1 & \textbf{20.3$\pm$5.1} \\  
half-m-r & e    & 20.7$\pm$5.5 & \underline{22.4$\pm$1.7} & 15.4$\pm$4.2 & \textbf{23.1$\pm$4.0} & 15.3$\pm$3.7 & 11.4$\pm$1.9 & 7.9$\pm$1.8 & 15.5$\pm$1.3 \\  
half-m-e & m    & 38.6$\pm$6.0 & 37.8$\pm$3.3 & \underline{41.8$\pm$5.1}  & \textbf{42.5$\pm$2.3} & 37.7$\pm$7.3 & 39.5$\pm$3.5 & 2.5$\pm$0.9 & 39.4$\pm$2.7 \\  
half-m-e & m-e  & 39.6$\pm$3.0 & 39.4$\pm$4.4 & 38.7$\pm$2.7  & \textbf{43.3$\pm$2.7} & 40.7$\pm$3.2 & 32.4$\pm$5.5 & 2.8$\pm$0.1 & \underline{41.3$\pm$4.5} \\  
half-m-e & e    & \underline{43.4$\pm$0.9} & \textbf{45.3$\pm$1.3} & 39.9$\pm$2.7 & 43.3$\pm$3.0 & 41.1$\pm$4.1 & 26.5$\pm$9.1 & 8.5$\pm$2.6 & {41.7$\pm$1.4} \\ \hline  

hopp-m   & m    & 11.2$\pm$1.1 & 17.3$\pm$3.8 & 15.2$\pm$3.3  & 12.4$\pm$1.0 & 15.3$\pm$3.5 & 32.4$\pm$8.0 & \textbf{61.9$\pm$2.7} & \underline{52.5$\pm$4.7} \\
hopp-m   & m-e  & 14.7$\pm$3.6 & 15.4$\pm$2.5 & 21.1$\pm$9.3  & 14.2$\pm$1.8 & 15.1$\pm$3.6 & 24.2$\pm$3.6 & \textbf{46.2$\pm$2.6} & \underline{35.0$\pm$7.9} \\
hopp-m   & e    & 12.5$\pm$1.6 & 19.3$\pm$10.5 & 12.7$\pm$1.7 & 11.8$\pm$0.9 & 14.4$\pm$0.8 & \underline{33.7$\pm$7.8} & \textbf{53.4$\pm$2.2} & {18.9$\pm$10.4} \\
hopp-m-r & m    & 13.9$\pm$2.9 & 10.7$\pm$4.3 & 3.3$\pm$1.9  & 14.0$\pm$2.6 & 15.3$\pm$4.4 & 31.1$\pm$13.4 & \textbf{60.7$\pm$2.7} & \underline{58.3$\pm$0.8} \\
hopp-m-r & m-e  & 13.3$\pm$6.3 & 12.5$\pm$5.6 & 4.6$\pm$1.7  & 14.4$\pm$4.2 & 15.4$\pm$5.5 & \underline{24.2$\pm$6.1} & \underline{24.2$\pm$3.1} & \textbf{32.7$\pm$10.4} \\
hopp-m-r & e    & 11.0$\pm$2.6 & 14.3$\pm$6.0 & 3.2$\pm$0.8 & 16.4$\pm$5.0 & 16.1$\pm$4.0 & {31.0$\pm$9.8} & \textbf{73.4$\pm$7.8} & \underline{55.7$\pm$15.0} \\
hopp-m-e & m    & 19.1$\pm$6.6 & 18.5$\pm$12.3 & 15.9$\pm$5.9  & 19.7$\pm$8.5 & 22.3$\pm$5.4 & 26.4$\pm$10.1 & \textbf{61.3$\pm$5.6} & \underline{54.0$\pm$5.8} \\
hopp-m-e & m-e  & 16.8$\pm$2.7 & 16.0$\pm$6.1 & 17.3$\pm$2.5  & 15.8$\pm$3.3 & 16.6$\pm$7.7 & 28.3$\pm$6.7 & \textbf{45.5$\pm$9.8} & \underline{33.4$\pm$7.2} \\
hopp-m-e & e    & 20.9$\pm$4.1 & 23.9$\pm$14.8 & 23.2$\pm$7.9  & 21.4$\pm$1.9 & 26.0$\pm$9.2 & {44.9$\pm$10.6} & \textbf{66.3$\pm$5.8} & \underline{55.8$\pm$8.3} \\ \hline

walk-m   & m    & 28.1$\pm$12.9 & 28.4$\pm$13.7 & \textbf{38.0$\pm$11.2} & 21.4$\pm$7.0 & 22.1$\pm$8.4 & \underline{36.6$\pm$2.3} & {36.3$\pm$0.9} & \underline{36.6$\pm$4.1} \\  
walk-m   & m-e  & 35.7$\pm$4.7 & 30.7$\pm$9.7  & \underline{40.9$\pm$7.2} & 34.0$\pm$9.9 & 35.4$\pm$9.1 & \textbf{44.8$\pm$7.5} & 20.6$\pm$6.4 & \underline{40.9$\pm$6.0} \\ 
walk-m   & e    & 37.3$\pm$8.0 & 36.0$\pm$7.0  & {41.3$\pm$8.6} & 39.5$\pm$3.8 & 36.2$\pm$13.6 & \underline{44.0$\pm$4.0} & 20.3$\pm$1.6 & \textbf{46.9$\pm$1.6} \\  
walk-m-r & m    & 14.6$\pm$2.5 & 14.1$\pm$6.1  & 7.6$\pm$5.8 & 17.9$\pm$3.8 & 11.6$\pm$4.6 & \underline{32.7$\pm$7.0} & \textbf{35.8$\pm$0.8} & {20.3$\pm$2.8} \\  
walk-m-r & m-e  & 15.3$\pm$1.9 & 15.9$\pm$5.8  & 4.8$\pm$5.8 & 15.3$\pm$4.5 & 13.9$\pm$6.5 & \textbf{31.6$\pm$6.1} & \underline{27.4$\pm$4.7} & 18.2$\pm$1.6 \\  
walk-m-r & e    & 15.8$\pm$7.2 & 15.7$\pm$4.5  & 7.1$\pm$4.6 & 13.7$\pm$8.1 & 15.2$\pm$5.3 & \textbf{31.3$\pm$5.3} & \underline{21.5$\pm$2.9} & {15.6$\pm$3.0} \\  
walk-m-e & m    & 39.9$\pm$13.1 & {41.6$\pm$13.0} & 32.3$\pm$7.2 & \textbf{46.4$\pm$3.5} & 33.8$\pm$3.1 & 30.2$\pm$9.8 & 33.0$\pm$2.1 & \underline{43.2$\pm$2.6} \\  
walk-m-e & m-e  & \underline{49.1$\pm$6.9} & 45.8$\pm$9.4  & 40.1$\pm$4.5 & 36.4$\pm$3.4 & 44.7$\pm$2.9 & \textbf{53.3$\pm$7.1} & 28.6$\pm$2.2 & {45.6$\pm$1.7} \\  
walk-m-e & e    & 40.4$\pm$11.9 & {56.4$\pm$3.5}  & 43.7$\pm$4.4 & 45.8$\pm$8.0 & 45.3$\pm$10.4 & \underline{61.1$\pm$3.4} & 31.6$\pm$4.9 & \textbf{65.2$\pm$10.7} \\ \hline 

ant-m    & m    & 10.2$\pm$1.8 &  9.4$\pm$0.9 & 12.4$\pm$2.0 & 11.7$\pm$1.0 & 11.3$\pm$1.3 & 45.1$\pm$12.4 & \textbf{57.0$\pm$6.1} & \underline{56.4$\pm$5.1} \\
ant-m    & m-e  &  9.4$\pm$1.2 & 10.0$\pm$0.9 & 11.6$\pm$1.3 & 10.2$\pm$1.2 &  9.4$\pm$1.4 & 33.9$\pm$5.4  & \textbf{52.9$\pm$5.5} & \underline{50.7$\pm$9.1} \\
ant-m    & e    & 10.2$\pm$0.3 &  9.8$\pm$0.6 &  11.8$\pm$0.4 &  9.5$\pm$0.6 &  9.7$\pm$1.6 & 33.2$\pm$9.0  & \underline{54.1$\pm$1.9} & \textbf{55.7$\pm$6.4} \\
ant-m-r  & m    & 18.9$\pm$2.6 & 21.7$\pm$2.1 & 13.9$\pm$1.5 & 18.7$\pm$1.7 & 19.6$\pm$1.0 & 29.6$\pm$10.7 & \underline{53.3$\pm$12.0} & \textbf{57.2$\pm$8.2} \\
ant-m-r  & m-e  & 19.1$\pm$3.0 & 18.3$\pm$2.1 & 15.9$\pm$2.7 & 18.7$\pm$1.8 & 20.3$\pm$1.6 & 25.4$\pm$2.1  & \underline{42.4$\pm$10.9} & \textbf{44.0$\pm$11.6} \\
ant-m-r  & e    & 18.5$\pm$0.9 & 20.0$\pm$1.3 & 14.5$\pm$1.7 & 19.9$\pm$2.1 & 18.8$\pm$2.1 & 24.5$\pm$2.8  & \textbf{50.1$\pm$7.9} & \underline{40.5$\pm$13.9} \\
ant-m-e  & m    &  9.8$\pm$2.4 &  8.1$\pm$1.8 &  8.1$\pm$3.0 &  8.4$\pm$2.1 &  8.9$\pm$1.5 & 18.6$\pm$11.9 & \textbf{58.0$\pm$11.7} & \underline{53.8$\pm$10.4} \\
ant-m-e  & m-e  &  9.0$\pm$0.8 &  6.4$\pm$1.4 &  6.2$\pm$1.5 &  6.1$\pm$3.5 &  7.2$\pm$2.9 & 34.0$\pm$9.4  & \textbf{46.0$\pm$10.3} & \underline{43.8$\pm$6.1} \\
ant-m-e  & e    &  9.1$\pm$2.6 & 10.4$\pm$2.9 &  4.2$\pm$3.9 &  8.8$\pm$1.0 &  9.2$\pm$1.5 & 23.2$\pm$2.9  & \textbf{58.0$\pm$3.7} & \underline{45.7$\pm$18.9} \\
\midrule
\multicolumn{2}{c|}{\textbf{Total Score}} & 825.0 & 851.0 & 763.2 & 825.5 & 803.6 & 1160.7 & \underline{1270.4} & \textbf{1473.9} \\
\bottomrule
\end{tabular}
}
\label{table:perform_gravity}
\end{table*}

\clearpage
\newpage

\subsection{Performance Comparison under Extreme Domain Shifts}
\label{subsecapp:extreme_shift}

To investigate the robustness of our method under extreme dynamics mismatches, we conduct experiments on Ant tasks with a severe gravity shift, increasing gravitational acceleration by a factor of 5 (from $-9.81~\mathrm{m/s^2}$ to $-49.05~\mathrm{m/s^2}$), following the protocol of ODRL~\citep{lyu2024odrl}. Such drastic alteration in physical parameters exacerbates the domain gap, rendering the source policy nearly ineffective in the target environment. 
As summarized in Table~\ref{table:gravity_shift_odrl}, TCE(OG) significantly outperforms both the baselines and TCE(SM) across all dataset configurations. Crucially, we observe a distinct shift in the efficacy of source mixing compared to the standard gravity setting reported in Table~\ref{table:perform_gravity}: while TCE(SM) typically benefits from retrieving source transitions when the gap is mild, TCE(OG) proves superior in this extreme regime. This result corroborates our analysis that when the dynamics gap is substantial, relying on source data can induce negative transfer due to large kinematic discrepancies. Consequently, TCE(OG) achieves robust performance by strictly utilizing target-consistent generated transitions, thereby effectively mitigating the impact of severe physical perturbations.

\begin{table*}[ht]
\centering
\caption{Performance comparison on extreme gravity-shift tasks.}
\begin{tabular}{@{}p{1.2cm}l|cc|cc}
\toprule
\textbf{Src.} & \textbf{Tgt.} & \textbf{IQL*} & \textbf{OTDF} & \textbf{TCE(OG)} & \textbf{TCE(SM)} \\
\midrule
ant-m & m   & 31.9$\pm$0.2 & {34.5$\pm$0.2} & \textbf{70.8$\pm$0.5} & \underline{64.7$\pm$2.8} \\
ant-m & e   & 31.3$\pm$0.3 & {38.2$\pm$3.1} & \textbf{86.9$\pm$0.8} & \underline{49.3$\pm$8.9} \\
ant-m-r & m   & 18.6$\pm$0.2 & {24.8$\pm$0.4} & \textbf{44.3$\pm$1.6} & \underline{41.1$\pm$6.5} \\
ant-m-r & e   & 18.6$\pm$0.1  & {23.1$\pm$1.2} & \textbf{70.9$\pm$5.5} & \underline{23.9$\pm$1.8} \\
ant-m-e & m   & 30.1$\pm$0.0 & {35.0$\pm$0.9} & \textbf{70.1$\pm$1.2} & \underline{68.6$\pm$5.7} \\
ant-m-e & e   & 31.6$\pm$0.1 & {33.7$\pm$1.1} & \textbf{82.5$\pm$0.4} & \underline{56.0$\pm$6.7} \\ 
\midrule
\multicolumn{2}{c|}{\textbf{Total Score}} & 162.1 & {189.3} & \textbf{425.5} & \underline{303.6} \\
\bottomrule
\end{tabular}
\label{table:gravity_shift_odrl}
\end{table*}

\clearpage
\newpage

\subsection{Comparison with Additional Baselines}
\label{subsecapp:meta_comp}
We additionally compare TCE against Meta-DT*~\citep{wang2024meta} and DmC~\citep{dmc}. Meta-DT* is a representative meta-offline RL approach adapted for our setting by training on the union of source and target datasets, while DmC is a recent diffusion-based method that generates additional source-domain data. As shown in Table~\ref{table:perform_kinematic_MetaDTcomparison}, Meta-DT* significantly underperforms compared to TCE and even falls short of IQL* in many tasks. This performance gap highlights a fundamental difference in objectives: meta-offline RL aims to generalize across diverse tasks, whereas cross-domain offline RL focuses on bridging the specific distribution shift between source and target domains. Without a mechanism to explicitly align domains or selectively transfer relevant source knowledge, meta-learning approaches struggle to overcome large domain gaps in a zero-shot manner, confirming the necessity of our targeted domain adaptation strategy. Meanwhile, TCE generally outperforms DmC across a majority of tasks, attributable to our strategy of directly synthesizing target-aligned transitions rather than source-like data, thereby reducing the domain gap more effectively. We note that since the official code for DmC is unavailable, we report the performance directly from the original paper.

\begin{table*}[h]
\centering
\caption{Performance comparison of TCE with Meta-DT* and DmC under kinematic shifts}
\vspace{-0.05in}
\small
\resizebox{\textwidth}{!}{%
\begin{tabular}{@{}S{1.5cm}l|cccc|cc@{}}
\toprule
Src. & Tgt.    & IQL*           & Meta-DT*       & DmC            & OTDF           & TCE(OG)            & TCE(SM) \\
\midrule
half-m   & m   & 12.3$\pm$1.2 & 13.4 $\pm$ 3.4 & 38.5$\pm$1.4 & 40.2$\pm$0.0 & \textbf{42.2$\pm$2.4} & \underline{41.6$\pm$1.8} \\
half-m   & m-e & 10.8$\pm$1.9 & 8.5$\pm$0.6  & 19.1$\pm$1.0 & 10.1$\pm$4.0 & \textbf{40.7$\pm$2.7} & \underline{40.1$\pm$2.0} \\
half-m   & e   & 12.6$\pm$1.7 & 5.0$\pm$0.1  & \underline{13.1$\pm$0.8} &  8.7$\pm$2.0 & \textbf{21.3$\pm$0.8} & 6.7$\pm$2.1 \\
half-m-r & m   & 10.0$\pm$5.4 & 5.5$\pm$0.7  & 19.5$\pm$1.8 & 37.8$\pm$2.1 & \textbf{42.0$\pm$0.2} & \underline{41.7$\pm$1.8} \\
half-m-r & m-e &  6.5$\pm$3.1 & 7.5$\pm$1.1  & 11.4$\pm$2.1 &  9.7$\pm$2.0 & \textbf{41.6$\pm$0.5} & \underline{37.8$\pm$2.8} \\
half-m-r & e   & 13.6$\pm$1.4 & 6.4$\pm$2.8  & \underline{15.6$\pm$2.9} &  7.2$\pm$1.4 & \textbf{23.4$\pm$1.5} & 4.3$\pm$0.3 \\
half-m-e & m   & 21.8$\pm$6.5 & 4.7$\pm$0.2  & 38.4$\pm$1.4 & 30.7$\pm$9.6 & \textbf{41.9$\pm$0.3} & \underline{41.7$\pm$0.1} \\
half-m-e & m-e &  7.6$\pm$1.4 & 7.7$\pm$3.2  & 24.1$\pm$4.6 & 10.9$\pm$4.2 & \textbf{40.0$\pm$2.5} & \underline{39.8$\pm$3.3} \\
half-m-e & e   &  9.1$\pm$2.4 & 2.8$\pm$0.2  & \textbf{13.4$\pm$2.0} &  3.2$\pm$0.6 & \underline{11.2$\pm$1.2} & 4.3$\pm$0.3 \\ \hline

hopp-m   & m   & 58.7$\pm$8.4  & 5.0$\pm$0.2  & \textbf{69.8$\pm$2.3} & 65.6$\pm$1.9 & \underline{66.0$\pm$0.4} & 65.7$\pm$0.5 \\
hopp-m   & m-e & 68.5$\pm$12.4 & 4.9$\pm$0.2  & \textbf{78.2$\pm$5.1} & 55.4$\pm$25.1 & \underline{75.7$\pm$2.2} & 70.3$\pm$10.0 \\
hopp-m   & e   & \underline{79.9$\pm$35.5} & 5.2 $\pm$ 0.2 & 59.8$\pm$21.8 & 35.0$\pm$19.4 & \textbf{91.5$\pm$5.7} & 76.0$\pm$15.5 \\
hopp-m-r & m   & 36.0$\pm$0.1  & 36.4$\pm$ 0.2 & 64.8$\pm$2.4 & 35.5$\pm$12.2 & \underline{65.9$\pm$1.4} & \textbf{66.2$\pm$0.2} \\
hopp-m-r & m-e & 36.1$\pm$0.1  & 36.3$\pm$ 0.1 & \underline{69.7$\pm$7.5} & 47.5$\pm$14.6 & \textbf{82.8$\pm$4.3} & 65.1$\pm$7.9 \\
hopp-m-r & e   & 36.0$\pm$0.1  & 37.6$\pm$ 0.1 & 69.9$\pm$18.0 & 49.9$\pm$30.5 & \textbf{84.8$\pm$7.0} & \underline{72.3$\pm$16.1} \\
hopp-m-e & m   & 66.0$\pm$0.5  & 3.5$\pm$ 0.4 & \textbf{69.6$\pm$1.3} & 65.3$\pm$2.4  & \underline{67.3$\pm$0.8} & 66.0$\pm$0.1 \\
hopp-m-e & m-e & 45.1$\pm$15.7 & 8.5$\pm$ 2.3 & \textbf{75.5$\pm$9.6} & 38.6$\pm$15.9 & \underline{74.7$\pm$8.9} & 72.1$\pm$6.8 \\
hopp-m-e & e   & 44.9$\pm$19.8 & 6.4$\pm$0.2  & \underline{64.5$\pm$24.2} & 29.9$\pm$11.3 & \textbf{65.0$\pm$7.5} & 62.9$\pm$31.9 \\ \hline

walk-m   & m   & 34.3$\pm$9.8  & 5.0$\pm$0.3  & \textbf{63.2$\pm$4.2} & 49.6$\pm$18.0 & \underline{55.3$\pm$2.0} & 53.5$\pm$7.7 \\
walk-m   & m-e & 30.2$\pm$12.5 & 15.7$\pm$2.0 & \textbf{53.5$\pm$7.0} & \underline{43.5$\pm$16.4} & 35.9$\pm$10.2 & 18.1$\pm$10.2 \\
walk-m   & e   & 56.4$\pm$18.2 & 10.0$\pm$0.9 & \textbf{70.5$\pm$12.0} & 46.7$\pm$13.6  & \underline{62.8$\pm$1.7} & 38.7$\pm$15.3 \\
walk-m-r & m   & 11.5$\pm$7.1  & 3.4$\pm$1.0  & \underline{52.9$\pm$8.4} & 49.7$\pm$9.7   & \textbf{55.6$\pm$9.5} & 43.1$\pm$4.8 \\
walk-m-r & m-e &  9.7$\pm$3.8  & 14.6$\pm$0.1 & 36.4$\pm$5.4 & \textbf{55.9$\pm$17.1}  & 40.9$\pm$10.0 & \underline{47.2$\pm$10.3} \\
walk-m-r & e   &  7.7$\pm$4.8  & 8.9$\pm$1.0  & 44.4$\pm$8.5 & \textbf{51.9$\pm$7.9}  & \underline{49.9$\pm$7.4} & 38.5$\pm$9.3 \\
walk-m-e & m   & 41.8$\pm$8.8  & 8.5$\pm$0.8  & \textbf{59.4$\pm$6.8} & 44.6$\pm$6.0   & 46.3$\pm$5.4 & \underline{47.1$\pm$10.9} \\
walk-m-e & m-e & 22.2$\pm$8.7  & 10.2$\pm$8.7 & \textbf{53.2$\pm$7.3} & 16.5$\pm$7.2   & 33.6$\pm$4.3 & \underline{38.3$\pm$10.5} \\ 
walk-m-e & e   & 26.3$\pm$10.4 & 5.7$\pm$2.6  & \textbf{69.2$\pm$7.0} & 42.4$\pm$9.1 & \underline{54.6$\pm$4.4} & 42.5$\pm$5.7 \\ \hline

ant-m    & m   & 50.0$\pm$5.6  & 13.9$\pm$0.7 & \textbf{62.1$\pm$0.6} & \underline{55.4$\pm$0.0} & 55.0$\pm$3.7 & 54.9$\pm$5.7 \\
ant-m    & m-e & 57.8$\pm$7.2  & 14.8$\pm$0.3 & \textbf{68.9$\pm$1.0} & 60.7$\pm$3.6 & \underline{65.1$\pm$6.4} & 61.8$\pm$1.0 \\
ant-m    & e   & 59.6$\pm$18.5 & 14.9$\pm$0.1 & 92.1$\pm$3.5 & 90.4$\pm$4.8 & \underline{93.9$\pm$4.0} & \textbf{94.3$\pm$1.8} \\
ant-m-r  & m   & 43.7$\pm$4.6  & 22.3$\pm$0.2 & \textbf{61.9$\pm$0.5} & 52.8$\pm$4.4 & \underline{55.6$\pm$9.1} & 54.1$\pm$4.2 \\
ant-m-r  & m-e & 36.5$\pm$5.9  & 21.0$\pm$0.1 & 58.8$\pm$3.6 & 54.2$\pm$5.2 & \textbf{65.6$\pm$5.4} & \underline{61.1$\pm$1.0} \\
ant-m-r  & e   & 24.4$\pm$4.8  & 26.8$\pm$2.2 & 43.8$\pm$2.6 & 74.7$\pm$10.5 & \underline{92.3$\pm$3.7} & \textbf{93.4$\pm$4.5} \\
ant-m-e  & m   & 49.5$\pm$4.1  & 13.5$\pm$0.1 & \textbf{60.6$\pm$1.3} & 50.2$\pm$4.3 & \underline{57.2$\pm$2.7} & 53.1$\pm$4.6 \\
ant-m-e  & m-e & 37.2$\pm$2.0  & 13.6$\pm$0.1 & \underline{60.4$\pm$3.7} & 48.8$\pm$2.7 & \textbf{69.0$\pm$1.8} & 59.4$\pm$10.5 \\
ant-m-e  & e   & 18.7$\pm$8.1  & 18.1$\pm$1.2 & 76.0$\pm$4.1 & 78.4$\pm$12.2 & \textbf{97.3$\pm$1.3} & \underline{93.7$\pm$3.0} \\
\midrule
\multicolumn{2}{c|}{\textbf{Total Score}} & 1193.0 & 446.2 & \underline{1902.2} & 1547.6 & \textbf{2063.9} & 1877.4 \\
\bottomrule
\end{tabular}
}
\label{table:perform_kinematic_MetaDTcomparison}
\end{table*}

\clearpage
\newpage

\section{Error Analysis on Additional Environments}
\label{secapp:analy_ymax}

To further demonstrate the robustness of our method, we analyze the quality of generated samples on HalfCheetah and Hopper environments (morphology shift, \texttt{medium-to-expert}) with respect to the coverage bound $\lambda_{\mathrm{cov}}$. We evaluate the approximation fidelity of the accepted transitions against ground-truth dynamics using three metrics: action reconstruction error, reward prediction error, and transition dynamics error.

\paragraph{HalfCheetah.}
A trend similar to that reported in the error analysis of Section~\ref{sec:exp} is also found in HalfCheetah tasks, as shown in Fig.~\ref{fig:cov_halfcheetah}. Although the overall error scale remains relatively small compared to the Ant environment, we confirm a consistent pattern where errors accumulate as coverage expands. Fig.~\ref{fig:cov_halfcheetah}(a)--(c) illustrate this monotonic increase in action, reward, and transition errors with higher $\lambda_{\mathrm{cov}}$. These results further validate the importance of restricting $\lambda_{\mathrm{cov}}$ to a reliable range to ensure that the augmented data remains physically plausible.

\begin{figure}[ht]
    \begin{center}
    \centerline{\includegraphics[width=0.8\columnwidth]{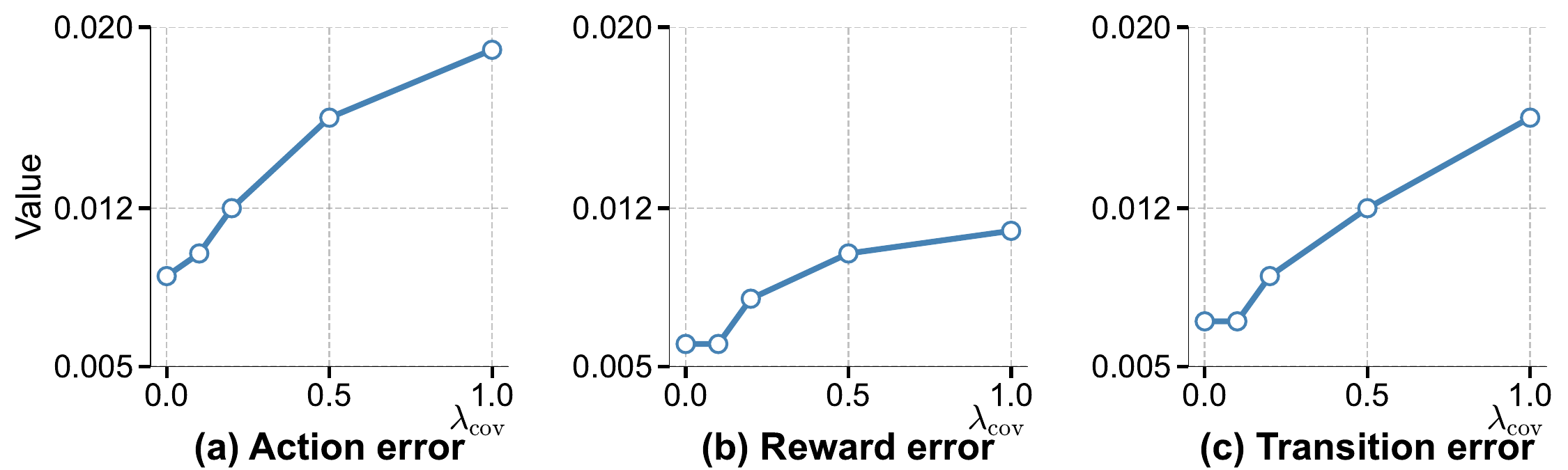}}
    \caption{Sample reliability with respect to $\lambda_{\mathrm{cov}}$ in HalfCheetah morphology shifts.}
    \label{fig:cov_halfcheetah}
    \end{center}
\end{figure}

\paragraph{Hopper.}

Fig.~\ref{fig:ymax_outlier_hopper} illustrates the error trends in Hopper tasks as the coverage bound relaxes. We observe that Hopper exhibits a notably drastic increase in approximation errors as coverage expands. Specifically, Fig.~\ref{fig:ymax_outlier_hopper}(a) demonstrates a significant spike in action reconstruction error at higher $\lambda_{\mathrm{cov}}$, while Fig.~\ref{fig:ymax_outlier_hopper}(b)--(c) confirm a correspondingly steep rise in reward and transition errors. This acute sensitivity underscores that, given a substantial domain gap, restricting $\lambda_{\mathrm{cov}}$ to a low range is critical to prevent the inclusion of unreliable transitions that could destabilize learning.

\begin{figure}[ht]
    \begin{center}
    \centerline{\includegraphics[width=0.8\columnwidth]{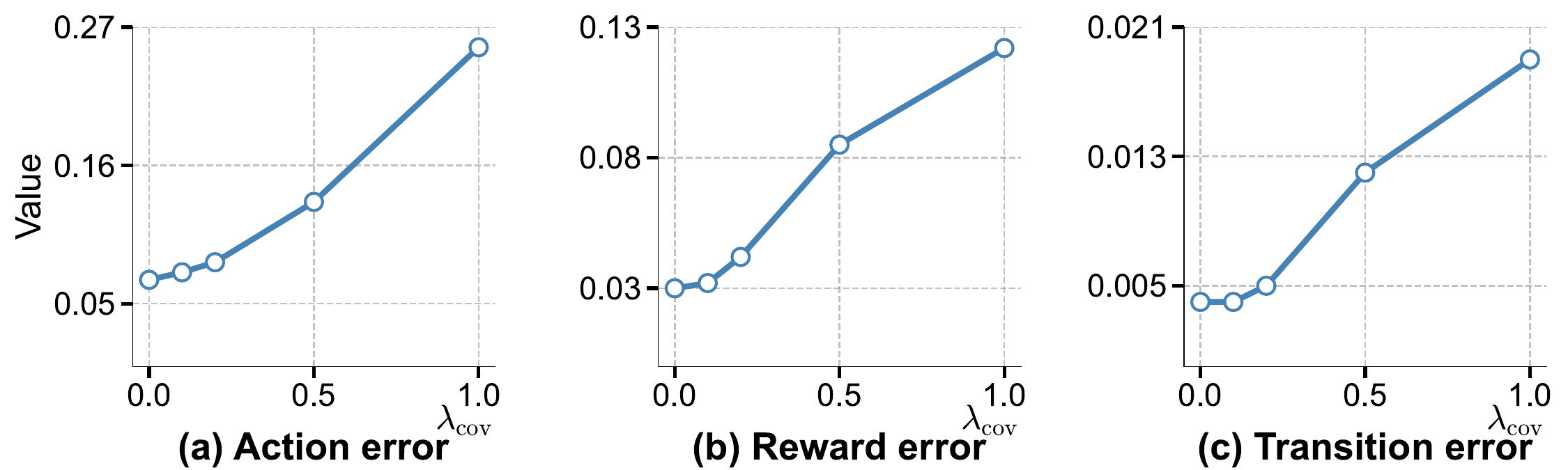}} 
    \caption{Sample reliability with respect to $\lambda_{\mathrm{cov}}$ in Hopper morphology shifts.}
    \label{fig:ymax_outlier_hopper}
    \end{center}
\end{figure}

\clearpage
\newpage

\section{Computational Complexity}
\label{secapp:compu}

This section compares the computational resources required by IQL* \citep{iql}, OTDF \citep{lyu2025cross}, and TCE on morphology shift (\texttt{medium-to-expert}) tasks. Experiments were conducted on a server with AMD EPYC 7513 CPUs and NVIDIA RTX 3090 GPUs. Note that TCE(OG) and TCE(SM) share identical costs due to their shared architecture. While all methods require approximately three hours for the common offline RL phase, OTDF adds roughly 18 minutes for data selection. TCE incurs approximately 48 minutes for score network training and 16 minutes for sampling; these costs remain consistent across tasks, with only marginal increases for higher-dimensional environments like Ant. Crucially, because the score modeling, sampling, and RL phases are executed sequentially with intermediate memory deallocation, TCE maintains a constant peak GPU memory usage of roughly 2 GB, imposing marginal overhead compared to the baseline. As summarized in Table~\ref{table:runtime_comparison_all}, the modest computational cost is justified by the substantial performance gains.

\begin{table}[ht]
\centering
\caption{Comprehensive runtime and memory usage comparison averaged over 36 morphology shift tasks. Note that GPU memory usage represents the peak allocation during the entire pipeline.}
\begin{tabular}{l|c|c|c|c}
\toprule
\textbf{Method} & \textbf{Model Training} & \textbf{Data Sampling} & \textbf{Offline RL} & \textbf{Peak GPU Memory} \\ 
\midrule
IQL* & -- & -- & 3h & 2 GB \\
OTDF & -- & 18 min & 3h & 2 GB \\
TCE & 48m & 16m & 3h & 2 GB \\
\bottomrule
\end{tabular}
\label{table:runtime_comparison_all}
\end{table}

\section{Additional Analysis}
\label{secapp:addabl}

This section presents detailed analysis not covered in the main text. Section~\ref{subsecapp:lambda_abl} provides the ablation results for the coverage coefficient $\lambda_{\mathrm{cov}}$ and the mixture coefficient $\lambda_{\mathrm{mix}}$. Section~\ref{subsecapp:target-size} analyzes the effect of target data size on inverse-dynamics error and policy performance. Section~\ref{subsecapp:deno-steps} analyzes the sensitivity to the denoising step $K$.

\subsection{Ablation Studies on $\lambda_{\mathrm{cov}}$ and $\lambda_{\mathrm{mix}}$}
\label{subsecapp:lambda_abl}

\paragraph{Coverage Coefficient $\lambda_{\mathrm{cov}}$.} 
The coverage coefficient $\lambda_{\mathrm{cov}}$ controls the trade-off between expanding state coverage and maintaining proximity to the target distribution.
Fig.~\ref{fig:app_lambda_cov} illustrates the performance trends across different values of $\lambda_{\mathrm{cov}}$ under morphology, kinematic, and gravity shifts on the Ant task (\texttt{medium-replay-to-medium-expert}). As shown, the performance remains relatively stable across a wide range, generally outperforming the simple augmentation baseline. However, we observe a slight performance degradation as $\lambda_{\mathrm{cov}}$ increases to $0.5$. This aligns with our analysis in Section~\ref{sec:exp} that overly aggressive coverage expansion introduces transitions with higher approximation errors. 
Therefore, we suggest prioritizing sample fidelity by selecting $\lambda_{\mathrm{cov}}$ from a conservative range. Larger values are best reserved for scenarios where the target dataset is extremely sparse, necessitating broader exploration even at the cost of slight dynamics mismatch.

\begin{figure}[h]
    \centering
    \includegraphics[width=0.9\textwidth]{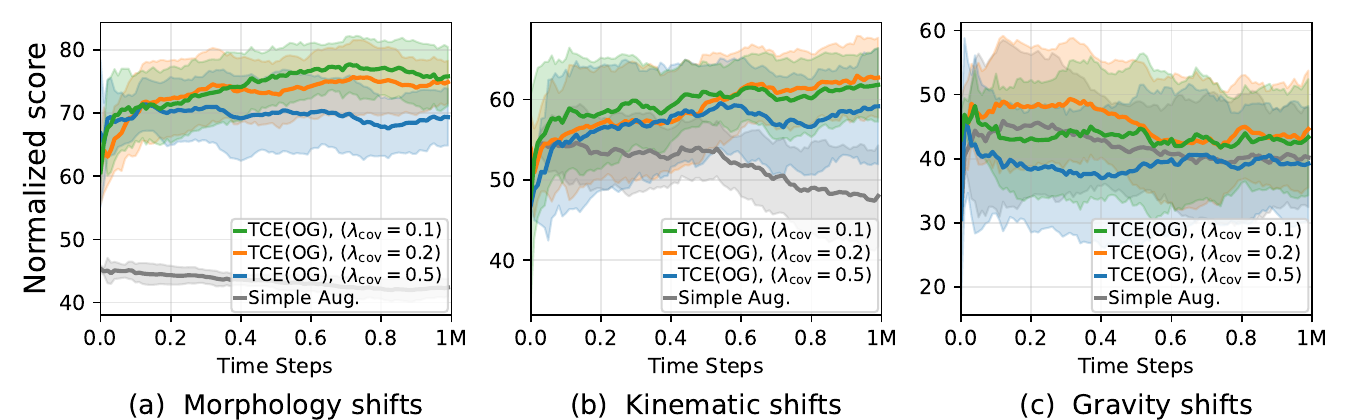}
    \caption{Performance sensitivity to $\lambda_{\mathrm{cov}}$ across different domain shifts on the Ant \texttt{medium-replay-to-medium-expert} task.}
    \label{fig:app_lambda_cov}
\end{figure}

\paragraph{Mixture Coefficient $\lambda_{\mathrm{mix}}$.} 
The mixture coefficient $\lambda_{\mathrm{mix}}$ regulates the ratio of source transitions directly incorporated into the training set, balancing data abundance against distribution shift. Fig.~\ref{fig:app_lambda_mix} illustrates the impact of $\lambda_{\mathrm{mix}}$ across different domain shifts. We observe that lower values ($0$ or $0.1$) generally yield stable performance. Notably, in scenarios like gravity shifts, a mild mixture TCE(SM) ($\lambda_{\mathrm{mix}}=0.1$) outperforms TCE(OG) ($\lambda_{\mathrm{mix}}=0$), while further increasing $\lambda_{\mathrm{mix}}$ leads to a sharp performance drop. Based on these results, we conclude that setting $\lambda_{\mathrm{mix}}=0$ provides the most robust baseline in the presence of distinct structural disparities. However, a small positive coefficient can be advantageous when source and target dynamics are closely aligned or when the target dataset quality is low. In the latter case, high generation errors may compromise the reliability of synthesized transitions, making the inclusion of real source data a beneficial strategy to stabilize learning. Conversely, excessive mixing biases the generative model toward source transitions, introducing significant dynamics mismatch that outweighs the benefits of data augmentation.

\begin{figure}[h]
    \centering
    \includegraphics[width=0.9\textwidth]{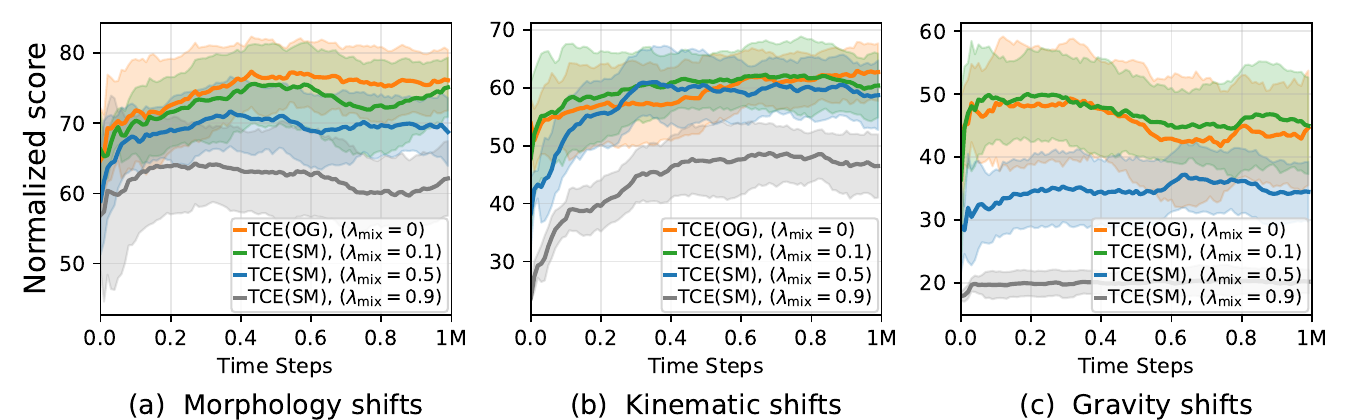}
    \caption{Performance sensitivity to $\lambda_{\mathrm{mix}}$ across different domain shifts on the Ant \texttt{medium-replay-to-medium-expert} task.}
    \label{fig:app_lambda_mix}
\end{figure}

\subsection{Effect of Target Data Size on Inverse Dynamics Error}
\label{subsecapp:target-size}

We further evaluate how reducing the target-data size affects the inverse dynamics model and downstream policy performance. Our main setting already uses only 5K target transitions, roughly five episodes, yet the error analysis in the main text shows that the inverse-dynamics prediction error remains small. To stress this regime further, we vary the target-data size from 2K to 20K transitions and report the resulting action prediction error and normalized score in Table~\ref{table:target_size}. As expected, reducing the target data to 2K increases the inverse-dynamics error and slightly lowers performance, but the method remains strong. As the target-data size increases, the error decreases and the performance improves accordingly.

\begin{table}[h]
\centering
\caption{Action prediction error and normalized score under different target-data sizes.}
\begin{tabular}{l|cccc}
\toprule
Metric & 2K & 5K (ours) & 10K & 20K \\
\midrule
Action error & 0.142$\pm$0.018 & 0.065$\pm$0.012 & 0.042$\pm$0.009 & \textbf{0.015$\pm$0.006} \\
Score & 90.3$\pm$4.2 & 94.0$\pm$4.5 & 94.4$\pm$3.8 & \textbf{95.0$\pm$3.1} \\
\bottomrule
\end{tabular}
\label{table:target_size}
\end{table}

\clearpage
\newpage

\subsection{Ablation Study on Denoising Step $K$}
\label{subsecapp:deno-steps}

We examine the sensitivity of TCE to the denoising step hyperparameter $K$, which governs the trade-off between sampling computational cost and generation quality by determining the discretization granularity of the reverse process. Table~\ref{table:perform_ant_k} presents the results of TCE(OG) on the Ant morphology-shift environment across $K \in \{100, 200, 500\}$, using $K=500$ as the default configuration. While $K=500$ generally yields the highest average scores, the performance differences across varying $K$ values are minimal and mostly fall within the standard deviation ranges. This result demonstrates that TCE is not highly sensitive to the choice of $K$, ensuring robust performance without the need for precise hyperparameter tuning.

\begin{table*}[ht]
\centering
\caption{Performance of TCE(OG) on Ant morphology-shift tasks at different $K$ values. }
\begin{tabular}{ll|ccc}
\toprule
Src. & Tgt. & K=100 & K=200 & K=500 \\
\midrule
ant-m   & m     & \textbf{42.2$\pm$1.8} & \underline{42.0$\pm$0.28} & 41.5$\pm$0.8 \\
ant-m & m-e   & 72.5$\pm$2.3 & \textbf{74.1$\pm$0.50} & \underline{72.7$\pm$2.5} \\
ant-m & e     & 91.9$\pm$0.1 & \underline{92.8$\pm$2.8} & \textbf{94.0$\pm$4.5} \\
ant-m-r & m     & 40.7$\pm$1.2 & \underline{40.8$\pm$0.71} & \textbf{41.5$\pm$1.5} \\
ant-m-r & m-e   & \underline{70.0$\pm$2.8} & 69.2$\pm$3.1 & \textbf{71.0$\pm$8.4} \\
ant-m-r & e     & 90.8$\pm$1.6 & \underline{91.3$\pm$3.8} & \textbf{96.1$\pm$0.3} \\
ant-m-e & m     & \underline{41.4$\pm$0.9} & 41.2$\pm$0.6& \textbf{41.9$\pm$0.6} \\
ant-m-e & m-e   & \textbf{66.5$\pm$3.1} & 66.1$\pm$1.6& \underline{66.4$\pm$1.2} \\
ant-m-e & e     & 93.7$\pm$0.5 & \underline{94.0$\pm$1.1} & \textbf{96.6$\pm$0.5} \\
\midrule
\multicolumn{2}{c|}{Total Score} & 609.7 & \underline{611.5} & \textbf{621.7} \\
\bottomrule
\end{tabular}
\label{table:perform_ant_k}
\end{table*}


\section{Broader Impact}

Target-aligned Coverage Expansion (TCE) improves cross-domain offline reinforcement learning when target-domain data is scarce and additional online interaction is infeasible. In domains such as robotics and other safety- or cost-sensitive settings, this can reduce the burden of target-domain data collection while improving the practicality of offline adaptation.

At the same time, unreliable transfer under large source-target mismatch may still lead to incorrect policy behavior after deployment, especially if generated transitions are over-trusted. These risks should be mitigated through careful validation, conservative deployment, and continued robustness and safety evaluation.

\newpage
\section*{NeurIPS Paper Checklist}

\begin{enumerate}

\item {\bf Claims}
    \item[] Question: Do the main claims made in the abstract and introduction accurately reflect the paper's contributions and scope?
    \item[] Answer: \answerYes{} 
    \item[] Justification: \textit{The paper's main claims are stated in the abstract and introduction.}
    \item[] Guidelines:
    \begin{itemize}
        \item The answer \answerNA{} means that the abstract and introduction do not include the claims made in the paper.
        \item The abstract and/or introduction should clearly state the claims made, including the contributions made in the paper and important assumptions and limitations. A \answerNo{} or \answerNA{} answer to this question will not be perceived well by the reviewers. 
        \item The claims made should match theoretical and experimental results, and reflect how much the results can be expected to generalize to other settings. 
        \item It is fine to include aspirational goals as motivation as long as it is clear that these goals are not attained by the paper. 
    \end{itemize}

\item {\bf Limitations}
    \item[] Question: Does the paper discuss the limitations of the work performed by the authors?
    \item[] Answer: \answerYes{} 
    \item[] Justification: \textit{The main limitations of the approach are discussed in a dedicated section.}
    \item[] Guidelines:
    \begin{itemize}
        \item The answer \answerNA{} means that the paper has no limitation while the answer \answerNo{} means that the paper has limitations, but those are not discussed in the paper. 
        \item The authors are encouraged to create a separate ``Limitations'' section in their paper.
        \item The paper should point out any strong assumptions and how robust the results are to violations of these assumptions (e.g., independence assumptions, noiseless settings, model well-specification, asymptotic approximations only holding locally). The authors should reflect on how these assumptions might be violated in practice and what the implications would be.
        \item The authors should reflect on the scope of the claims made, e.g., if the approach was only tested on a few datasets or with a few runs. In general, empirical results often depend on implicit assumptions, which should be articulated.
        \item The authors should reflect on the factors that influence the performance of the approach. For example, a facial recognition algorithm may perform poorly when image resolution is low or images are taken in low lighting. Or a speech-to-text system might not be used reliably to provide closed captions for online lectures because it fails to handle technical jargon.
        \item The authors should discuss the computational efficiency of the proposed algorithms and how they scale with dataset size.
        \item If applicable, the authors should discuss possible limitations of their approach to address problems of privacy and fairness.
        \item While the authors might fear that complete honesty about limitations might be used by reviewers as grounds for rejection, a worse outcome might be that reviewers discover limitations that aren't acknowledged in the paper. The authors should use their best judgment and recognize that individual actions in favor of transparency play an important role in developing norms that preserve the integrity of the community. Reviewers will be specifically instructed to not penalize honesty concerning limitations.
    \end{itemize}

\item {\bf Theory assumptions and proofs}
    \item[] Question: For each theoretical result, does the paper provide the full set of assumptions and a complete (and correct) proof?
    \item[] Answer: \answerYes{} 
    \item[] Justification: \textit{For the theoretical results, we state the assumptions and provide complete proofs.}
    \item[] Guidelines:
    \begin{itemize}
        \item The answer \answerNA{} means that the paper does not include theoretical results. 
        \item All the theorems, formulas, and proofs in the paper should be numbered and cross-referenced.
        \item All assumptions should be clearly stated or referenced in the statement of any theorems.
        \item The proofs can either appear in the main paper or the supplemental material, but if they appear in the supplemental material, the authors are encouraged to provide a short proof sketch to provide intuition. 
        \item Inversely, any informal proof provided in the core of the paper should be complemented by formal proofs provided in appendix or supplemental material.
        \item Theorems and Lemmas that the proof relies upon should be properly referenced. 
    \end{itemize}

    \item {\bf Experimental result reproducibility}
    \item[] Question: Does the paper fully disclose all the information needed to reproduce the main experimental results of the paper to the extent that it affects the main claims and/or conclusions of the paper (regardless of whether the code and data are provided or not)?
    \item[] Answer: \answerYes{} 
    \item[] Justification: \textit{The paper includes the information needed to reproduce the experiments, and the code will be released.}
    \item[] Guidelines:
    \begin{itemize}
        \item The answer \answerNA{} means that the paper does not include experiments.
        \item If the paper includes experiments, a \answerNo{} answer to this question will not be perceived well by the reviewers: Making the paper reproducible is important, regardless of whether the code and data are provided or not.
        \item If the contribution is a dataset and\slash or model, the authors should describe the steps taken to make their results reproducible or verifiable. 
        \item Depending on the contribution, reproducibility can be accomplished in various ways. For example, if the contribution is a novel architecture, describing the architecture fully might suffice, or if the contribution is a specific model and empirical evaluation, it may be necessary to either make it possible for others to replicate the model with the same dataset, or provide access to the model. In general, releasing code and data is often one good way to accomplish this, but reproducibility can also be provided via detailed instructions for how to replicate the results, access to a hosted model (e.g., in the case of a large language model), releasing of a model checkpoint, or other means that are appropriate to the research performed.
        \item While NeurIPS does not require releasing code, the conference does require all submissions to provide some reasonable avenue for reproducibility, which may depend on the nature of the contribution. For example
        \begin{enumerate}
            \item If the contribution is primarily a new algorithm, the paper should make it clear how to reproduce that algorithm.
            \item If the contribution is primarily a new model architecture, the paper should describe the architecture clearly and fully.
            \item If the contribution is a new model (e.g., a large language model), then there should either be a way to access this model for reproducing the results or a way to reproduce the model (e.g., with an open-source dataset or instructions for how to construct the dataset).
            \item We recognize that reproducibility may be tricky in some cases, in which case authors are welcome to describe the particular way they provide for reproducibility. In the case of closed-source models, it may be that access to the model is limited in some way (e.g., to registered users), but it should be possible for other researchers to have some path to reproducing or verifying the results.
        \end{enumerate}
    \end{itemize}

\item {\bf Open access to data and code}
    \item[] Question: Does the paper provide open access to the data and code, with sufficient instructions to faithfully reproduce the main experimental results, as described in supplemental material?
    \item[] Answer: \answerYes{} 
    \item[] Justification: \textit{Our code and data will be released together with instructions to reproduce the main results.}
    \item[] Guidelines:
    \begin{itemize}
        \item The answer \answerNA{} means that paper does not include experiments requiring code.
        \item Please see the NeurIPS code and data submission guidelines (\url{https://neurips.cc/public/guides/CodeSubmissionPolicy}) for more details.
        \item While we encourage the release of code and data, we understand that this might not be possible, so \answerNo{} is an acceptable answer. Papers cannot be rejected simply for not including code, unless this is central to the contribution (e.g., for a new open-source benchmark).
        \item The instructions should contain the exact command and environment needed to run to reproduce the results. See the NeurIPS code and data submission guidelines (\url{https://neurips.cc/public/guides/CodeSubmissionPolicy}) for more details.
        \item The authors should provide instructions on data access and preparation, including how to access the raw data, preprocessed data, intermediate data, and generated data, etc.
        \item The authors should provide scripts to reproduce all experimental results for the new proposed method and baselines. If only a subset of experiments are reproducible, they should state which ones are omitted from the script and why.
        \item At submission time, to preserve anonymity, the authors should release anonymized versions (if applicable).
        \item Providing as much information as possible in supplemental material (appended to the paper) is recommended, but including URLs to data and code is permitted.
    \end{itemize}

\item {\bf Experimental setting/details}
    \item[] Question: Does the paper specify all the training and test details (e.g., data splits, hyperparameters, how they were chosen, type of optimizer) necessary to understand the results?
    \item[] Answer: \answerYes{} 
    \item[] Justification: \textit{Experimental details are specified in the main paper and appendix.}
    \item[] Guidelines:
    \begin{itemize}
        \item The answer \answerNA{} means that the paper does not include experiments.
        \item The experimental setting should be presented in the core of the paper to a level of detail that is necessary to appreciate the results and make sense of them.
        \item The full details can be provided either with the code, in appendix, or as supplemental material.
    \end{itemize}

\item {\bf Experiment statistical significance}
    \item[] Question: Does the paper report error bars suitably and correctly defined or other appropriate information about the statistical significance of the experiments?
    \item[] Answer: \answerYes{} 
    \item[] Justification: \textit{For the main results, we report standard errors over multiple runs and describe how they are computed.}
    \item[] Guidelines:
    \begin{itemize}
        \item The answer \answerNA{} means that the paper does not include experiments.
        \item The authors should answer \answerYes{} if the results are accompanied by error bars, confidence intervals, or statistical significance tests, at least for the experiments that support the main claims of the paper.
        \item The factors of variability that the error bars are capturing should be clearly stated (for example, train/test split, initialization, random drawing of some parameter, or overall run with given experimental conditions).
        \item The method for calculating the error bars should be explained (closed form formula, call to a library function, bootstrap, etc.)
        \item The assumptions made should be given (e.g., Normally distributed errors).
        \item It should be clear whether the error bar is the standard deviation or the standard error of the mean.
        \item It is OK to report 1-sigma error bars, but one should state it. The authors should preferably report a 2-sigma error bar than state that they have a 96\% CI, if the hypothesis of Normality of errors is not verified.
        \item For asymmetric distributions, the authors should be careful not to show in tables or figures symmetric error bars that would yield results that are out of range (e.g., negative error rates).
        \item If error bars are reported in tables or plots, the authors should explain in the text how they were calculated and reference the corresponding figures or tables in the text.
    \end{itemize}

\item {\bf Experiments compute resources}
    \item[] Question: For each experiment, does the paper provide sufficient information on the computer resources (type of compute workers, memory, time of execution) needed to reproduce the experiments?
    \item[] Answer: \answerYes{} 
    \item[] Justification: \textit{We specify the compute resources for all experiments, including hardware type and runtime, in the appendix.}
    \item[] Guidelines:
    \begin{itemize}
        \item The answer \answerNA{} means that the paper does not include experiments.
        \item The paper should indicate the type of compute workers CPU or GPU, internal cluster, or cloud provider, including relevant memory and storage.
        \item The paper should provide the amount of compute required for each of the individual experimental runs as well as estimate the total compute. 
        \item The paper should disclose whether the full research project required more compute than the experiments reported in the paper (e.g., preliminary or failed experiments that didn't make it into the paper). 
    \end{itemize}
    
\item {\bf Code of ethics}
    \item[] Question: Does the research conducted in the paper conform, in every respect, with the NeurIPS Code of Ethics \url{https://neurips.cc/public/EthicsGuidelines}?
    \item[] Answer: \answerYes{} 
    \item[] Justification: \textit{Our research fully conforms to the NeurIPS Code of Ethics.}
    \item[] Guidelines:
    \begin{itemize}
        \item The answer \answerNA{} means that the authors have not reviewed the NeurIPS Code of Ethics.
        \item If the authors answer \answerNo, they should explain the special circumstances that require a deviation from the Code of Ethics.
        \item The authors should make sure to preserve anonymity (e.g., if there is a special consideration due to laws or regulations in their jurisdiction).
    \end{itemize}

\item {\bf Broader impacts}
    \item[] Question: Does the paper discuss both potential positive societal impacts and negative societal impacts of the work performed?
    \item[] Answer: \answerYes{} 
    \item[] Justification: \textit{We discuss both positive and negative societal impacts in the Broader Impact section of the appendix.}
    \item[] Guidelines:
    \begin{itemize}
        \item The answer \answerNA{} means that there is no societal impact of the work performed.
        \item If the authors answer \answerNA{} or \answerNo, they should explain why their work has no societal impact or why the paper does not address societal impact.
        \item Examples of negative societal impacts include potential malicious or unintended uses (e.g., disinformation, generating fake profiles, surveillance), fairness considerations (e.g., deployment of technologies that could make decisions that unfairly impact specific groups), privacy considerations, and security considerations.
        \item The conference expects that many papers will be foundational research and not tied to particular applications, let alone deployments. However, if there is a direct path to any negative applications, the authors should point it out. For example, it is legitimate to point out that an improvement in the quality of generative models could be used to generate Deepfakes for disinformation. On the other hand, it is not needed to point out that a generic algorithm for optimizing neural networks could enable people to train models that generate Deepfakes faster.
        \item The authors should consider possible harms that could arise when the technology is being used as intended and functioning correctly, harms that could arise when the technology is being used as intended but gives incorrect results, and harms following from (intentional or unintentional) misuse of the technology.
        \item If there are negative societal impacts, the authors could also discuss possible mitigation strategies (e.g., gated release of models, providing defenses in addition to attacks, mechanisms for monitoring misuse, mechanisms to monitor how a system learns from feedback over time, improving the efficiency and accessibility of ML).
    \end{itemize}
    
\item {\bf Safeguards}
    \item[] Question: Does the paper describe safeguards that have been put in place for responsible release of data or models that have a high risk for misuse (e.g., pre-trained language models, image generators, or scraped datasets)?
    \item[] Answer: \answerNA{} 
    \item[] Justification: \textit{The study uses only open-source simulators and does not involve releasing data or models with elevated misuse risk.}
    \item[] Guidelines:
    \begin{itemize}
        \item The answer \answerNA{} means that the paper poses no such risks.
        \item Released models that have a high risk for misuse or dual-use should be released with necessary safeguards to allow for controlled use of the model, for example by requiring that users adhere to usage guidelines or restrictions to access the model or implementing safety filters. 
        \item Datasets that have been scraped from the Internet could pose safety risks. The authors should describe how they avoided releasing unsafe images.
        \item We recognize that providing effective safeguards is challenging, and many papers do not require this, but we encourage authors to take this into account and make a best faith effort.
    \end{itemize}

\item {\bf Licenses for existing assets}
    \item[] Question: Are the creators or original owners of assets (e.g., code, data, models), used in the paper, properly credited and are the license and terms of use explicitly mentioned and properly respected?
    \item[] Answer: \answerYes{} 
    \item[] Justification: \textit{All simulation environments used in the paper are open-source, and their original sources are properly credited.}
    \item[] Guidelines:
    \begin{itemize}
        \item The answer \answerNA{} means that the paper does not use existing assets.
        \item The authors should cite the original paper that produced the code package or dataset.
        \item The authors should state which version of the asset is used and, if possible, include a URL.
        \item The name of the license (e.g., CC-BY 4.0) should be included for each asset.
        \item For scraped data from a particular source (e.g., website), the copyright and terms of service of that source should be provided.
        \item If assets are released, the license, copyright information, and terms of use in the package should be provided. For popular datasets, \url{paperswithcode.com/datasets} has curated licenses for some datasets. Their licensing guide can help determine the license of a dataset.
        \item For existing datasets that are re-packaged, both the original license and the license of the derived asset (if it has changed) should be provided.
        \item If this information is not available online, the authors are encouraged to reach out to the asset's creators.
    \end{itemize}

\item {\bf New assets}
    \item[] Question: Are new assets introduced in the paper well documented and is the documentation provided alongside the assets?
    \item[] Answer: \answerYes{} 
    \item[] Justification: \textit{A source-code package and usage guide accompany the paper, and the code will be publicly released to facilitate reproducibility and benchmarking.}
    \item[] Guidelines:
    \begin{itemize}
        \item The answer \answerNA{} means that the paper does not release new assets.
        \item Researchers should communicate the details of the dataset\slash code\slash model as part of their submissions via structured templates. This includes details about training, license, limitations, etc. 
        \item The paper should discuss whether and how consent was obtained from people whose asset is used.
        \item At submission time, remember to anonymize your assets (if applicable). You can either create an anonymized URL or include an anonymized zip file.
    \end{itemize}

\item {\bf Crowdsourcing and research with human subjects}
    \item[] Question: For crowdsourcing experiments and research with human subjects, does the paper include the full text of instructions given to participants and screenshots, if applicable, as well as details about compensation (if any)? 
    \item[] Answer: \answerNA{} 
    \item[] Justification: \textit{The work does not involve crowdsourcing or any research with human subjects.}
    \item[] Guidelines:
    \begin{itemize}
        \item The answer \answerNA{} means that the paper does not involve crowdsourcing nor research with human subjects.
        \item Including this information in the supplemental material is fine, but if the main contribution of the paper involves human subjects, then as much detail as possible should be included in the main paper. 
        \item According to the NeurIPS Code of Ethics, workers involved in data collection, curation, or other labor should be paid at least the minimum wage in the country of the data collector. 
    \end{itemize}

\item {\bf Institutional review board (IRB) approvals or equivalent for research with human subjects}
    \item[] Question: Does the paper describe potential risks incurred by study participants, whether such risks were disclosed to the subjects, and whether Institutional Review Board (IRB) approvals (or an equivalent approval/review based on the requirements of your country or institution) were obtained?
    \item[] Answer: \answerNA{} 
    \item[] Justification: \textit{Because the study does not involve human subjects, IRB approval is not applicable.}
    \item[] Guidelines:
    \begin{itemize}
        \item The answer \answerNA{} means that the paper does not involve crowdsourcing nor research with human subjects.
        \item Depending on the country in which research is conducted, IRB approval (or equivalent) may be required for any human subjects research. If you obtained IRB approval, you should clearly state this in the paper. 
        \item We recognize that the procedures for this may vary significantly between institutions and locations, and we expect authors to adhere to the NeurIPS Code of Ethics and the guidelines for their institution. 
        \item For initial submissions, do not include any information that would break anonymity (if applicable), such as the institution conducting the review.
    \end{itemize}

\item {\bf Declaration of LLM usage}
    \item[] Question: Does the paper describe the usage of LLMs if it is an important, original, or non-standard component of the core methods in this research? Note that if the LLM is used only for writing, editing, or formatting purposes and does \emph{not} impact the core methodology, scientific rigor, or originality of the research, declaration is not required.
    \item[] Answer: \answerNA{} 
    \item[] Justification: \textit{No large language models were used as important, original, or non-standard components in the core methods of this research.}
    \item[] Guidelines:
    \begin{itemize}
        \item The answer \answerNA{} means that the core method development in this research does not involve LLMs as any important, original, or non-standard components.
        \item Please refer to our LLM policy in the NeurIPS handbook for what should or should not be described.
    \end{itemize}

\end{enumerate}

\end{document}